\documentclass{article}

\usepackage[preprint]{neurips_2026}

\usepackage[utf8]{inputenc} 
\usepackage[T1]{fontenc}    
\usepackage{hyperref}       
\usepackage{url}            
\usepackage{booktabs}       
\usepackage{amsfonts}       
\usepackage{nicefrac}       
\usepackage{microtype}      
\usepackage{xcolor}         

\usepackage{url}            
\usepackage{booktabs}       
\usepackage{amsfonts}       
\usepackage{nicefrac}       
\usepackage{microtype}      

\usepackage{algorithm}

\usepackage{amssymb}		
\usepackage{graphicx}		
\usepackage{amsmath}
\usepackage{amsthm}
\usepackage{algorithmic}
\usepackage{comment}
\usepackage{multirow}
\usepackage{enumitem}
\usepackage{subcaption}
\usepackage{tikz}
\usepackage[dvipsnames,table,xcdraw]{xcolor}

\usepackage{mathrsfs}

\usepackage{amsmath}  
\usepackage{xspace}   

\usepackage{pifont}
\usepackage{float}

\usepackage{amsmath,amssymb,mathtools}

\newcommand{\KL}{\mathrm{KL}}         

\usepackage[nameinlink]{cleveref}  

\newtheorem{theorem}{Theorem}

\newtheorem{lemma}{Lemma}

\newtheorem{definition}{Definition}

\newtheorem{corollary}{Corollary}

\crefname{section}{Section}{Sections}
\Crefname{section}{Section}{Sections}

\crefname{subsection}{Section}{Sections}
\Crefname{subsection}{Section}{Sections}

\newcommand{\TV}{\mathrm{TV}}

\newcommand{\Reach}{\operatorname{Reach}}

\newcommand{\one}{\mathbb{1}}

\title{The Role of Generator Access in Autoregressive Post-Training}

\author{%
  Amit Kiran Rege \\
  Department of Computer Science\\
  University of Colorado Boulder\\
  Boulder, Colorado 80309 \\
  \texttt{amit.rege@colorado.edu} \\
}

\begin{document}

\maketitle

\begin{abstract}
We study how generator access constrains autoregressive post-training. The central question is whether the learner is confined to fresh root-start rollouts or can return to previously built prefixes and query the next-token rule there. In the root-start regime, output sampling, generated-token log probabilities, top-$k$ reports, and full next-token distributions along sampled trajectories all reduce to one canonical experiment, limited by the on-policy probability of reaching informative prefixes. Weak prefix control breaks this barrier, and once control is available, richer observations such as conditional sampling or logits can outperform top-$1$ access. Changing only the generator interface creates an exponential gap for KL-regularized outcome-reward post-training.
\end{abstract}

\section{Introduction}
\label{sec:intro}

Large language models have transformed modern AI, and recent reasoning-oriented systems push that progress further through post-training, reinforcement learning, and inference-time search \cite{OpenAIo1SystemCard2024,DeepSeekR1Guo2025,HuangEtAl2025BestOfN}. At the same time, a recurring empirical theme is that standard post-training often sharpens trajectories that are already present in the base model, rather than uncovering genuinely new ones \cite{YueEtAl2025RLVR,KaranDu2025Sampling,TanEtAl2026LED}. That raises a basic question which is easy to state but has not been cleanly separated in existing theory. When post-training runs into a barrier, how much of that barrier comes from the base distribution itself, and how much comes from the way the learner is allowed to interrogate the generator?

A simple thought experiment already shows why this matters. Imagine a reasoning task whose successful solution requires a long chain of correct intermediate decisions. Along the one correct chain, the base model gives the right next token only a mild advantage over its competitors. After a single wrong turn, however, the distribution becomes uninformative. In that situation, ordinary sampling has to survive the entire chain on policy. Generated-token log probabilities, top-$k$ reports, or even full next-token distributions on visited prefixes do not change that basic geometry: they still reveal information only along whatever trajectory the model happened to generate. By contrast, if the learner can return to a partially constructed prefix and inspect the next-token rule there, then it can test the $t$th step directly instead of re-earning the first $t-1$ steps every time. This gap is about the interface to the generator and not about a more sophisticated optimizer.

This paper studies that interface through two notions. Prefix control asks which prefixes the learner may place the generator at before making a query. Prefix observation asks what the learner sees once that prefix has been fixed. The first distinction is about where the learner may look; the second is about what the learner sees there. Our main claim is that, for autoregressive post-training, these two distinctions do not sit on equal footing. Prefix control is the first boundary. Observation richness becomes the second boundary only after control has been granted.

\begin{table}[t]
\centering
\begin{tabular}{p{0.24\linewidth}p{0.31\linewidth}p{0.31\linewidth}}
\toprule
 & Limited observation & Rich observation \\
\midrule
No prefix control &
Output-only sampling &
Generated-token log probabilities, top-log-probability lists, or full next-token distributions on visited prefixes \\
Prefix control &
Top token at a chosen prefix &
Chosen-prefix sampling, chosen-prefix logits, or teacher-forced sequence scores \\
\bottomrule
\end{tabular}
\caption{The taxonomy studied in this paper. The main point is that the horizontal distinction comes first. Once the learner is confined to root-start rollouts, richer trajectory-local observations do not overcome the need to reach informative prefixes on policy.}
\label{tab:taxonomy}
\end{table}

The paper develops this point in four steps. First, we characterize the no-reset regime, where each generator query begins with a fresh rollout from the root and may depend only on the sampled trajectory and the next-token distributions encountered along that trajectory. In this regime there is a single canonical experiment. Output-only sampling, generated-token log probabilities, top-$k$ reports on sampled tokens, and even full next-token distributions on sampled prefixes are all just randomized post-processings of the same object. Second, once the learner is allowed even weak local reset, meaning that it may revisit previously queried prefixes and extend them one token at a time, the picture changes sharply: the universal no-reset barrier disappears on a hidden-path family. Third, after control has been supplied, observation richness becomes meaningful again. On a branching family, top-$1$ access can be much weaker than chosen-prefix sampling or logits. Fourth, this generator-side distinction can be lifted into the standard post-training objective. Holding the reward model, the KL regularizer, and the policy class fixed, changing only generator access can create an exponential gap in the number of generator queries required for outcome-reward post-training.

This perspective complements two lines of recent work. One line fixes the access model and shows that coverage of good responses under the base model governs the difficulty of sampling-style alignment and inference-time selection \cite{FosterMhammediRohatgi2025,HuangEtAl2025BestOfN,ChenEtAl2025Coverage,HuangEtAl2024Sharpening}. Another line fixes generator access and studies the feedback channel, showing for example that process feedback can avoid outcome-reward barriers \cite{MousaviHosseiniErdogdu2026}. Our contribution is to isolate the remaining axis. We ask what changes when the reward and update class are held fixed and only the generator-side interface is allowed to vary.

The rest of the paper is organized as follows. Section~\ref{sec:setup} formalizes prompt-conditioned autoregressive generators, the prefix-tree view, and the access models studied here. Section~\ref{sec:no-reset} proves the collapse of the no-reset regime and the resulting reachability barrier. Later sections show how weak local reset breaks that barrier, why branching makes observation matter again once control is available, and how the same boundary reappears in KL-regularized post-training.

\section{Setup}
\label{sec:setup}

\subsection{Prompt-conditioned autoregressive generators}

Let $\mathcal X$ be a prompt space with prompt distribution $\rho$. Fix a finite vocabulary $\Sigma=\{1,2,\dots,K\}$ with $K\ge 2$ and a horizon $H\ge 1$. For $t\ge 0$, let $\Sigma^t$ denote the set of strings of length $t$, write $\Sigma^{<H}:=\bigcup_{t=0}^{H-1}\Sigma^t$ and $\Sigma^{\le H}:=\bigcup_{t=0}^{H}\Sigma^t$, and let $\varnothing$ denote the unique element of $\Sigma^0$.

A prompt-conditioned autoregressive generator is a family of next-token distributions
\[
M(\cdot\mid x,p)\in \Delta(\Sigma), \qquad x\in\mathcal X,\; p\in\Sigma^{<H}.
\]
Given a prompt $x$, a response $Y=(Y_1,\dots,Y_H)\in\Sigma^H$ is generated sequentially by drawing $Y_t\sim M(\cdot\mid x,Y_{<t})$ for $t=1,\dots,H$. Equivalently,
\[
\Pr_M(Y=y\mid x)=\prod_{t=1}^{H} M(y_t\mid x,y_{<t}) \qquad \text{for every } y\in\Sigma^H.
\]

The structural results hold at a fixed prompt; the prompt distribution enters only in the last section, where we study a prompt-averaged post-training objective. The single-hard-prompt case corresponds to $\rho$ being a point mass.

\subsection{Prefix trees and generator access}

For a fixed prompt $x$, autoregressive generation can be viewed as a depth-$H$ tree whose internal nodes are prefixes in $\Sigma^{<H}$. From a prefix $p$, each token $a\in\Sigma$ leads to the child $pa$, and the generator attaches to $p$ the next-token distribution $M(\cdot\mid x,p)$. In this picture, a complete response is a root-to-leaf path.

This viewpoint makes the access distinctions concrete. A learner might only be allowed to start at the root, sample a rollout, and inspect whatever prefixes that rollout happened to visit. Or it might be allowed to return to a previously constructed prefix and ask for the next-token rule there. The first kind of interaction is the analogue of ordinary episodic access in reinforcement learning; the second is the analogue of reset-like access \cite{MhammediFosterRakhlin2024,RohatgiFoster2025}. For autoregressive models, the natural state space is the prefix tree itself.

We separate generator access into two parts. Prefix control asks which prefixes the learner may interrogate. Prefix observation asks what the learner receives once the relevant prefix has been fixed. A standard API that returns output tokens together with generated-token log probabilities has richer observation than plain sampling, but no extra control: both are tied to a single root-start rollout. A chosen-prefix sampling oracle and a chosen-prefix logit oracle have the same control but different observations. Teacher-forced sequence scoring is score-like, but it lies on the control-rich side because the learner is allowed to choose the entire completion whose probability is evaluated.

The weakest regime studied in the paper is no-reset access. Informally, one query chooses a parameter $q$, draws a fresh root-start rollout $Y\sim M(\cdot\mid x)$, forms the visited-prefix distributions $\mu_t:=M(\cdot\mid x,Y_{<t})$, and returns a reply that may depend on $q$, on the sampled trajectory $Y$, and on the visited-prefix distributions $(\mu_1,\dots,\mu_H)$, but not on any unvisited prefix. This covers output-only sampling, generated-token log probabilities, sampled-token top-$k$ reports, and even full next-token distributions on sampled prefixes.

The stronger regime used later is weak local reset. A sequence of queried prefixes $p_1,\dots,p_q\in\Sigma^{<H}$ obeys the local-reset discipline if $p_1=\varnothing$ and, for every round $r\ge 2$, the new query $p_r$ is either a previously queried prefix or a one-token extension of a previously queried prefix. The learner is not allowed to teleport to an unrelated deep node, but it is allowed to build a prefix one token at a time and to revisit prefixes it has already constructed.

Once local reset is available, we consider three concrete chosen-prefix interfaces. Under $\mathsf{PrefixSample}$, a query prefix $p$ returns an independent draw $A\sim M(\cdot\mid x,p)$. Under $\mathsf{PrefixTop}$, a query prefix $p$ returns the unique most likely next token under $M(\cdot\mid x,p)$ when that maximizer is unique, and returns a distinguished symbol $\bot$ otherwise. Under $\mathsf{PrefixLogit}_{\xi}$, a query prefix $p$ returns a vector $\widetilde \ell_p\in\mathbb R^\Sigma$ satisfying $\|\widetilde \ell_p-\log M(\cdot\mid x,p)\|_\infty\le \xi$. We write $\mathsf{PrefixLogit}$ for the exact case $\xi=0$.

We also include teacher-forced sequence scoring. Under $\mathsf{SeqScore}_\xi$, a query completion $y\in\Sigma^H$ returns a number $\widetilde s(y)$ satisfying
\[
\bigl|\widetilde s(y)-\log \Pr_M(Y=y\mid x)\bigr|\le \xi.
\]
Operationally, this is just teacher forcing followed by reading off the sequence log-likelihood. It is not a literal chosen-prefix oracle, but it still lies on the control-rich side of the taxonomy because the learner is allowed to evaluate a counterfactual completion of its choice rather than waiting for that completion to arise on policy.

These interfaces are close to what current systems expose. Generated-token log probabilities and top-log-probability lists appear in widely used APIs, while open-weight causal language models expose next-token logits and sequence scores through ordinary forward passes \cite{OpenAIResponsesAPI2026,OpenAIChatCompletionsAPI2026,HuggingFaceGenerationDocs2026,HuggingFaceGPT2Docs2026,HuggingFacePerplexityDocs2026}. The point of the taxonomy is not to invent exotic oracles. It is to organize access patterns that already appear in practice.

In practical terms, weak local reset corresponds to a very modest form of counterfactual control. The learner does not get to jump to an arbitrary deep prefix. It can only keep a partial continuation, revisit it, and extend it one token at a time. This is the prefix-tree analogue of stepping back to a previously constructed partial solution and resuming from there, rather than resampling the entire trajectory from the beginning.

\subsection{KL-regularized outcome-reward post-training}

The last part of the paper returns from access models to a standard post-training objective. Let $r:\mathcal X\times\Sigma^H\to\mathbb R$ be an outcome reward, let $M(\cdot\mid x)$ be the base generator, and let $\pi(\cdot\mid x)$ be a prompt-conditioned policy. For $\beta>0$, define
\[
J_\beta(\pi):=
\mathbb E_{x\sim \rho}\left[\mathbb E_{y\sim \pi(\cdot\mid x)} r(x,y)-\beta\,\KL\!\left(\pi(\cdot\mid x)\,\|\,M(\cdot\mid x)\right)\right].
\]
The final theorem keeps this reward model, this KL-regularized objective, and the policy class fixed, and varies only the generator-side interface. The resulting separation shows that the access taxonomy is not merely a statement about synthetic search problems. It already changes the complexity of a standard post-training objective.

\section{The no-reset regime}
\label{sec:no-reset}

The no-reset regime is the root-start side of the taxonomy. Each query begins with a fresh rollout from the root, and whatever the learner sees is constrained to the prefixes visited by that rollout. At first sight, this leaves many different interfaces: the sampled completion, generated-token log probabilities, a top-$k$ list, or even the full next-token distributions on sampled prefixes. Statistically, however, these are all equivalent.

Throughout this section we fix a prompt $x\in\mathcal X$. Prompt-distributed versions follow by averaging over $x\sim\rho$.

\begin{definition}
\label{def:no-reset}
A generator experiment at prompt $x$ is called no-reset if one query is answered as follows. The learner chooses a query parameter $q$. The oracle then draws a fresh rollout $Y=(Y_1,\dots,Y_H)\sim M(\cdot\mid x)$, forms the visited-prefix distributions $\mu_t:=M(\cdot\mid x,Y_{<t})$ for $t=1,\dots,H$, and returns a reply drawn from a kernel that may depend on $q$, on $Y$, and on $(\mu_1,\dots,\mu_H)$, but not on any unvisited prefix.
\end{definition}

Definition~\ref{def:no-reset} is deliberately broad. It allows arbitrary measurable summaries of the sampled trajectory and of the next-token distributions encountered along that trajectory, including arbitrary internal randomization. What it forbids is counterfactual inspection of prefixes that the rollout did not visit.

The canonical experiment in this regime simply returns everything that a no-reset interface is ever allowed to depend on.

\begin{definition}
\label{def:pathfull}
For a fixed prompt $x$, let $\mathsf{PathFull}_x$ denote the experiment that returns
\[
W_x:=(Y,\mu_1,\dots,\mu_H),
\]
where $Y\sim M(\cdot\mid x)$ and $\mu_t=M(\cdot\mid x,Y_{<t})$ for $t=1,\dots,H$.
\end{definition}

\begin{theorem}
\label{thm:no-reset-collapse}
Fix a prompt $x$. An experiment is no-reset if and only if it is a randomized post-processing of $\mathsf{PathFull}_x$. In the usual comparison-of-experiments sense, $\mathsf{PathFull}_x$ is therefore the maximal no-reset experiment.
\end{theorem}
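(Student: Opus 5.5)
The proof will be close to definitional unwinding. The work is in making precise what ``a kernel that may depend on $q$, $Y$, and $(\mu_1,\dots,\mu_H)$ but not on any unvisited prefix'' means, and then checking that this is exactly the class of Markov kernels applied to $W_x$. I will formalize a no-reset experiment with query parameter $q$ as a pair: a generator $M$ together with a Markov kernel $\kappa_q(\cdot\mid y,\mu_1,\dots,\mu_H)$ from $\Sigma^H\times\Delta(\Sigma)^H$ to the reply space. Here $\kappa_q$ is fixed independently of $M$. The constraint ``not on any unvisited prefix'' is encoded by requiring that the reply law be obtained by plugging in the realized $(Y,\mu_{1:H})$ and nothing else about $M$. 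The reply law under generator $M$ is then
\[
P^{M}_q(B)=\mathbb E_{Y\sim M(\cdot\mid x)}\bigl[\kappa_q\bigl(B\mid Y,\mu_1(Y),\dots,\mu_H(Y)\bigr)\bigr],
\]
where $\mu_t(Y)=M(\cdot\mid x,Y_{<t})$. Measurability is harmless: $\Sigma^H$ is finite and $\Delta(\Sigma)$ is a simplex with its Borel structure.

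For the forward direction (no-reset implies post-processing), I take an arbitrary no-reset experiment and define the post-processing kernel $K_q(\cdot\mid w):=\kappa_q(\cdot\mid w)$ on the outcome space of $\mathsf{PathFull}_x$. Since $W_x=(Y,\mu_1,\dots,\mu_H)$ has exactly the law of the pair fed into $\kappa_q$, composing $\mathsf{PathFull}_x$ with $K_q$ reproduces $P^M_q$ for every generator $M$ simultaneously. This uniformity is the point of the comparison-of-experiments statement: the kernel must not depend on the unknown $M$, and it does not, because $\kappa_q$ was fixed in advance.

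For the converse, given any Markov kernel $K_q$ from the $W_x$-space to a reply space, the experiment ``draw $Y$, form $\mu_{1:H}$, reply via $K_q(\cdot\mid Y,\mu_{1:H})$'' satisfies Definition~\ref{def:no-reset} verbatim. Taking $K_q$ to be the identity shows that $\mathsf{PathFull}_x$ is itself no-reset. Combined with the forward direction, every no-reset experiment is then Blackwell-dominated by $\mathsf{PathFull}_x$, which is the maximality claim. For adaptive multi-query learners, I will add a remark that each round is answered by an independent fresh rollout. Hence the whole transcript is a sequential post-processing of i.i.d.\ copies of $W_x$, with the query parameters chosen from the past transcript, and induction over rounds gives the same conclusion.

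The main obstacle is interpretive rather than technical. I must pin down that ``depends on unvisited prefixes'' is ruled out uniformly over all generators $M$, and not merely pointwise for one fixed $M$. Pointwise for a fixed $M$, any reply could be rewritten as a function of $Y$ alone, and the theorem would become vacuous. I will therefore state explicitly that the kernel is a single map applied across the model class. Finally, I will illustrate the result with the concrete interfaces: output-only sampling takes $K(y,\mu)=y$, generated-token log probabilities take $(y,(\log\mu_t(y_t))_t)$, and top-$k$ reports take $(y,\mathrm{top}_k(\mu_t))_t$. Ties in top-$k$ can be broken with internal randomness, which the kernel allows.
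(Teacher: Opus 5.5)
Your proposal is correct and matches the paper's proof, which is the same definitional unwinding: the no-reset kernel $\Gamma_x$ is used as the post-processing kernel $\Pi$, and conversely. Two points you state explicitly are left implicit in the paper without changing the argument: that the kernel must be a single map fixed uniformly over $M$, and that the identity post-processing shows $\mathsf{PathFull}_x$ is itself no-reset.
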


Once the learner is confined to one root-start rollout per query, there is no deeper hierarchy among trajectory-local interfaces. Output-only sampling, generated-token log probabilities, sampled-token top-$k$ reports, and full next-token distributions on sampled prefixes all sit inside the same experiment class; richer replies simply retain more information from the same trajectory.

This means that a lower bound on $\mathsf{PathFull}_x$ applies simultaneously to every trajectory-local interface. The key quantity governing that bound is on-policy reachability.

\begin{definition}
\label{def:reachability}
For a prefix set $U\subseteq\Sigma^{<H}$, define the reachability event
\[
E_U(x):=\bigl\{\exists\, t\in\{1,\dots,H\}\;:\; Y_{<t}\in U\bigr\}, \qquad Y\sim M(\cdot\mid x),
\]
and the corresponding prefix reachability
\[
\Reach_M(x,U):=\Pr\bigl(E_U(x)\bigr).
\]
\end{definition}

The interpretation is immediate: $\Reach_M(x,U)$ is the probability that an on-policy rollout ever visits the informative region $U$.

Now suppose two generators $M(\cdot\mid x)$ and $M'(\cdot\mid x)$ agree outside a prefix set $U$, meaning that $M(\cdot\mid x,p)=M'(\cdot\mid x,p)$ for every $p\notin U$. If a rollout never enters $U$, then the two models generate exactly the same transitions along that rollout, and the canonical no-reset reply is identical in the two worlds. The only way to tell the models apart is to reach $U$ on policy.

\begin{theorem}
\label{thm:reachability-barrier}
Fix a prompt $x$ and a prefix set $U\subseteq\Sigma^{<H}$. Let $M(\cdot\mid x)$ and $M'(\cdot\mid x)$ agree outside $U$. Let $A$ be any randomized adaptive algorithm that makes at most $q$ queries to any no-reset experiment at prompt $x$, and let $T_q^A(M,x)$ denote its full transcript under $M$. Then
\[
\TV\!\Bigl(\mathcal L\bigl(T_q^A(M,x)\bigr),\mathcal L\bigl(T_q^A(M',x)\bigr)\Bigr)\le q\,\Reach_M(x,U).
\]
The same reachability value appears for both models because agreement outside $U$ implies $\Reach_M(x,U)=\Reach_{M'}(x,U)$.
\end{theorem}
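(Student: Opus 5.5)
By Theorem~\ref{thm:no-reset-collapse}, every no-reset experiment is a randomized post-processing of $\mathsf{PathFull}_x$. An adaptive algorithm querying such an experiment can therefore be simulated by one that queries $\mathsf{PathFull}_x$ and applies the post-processing kernel itself. Post-processing the transcript cannot increase total variation, so it suffices to prove the bound when every query is answered by $\mathsf{PathFull}_x$. The query parameter $q$ then plays no role in the reply distribution, and the adaptive choices enter only through the algorithm's internal kernels. These kernels are identical in the two worlds.

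The core step is a coupling of one $\mathsf{PathFull}_x$ draw under $M$ and under $M'$. I would generate $Y$ and $Y'$ token by token with shared randomness: at prefix $p$, if $p\notin U$ the two next-token laws coincide and I draw the same token. The two rollouts then stay identical up to the first time $t$ at which $Y_{<t}\in U$. If that never happens, then $Y=Y'$ and every visited prefix lies outside $U$, so $\mu_t=\mu'_t$ for all $t$ and $W_x=W'_x$. Hence
\[
\Pr(W_x\neq W'_x)\le\Pr(E_U(x)).
\]
The same coupling shows that the law of the stopped process (the path up to and including the first entry into $U$) is the same under both models. Therefore $\Reach_M(x,U)=\Reach_{M'}(x,U)$, which is the final sentence of the theorem.

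For the $q$ rounds I would use a sequential coupling. I run the algorithm once, sharing its internal randomness across worlds, and in each round I couple the two replies as above. As long as all previous replies agree, the algorithm's states agree, so round $r$ issues the same query in both worlds. The replies in that round then disagree with probability at most $\Reach_M(x,U)$, since each round uses a fresh rollout independent of the past. A union bound gives
\[
\Pr\bigl(T_q^A(M,x)\neq T_q^A(M',x)\bigr)\le q\,\Reach_M(x,U),
\]
and the coupling inequality then bounds the total variation.

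The main obstacle is the measure-theoretic care in the reduction to $\mathsf{PathFull}_x$ under adaptivity: the post-processing kernel may depend on the query $q$, which is itself chosen adaptively. I would handle this by absorbing each kernel into the algorithm's next-step map, which is legitimate because the kernel is the same under $M$ and $M'$. If $U$ contains uncountably many or ill-defined objects this would be a concern, but here $\Sigma^{<H}$ is finite, so all spaces except the algorithm's internal randomness are finite and the coupling is elementary.
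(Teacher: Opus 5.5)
Your proof is correct and follows essentially the paper's argument. You prove a one-query bound $\TV\le\Reach_M(x,U)$ for $\mathsf{PathFull}_x$, transfer it to arbitrary no-reset experiments by post-processing, and then use a round-by-round coupling with a union bound over the $q$ rounds. The differences are presentational. You get both the one-query bound and $\Reach_M(x,U)=\Reach_{M'}(x,U)$ from a single token-by-token coupling, where the paper uses first-entry events for the reachability equality and a mixture decomposition for the one-query bound. You also share the algorithm's random seed across the two worlds directly, where the paper treats deterministic algorithms first and then mixes over seeds.
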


Theorem~\ref{thm:reachability-barrier} is the core lower bound on the no-reset side. It says that the real bottleneck is not whether the API reveals text, scores, or full distributions along a sampled path. The bottleneck is that the learner cannot inspect prefixes that the base model fails to reach on its own.

A direct consequence is that the standard trajectory-local interfaces used in practice all obey the same barrier.

\begin{corollary}
\label{cor:standard-no-reset}
Under the assumptions of Theorem~\ref{thm:reachability-barrier}, the same bound holds for output-only sampling, for output together with generated-token log probabilities, for output together with sampled-token top-log-probability lists or top-$k$ reports, and for replies that include the full next-token distributions on sampled prefixes.
\end{corollary}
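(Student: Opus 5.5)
The plan is to reduce the corollary to Theorem~\ref{thm:reachability-barrier}. That theorem already covers any randomized adaptive algorithm querying any no-reset experiment, so it suffices to check that each listed interface is a no-reset experiment in the sense of Definition~\ref{def:no-reset}. Equivalently, by Theorem~\ref{thm:no-reset-collapse}, each is a randomized post-processing of $\mathsf{PathFull}_x$. Once this membership is established, the bound $q\,\Reach_M(x,U)$ transfers verbatim to each interface.

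The verification goes interface by interface. For each one, I exhibit the reply as a measurable (possibly randomized) function of $q$ and $W_x=(Y,\mu_1,\dots,\mu_H)$.
\begin{itemize}
\item Output-only sampling returns $Y$, which is a deterministic projection of $W_x$.
\item Output with generated-token log probabilities returns $(Y,(\log\mu_t(Y_t))_{t\le H})$. Each coordinate is read off from $\mu_t$ evaluated at the sampled token $Y_t$.
\item Sampled-token top-$k$ reports or top-log-probability lists return, for each $t$, the $k$ largest entries of $\mu_t$ with their log values. Any deterministic or randomized tie-breaking rule is allowed, since Definition~\ref{def:no-reset} permits internal randomization. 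If $k$ or the list format is chosen by the learner, it is absorbed into the query parameter $q$.
\item Full next-token distributions on sampled prefixes return $W_x$ itself.
\end{itemize}
In every case the reply depends only on $q$, $Y$, and the visited-prefix distributions, and never on an unvisited prefix. Each interface is therefore no-reset.

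One point needs care: the adaptive algorithm may interleave or mix these interfaces across rounds. I would handle this by letting the query parameter $q$ encode which interface is used in that round. The resulting combined oracle is still a single no-reset experiment, so Theorem~\ref{thm:reachability-barrier} applies to the mixed transcript with the same $q$-query budget. The equality $\Reach_M(x,U)=\Reach_{M'}(x,U)$ is inherited directly from the theorem.

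The only step I expect to be slightly delicate is measurability of the top-$k$ map under ties. I would handle it by fixing a measurable tie-break, such as lexicographic order on $\Sigma$, or by drawing uniformly among tied tokens. Since $\Sigma$ is finite and $\mu_t$ lies in a finite-dimensional simplex, this is routine.
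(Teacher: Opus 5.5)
Your proposal is correct and follows the paper's route. The paper treats this corollary as immediate from Theorem~\ref{thm:reachability-barrier}: each listed interface is a randomized post-processing of $\mathsf{PathFull}_x$, i.e.\ a no-reset experiment, and checking that membership is exactly what you do. Your extra remarks, encoding mixed interfaces in the query parameter $q$ and fixing a measurable tie-break for top-$k$, are sound refinements that the paper leaves implicit.
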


This corollary is worth pausing over. Generated-token log probabilities are often treated as substantially stronger than sampling, and in a narrow sense they are. They reveal more about the prefixes that were actually visited. The point of Theorem~\ref{thm:reachability-barrier} is that this extra information does not change the need to reach the informative part of the prefix tree in the first place.

The next sections confirm that this is indeed a control barrier. On the hidden-path family, every no-reset interface faces the same exponential bottleneck, while weak local reset breaks it. Once that boundary is crossed, a second distinction emerges: on a branching family, richer observations such as conditional sampling or logits outperform top-$1$ access even at chosen prefixes.

\section{Escaping the barrier with weak local reset}
\label{sec:hidden-path}

The previous section shows that every no-reset interface is governed by the same on-policy reachability quantity. The question is whether this bottleneck comes from the absence of prefix control specifically, or reflects a more general hardness. The hidden-path family answers this by isolating the long-horizon cost as directly as possible.

We use a hidden path. This family is meant to model a fragile chain of reasoning: there is one long sequence of mildly favored correct steps, and once the rollout leaves that sequence the model stops providing useful guidance. Ordinary sampling then has to survive the whole chain on policy, while local reset lets the learner inspect one step at a time. In this section and the next, all witness families are conditioned on a single fixed prompt, which we suppress from the notation.

Fix a signal parameter $\lambda>0$ and define
\[
p_+ := \frac{e^\lambda}{e^\lambda+K-1}, \qquad
p_- := \frac{1}{e^\lambda+K-1}, \qquad
\Delta := p_+ - p_-.
\]
Thus $p_+>1/K>p_-$ and $\Delta>0$.

\begin{definition}
\label{def:hidden-path}
For each hidden string $z=(z_1,\dots,z_H)\in\Sigma^H$, the hidden-path generator $M_z$ is defined by
\[
M_z(a\mid p)=
\begin{cases}
p_+, & \text{if } p=z_{<t} \text{ for some } t\in\{1,\dots,H\} \text{ and } a=z_t,\\
p_-, & \text{if } p=z_{<t} \text{ for some } t\in\{1,\dots,H\} \text{ and } a\neq z_t,\\
1/K, & \text{if } p \text{ is not a prefix of } z.
\end{cases}
\]
\end{definition}

Along the hidden path, the correct next token is only mildly preferred. Away from the hidden path, the model is uniform and carries no information about $z$. That is exactly the feature we want: the family isolates the cost of repeatedly re-reaching a useful prefix from the root.

\begin{theorem}
\label{thm:hidden-path-no-reset}
Fix $K\ge 2$, $H\ge 1$, and $\lambda>0$. Let $z,z'\in\Sigma^H$ satisfy $z_{1:H-1}=z'_{1:H-1}$ and $z_H\neq z'_H$. Let $A$ be any randomized adaptive algorithm that makes at most $q$ queries to any no-reset experiment at the fixed prompt. Then
\[
\TV\!\Bigl(\mathcal L\bigl(T_q^A(M_z)\bigr),\mathcal L\bigl(T_q^A(M_{z'})\bigr)\Bigr)\le q\,p_+^{H-1}.
\]
Consequently, any such algorithm that distinguishes $M_z$ from $M_{z'}$ with worst-case success probability at least $2/3$ must satisfy
\[
q\ge \frac{1}{3p_+^{H-1}}.
\]
\end{theorem}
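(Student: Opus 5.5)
The plan is to apply Theorem~\ref{thm:reachability-barrier} with a single-prefix informative set. Take $U=\{z_{<H}\}$, the unique prefix of length $H-1$ shared by $z$ and $z'$. The first step is to check that $M_z$ and $M_{z'}$ agree outside $U$.

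For a prefix $p\notin U$ there are three cases.
\begin{itemize}
\item If $p=z_{<t}$ with $t\le H-1$, then $p=z'_{<t}$ as well, and both models put $p_+$ on $z_t=z'_t$ and $p_-$ elsewhere.
\item If $p$ has length $t-1\le H-2$ and is not a prefix of $z$, then it is also not a prefix of $z'$, since the two strings share their first $H-1$ coordinates. Both models are then uniform at $p$.
\item If $p$ has length $H-1$ and $p\neq z_{<H}=z'_{<H}$, then $p$ is not a prefix of either string, so both models are again uniform.
\end{itemize}
So the two generators differ only at $z_{<H}$, where one favors $z_H$ and the other favors $z'_H$. This gives the agreement hypothesis of Theorem~\ref{thm:reachability-barrier}.

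The second step is to compute $\Reach_{M_z}(U)$. A root-start rollout visits the prefix $z_{<H}$ exactly when $Y_{<H}=z_{<H}$. Prefixes have fixed lengths, so the only index $t$ that can witness $E_U$ is $t=H$. By the chain rule this event has probability
\[
\prod_{t=1}^{H-1}M_z(z_t\mid z_{<t})=p_+^{H-1}.
\]
Theorem~\ref{thm:reachability-barrier} then gives the stated total-variation bound $q\,p_+^{H-1}$. For $H=1$ the set $U$ is $\{\varnothing\}$, its reachability is $1=p_+^0$, and the bound is trivial.

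The third step is the sample-complexity consequence, via a standard testing argument. Suppose some algorithm outputs the correct identity with probability at least $2/3$ under both $M_z$ and $M_{z'}$. Let $\mathcal E$ be the transcript event on which it answers ``$z$''. Then $\Pr_{M_z}(\mathcal E)\ge 2/3$ and $\Pr_{M_{z'}}(\mathcal E)\le 1/3$. Hence the total variation between the two transcript laws is at least $1/3$.

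This holds even if the final decision uses internal randomness. Absorb that randomness into the transcript, or note that post-processing cannot increase total variation. Combining with the bound gives $1/3\le q\,p_+^{H-1}$, so $q\ge 1/(3p_+^{H-1})$.

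I expect no real obstacle. The only point needing care is the agreement-outside-$U$ verification, in particular that prefixes of length at most $H-2$ leaving the common path are non-prefixes of both $z$ and $z'$.
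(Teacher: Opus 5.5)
Your proof is correct and follows the paper's argument essentially step for step: the singleton $U=\{z_{<H}\}$, the case check that $M_z$ and $M_{z'}$ agree outside $U$, the computation $\Reach_{M_z}(U)=p_+^{H-1}$, and an application of the reachability barrier. The only cosmetic difference is that you get $\TV\ge 1/3$ directly from a single distinguishing event, whereas the paper goes through its uniform-prior testing corollary; this leads to the same inequality $q\,p_+^{H-1}\ge 1/3$.
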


The reason is simple. The two models differ only at the depth-$H-1$ prefix $z_{1:H-1}$. A root-start rollout reaches that prefix only if it follows the hidden path correctly for $H-1$ consecutive steps, which happens with probability $p_+^{H-1}$. The reachability barrier from Section~\ref{sec:no-reset} then applies immediately. For fixed $K$ and $\lambda$, this is exponential in the horizon.

The point of the family is that weak local reset breaks this exponential barrier. The learner does not need arbitrary teleportation to deep nodes. It is enough to build the prefix one token at a time and revisit prefixes it has already constructed. The next algorithm does exactly that with chosen-prefix next-token samples.

\begin{algorithm}[t]
\caption{Recovering a hidden path with chosen-prefix samples}
\label{alg:hidden-path-prefix-sample}
\begin{algorithmic}[1]
\STATE Initialize $\widehat z_{<1}:=\varnothing$
\FOR{$t=1$ to $H$}
    \FOR{$j=1$ to $m$}
        \STATE Query $\mathsf{PrefixSample}(\widehat z_{<t})$ and receive $A_j^{(t)}\in\Sigma$
    \ENDFOR
    \FOR{each $a\in\Sigma$}
        \STATE Set $N_a^{(t)}:=\sum_{j=1}^{m}\one\{A_j^{(t)}=a\}$
    \ENDFOR
    \STATE Set $\widehat z_t\in\arg\max_{a\in\Sigma} N_a^{(t)}$
\ENDFOR
\STATE Return $\widehat z=(\widehat z_1,\dots,\widehat z_H)$
\end{algorithmic}
\end{algorithm}

The algorithm starts at the root and keeps a running reconstruction of the hidden prefix. At stage $t$, it repeatedly queries the next-token distribution at the current reconstructed prefix $\widehat z_{<t}$. If the reconstruction so far is correct, then the true next token $z_t$ appears with probability $p_+$ while every rival appears with probability $p_-$. The empirical majority therefore identifies $z_t$ once $m$ is large enough. The next round moves to the one-token extension that was just recovered. No deep jump is ever made, so the procedure obeys the local-reset discipline exactly as defined in Section~\ref{sec:setup}.

\begin{theorem}
\label{thm:hidden-path-prefix-sample}
Fix $K\ge 2$, $H\ge 1$, $\lambda>0$, and $\delta\in(0,1)$. Let
\[
m:=\left\lceil \frac{2}{\Delta^2}\log\!\left(\frac{H(K-1)}{\delta}\right)\right\rceil.
\]
Then Algorithm~\ref{alg:hidden-path-prefix-sample} recovers the hidden path $z$ with probability at least $1-\delta$ using at most $Hm$ chosen-prefix samples. The queried prefixes obey the local-reset discipline.
\end{theorem}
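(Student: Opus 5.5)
The plan is induction over stages combined with a union bound. Define the good event $G_t$ that $\widehat z_{1:t}=z_{1:t}$, with $G_0$ trivially true. Conditioned on $G_{t-1}$, the prefix queried at stage $t$ is $\widehat z_{<t}=z_{<t}$, a genuine prefix of the hidden path. So by Definition~\ref{def:hidden-path}, the $m$ draws $A^{(t)}_1,\dots,A^{(t)}_m$ are i.i.d.\ from the distribution giving mass $p_+$ to $z_t$ and $p_-$ to every other token. These draws are fresh, so they are independent of everything that determined the prefix. Stage $t$ can fail only if some rival $a\neq z_t$ satisfies $N_a^{(t)}\ge N_{z_t}^{(t)}$. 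Counting ties as failures lets us ignore the tie-breaking rule in the $\arg\max$.

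For a fixed rival $a$, I will write $N_{z_t}^{(t)}-N_a^{(t)}=\sum_{j=1}^m X_j$, where $X_j:=\one\{A_j^{(t)}=z_t\}-\one\{A_j^{(t)}=a\}\in\{-1,0,1\}$. These $X_j$ are i.i.d.\ with mean $\Delta$. Each $X_j$ lies in an interval of length $2$, so Hoeffding's inequality gives
\[
\Pr\Bigl(\textstyle\sum_j X_j\le 0\Bigr)\le \exp\!\left(-\frac{2(m\Delta)^2}{m\cdot 4}\right)=\exp\!\left(-\frac{m\Delta^2}{2}\right).
\]
With the stated choice of $m$ this is at most $\delta/(H(K-1))$. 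A union bound over the $K-1$ rivals gives $\Pr(G_t^c\mid G_{t-1})\le \delta/H$.

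Chaining the stages, $\Pr(G_H^c)\le\sum_{t=1}^H\Pr(G_{t-1}\cap G_t^c)\le\sum_t \Pr(G_t^c\mid G_{t-1})\le\delta$. This gives recovery with probability at least $1-\delta$. The algorithm makes exactly $Hm$ queries by construction.

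For the local-reset discipline, the first query is at $\widehat z_{<1}=\varnothing$. Within stage $t$, each query after the first repeats the previously queried prefix $\widehat z_{<t}$. The first query of stage $t+1$ is at $\widehat z_{<t+1}=\widehat z_{<t}\widehat z_t$, a one-token extension of a previously queried prefix. This holds whether or not the reconstruction is correct, so the discipline is satisfied deterministically.

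The main point needing care is the conditioning: justifying that conditional on $G_{t-1}$ the stage-$t$ samples are i.i.d.\ from the on-path law. This follows because the oracle draws fresh independent samples given the query, and the query is a function of earlier samples only. I expect the Hoeffding constant to be the only place where the factor $2/\Delta^2$ must be matched exactly, and the range-$2$ bound above matches it.
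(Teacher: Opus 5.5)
Your proposal is correct and follows essentially the same argument as the paper. You condition on correct reconstruction through depth $t-1$, apply Hoeffding to the pairwise count differences $X_j\in[-1,1]$ with mean $\Delta$ to get $\exp(-m\Delta^2/2)$, take a union bound over the $K-1$ rivals to get $\delta/H$ per stage, chain over the $H$ stages, and verify the local-reset discipline the same way; the indexing $G_t=E_{t+1}$ and the disjoint decomposition of $G_H^c$ are only cosmetic differences from the paper's recursion $\Pr(E_{t+1}^c)\le\Pr(E_t^c)+\delta/H$.
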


For fixed $K$ and $\lambda$, the gap $\Delta$ is a positive constant, so the query complexity is
\[
O\!\left(H\log\!\frac{H}{\delta}\right).
\]
On the same family, no-reset interaction requires exponentially many queries on some pair of models, while weak local reset with conditional next-token samples already gives a polynomial reconstruction.

The hidden path is the right family for the first axis, but not for the second. Once control is available, even top-$1$ access is enough on this family, because each informative prefix has a unique best child. To see a genuine observation gap after control has been granted, we need branching.

\section{Observation matters after control}
\label{sec:after-control}

The hidden path shows that prefix control is the first boundary. It says little about what richer observations add once control is available, since there is only one informative continuation at each useful prefix. To isolate the observation boundary, we need a family where the top token is unique at every prefix yet still reveals nothing about the branching structure.

We work with a family of prefix trees in which one continuation is a stable default suggestion, shared across all models, while the genuinely informative branch is encoded in the lower-ranked coordinates. This is still a branching family, but it is branching in a way that does not disappear under a unique top token.

Assume $K\ge 3$, and let token $1$ play the role of a distinguished leader token.

\begin{definition}
\label{def:leader-trie}
A leader trie of depth $H$ is a prefix-closed set $T\subseteq \Sigma^{\le H}$ such that $\varnothing\in T$, every leaf of $T$ has depth exactly $H$, and every internal node $p\in I(T):=T\cap \Sigma^{<H}$ has exactly two children in $T$, namely the leader child $p1$ and one additional child $p\,b_T(p)$ for some $b_T(p)\in\{2,\dots,K\}$.
\end{definition}

Thus every internal node has a common leader child and one hidden child. The tree is still branching, but the identity of the leader is the same everywhere.

Now define four probabilities
\[
\alpha:=\frac{4}{K+4}, \qquad
\beta:=\frac{2}{K+4}, \qquad
\gamma:=\frac{1}{K+4}, \qquad
\gamma_0:=\frac{1}{K+3}.
\]
These satisfy
\[
\alpha>\beta>\gamma_0>\gamma.
\]
The associated generator is chosen so that the leader token is always the unique top token, whether or not the queried prefix is informative.

\begin{definition}
\label{def:leader-trie-model}
Given a leader trie $T$, the generator $M_T$ is defined as follows. If $p\in I(T)$, then
\[
M_T(1\mid p)=\alpha, \qquad
M_T\bigl(b_T(p)\mid p\bigr)=\beta, \qquad
M_T(a\mid p)=\gamma \text{ for } a\notin\{1,b_T(p)\}.
\]
If $p\notin I(T)$, then
\[
M_T(1\mid p)=\frac{4}{K+3}, \qquad
M_T(a\mid p)=\gamma_0 \text{ for } a\neq 1.
\]
\end{definition}

At an internal node, the leader token is uniquely best, but one nonleader token is also elevated. At an off-trie prefix, the leader remains uniquely best and all nonleaders sit at the same lower baseline. The informative part of the model is therefore not the top token. It is the identity of the elevated nonleader.

\begin{theorem}
\label{thm:prefix-top-useless}
Assume $K\ge 3$. For every leader trie $T$ and every queried prefix $p\in\Sigma^{<H}$, the oracle $\mathsf{PrefixTop}(p)$ returns the token $1$. Consequently, for any adaptive algorithm interacting only with $\mathsf{PrefixTop}$, the full transcript law is the same for every leader trie of depth $H$. In particular, no such algorithm can distinguish two different leader tries with success probability greater than $1/2$.
\end{theorem}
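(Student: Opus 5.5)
The plan is to first verify the oracle claim pointwise, prefix by prefix, and then deduce the transcript-law and indistinguishability statements by a standard induction over rounds.

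\textbf{Step 1: the oracle always returns token $1$.} Fix a leader trie $T$ and a prefix $p\in\Sigma^{<H}$, and split into two cases according to Definition~\ref{def:leader-trie-model}.

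If $p\in I(T)$, the next-token probabilities are $\alpha=4/(K+4)$ for token $1$, $\beta=2/(K+4)$ for the hidden child, and $\gamma=1/(K+4)$ for the rest. Since $4>2>1$, token $1$ is the unique maximizer. Normalization also checks out: $4+2+(K-2)=K+4$.

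If $p\notin I(T)$, token $1$ has probability $4/(K+3)$ and every other token has probability $\gamma_0=1/(K+3)$. Normalization gives $4+(K-1)=K+3$, and $4>1$, so token $1$ is again the unique maximizer.

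In both cases the maximizer is unique, so $\mathsf{PrefixTop}(p)=1$ and the symbol $\bot$ never occurs. The hypothesis $K\ge 3$ is needed only so that leader tries with a choice of $b_T(p)\in\{2,\dots,K\}$ make sense and the probability vectors are well defined.

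\textbf{Step 2: the transcript law does not depend on $T$.} Model an adaptive algorithm as follows. It draws internal randomness $R$ from a law that does not depend on $T$. At round $r$ it chooses a query $p_r$ as a measurable function of $R$ and the previous replies (and, if we wish, also outputs a final guess). Every reply equals $1$ deterministically, whatever $T$ is. Therefore, by induction on $r$, the joint law of $(R,p_1,1,\dots,p_r,1)$ is the same for every $T$, and so the full transcript law is identical across all leader tries. The local-reset discipline, or any other restriction on which prefixes may be queried, plays no role here. The argument therefore covers arbitrary query patterns, including non-local ones.

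\textbf{Step 3: no better than chance.} Take two distinct tries $T\neq T'$. The algorithm's output is a function of the transcript, so its law is the same under $T$ and $T'$. With $\TV=0$, any test $\phi$ satisfies
\[
\tfrac12\Pr_T(\phi=T)+\tfrac12\Pr_{T'}(\phi=T')\le\tfrac12\bigl(1+\TV\bigr)=\tfrac12 .
\]
Equivalently, $\Pr_T(\phi=T)+\Pr_{T'}(\phi=T')\le 1$, so the worst-case success probability is at most $1/2$.

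The main potential obstacle is only the bookkeeping in Step 2: formalizing adaptivity so that "identical replies imply identical transcript law" is rigorous. The induction over rounds handles this directly.
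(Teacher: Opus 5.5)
Your proof is correct and follows the paper's argument essentially step for step. The paper likewise checks that token $1$ is the unique maximizer in both cases (on-trie via $\alpha>\beta>\gamma$, off-trie via $4/(K+3)>\gamma_0$), observes that constant replies make the transcript law independent of $T$, and applies the uniform-prior bound of $1/2$. One small aside: $K\ge 3$ is not needed for the probability vectors to be well defined, since they normalize for $K=2$ as well; its real role is that for $K=2$ there is only one leader trie of each depth, so the distinguishing claim would be vacuous.
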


The theorem is stronger than the tie-based example because it does not rely on any ambiguity in the maximizer. The top token is unique everywhere. It is simply the wrong statistic.

The hidden branch, however, can be recovered from richer observations. The relevant margins are
\[
\Gamma_{\mathrm{lead}}:=\frac{\beta-\gamma_0}{2}
=\frac{K+2}{2(K+4)(K+3)},
\qquad
\gamma_{\mathrm{lead}}:=\frac{\log \beta-\log \gamma_0}{2}
=\frac{1}{2}\log\!\left(\frac{2(K+3)}{K+4}\right).
\]
The probability margin $\Gamma_{\mathrm{lead}}$ separates the hidden child from every nonhidden nonleader. The log-probability margin $\gamma_{\mathrm{lead}}$ is the corresponding gap in log space.

The first recovery result uses chosen-prefix logits and a breadth-first traversal of the trie.

\begin{algorithm}[t]
\caption{Recovering a leader trie with chosen-prefix logits}
\label{alg:leader-trie-logit}
\begin{algorithmic}[1]
\STATE Initialize $\widehat T:=\{\varnothing\}$ and queue $Q:=(\varnothing)$
\WHILE{$Q$ is nonempty}
    \STATE Pop the first prefix $p$ from $Q$
    \STATE Query $\mathsf{PrefixLogit}_{\xi}(p)$ and receive $\widetilde \ell_p\in\mathbb R^\Sigma$
    \STATE Set $\widehat B(p):=\{a\in\{2,\dots,K\}: \widetilde \ell_p(a)>\log \gamma_0+\gamma_{\mathrm{lead}}\}$
    \IF{$\widehat B(p)$ is a singleton $\{\widehat b(p)\}$}
        \STATE Add $p1$ and $p\,\widehat b(p)$ to $\widehat T$
        \IF{$|p|+1<H$}
            \STATE Append $p1$ and $p\,\widehat b(p)$ to $Q$
        \ENDIF
    \ENDIF
\ENDWHILE
\STATE Return $\widehat T$
\end{algorithmic}
\end{algorithm}

The queue stores prefixes whose children still need to be inspected. At a queried prefix $p$, line $4$ asks for the full next-token score vector. Line $5$ looks only at nonleader coordinates and checks which of them lie above the fixed threshold halfway between $\log \beta$ and $\log \gamma_0$. If exactly one nonleader coordinate clears that threshold, then $p$ is an internal node and the corresponding token is the hidden child. Lines $6$ through $8$ then add both active children, the common leader child and the hidden child, and place them in the queue if they have not yet reached depth $H$. If no nonleader clears the threshold, then $p$ is not an internal node and the algorithm stops expanding it.

\begin{theorem}
\label{thm:leader-trie-logit}
Let $T$ be a leader trie. If $\xi<\gamma_{\mathrm{lead}}$, then Algorithm~\ref{alg:leader-trie-logit} recovers $T$ exactly from $|I(T)|$ chosen-prefix logit queries. The queried prefixes obey the local-reset discipline.
\end{theorem}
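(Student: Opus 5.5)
The argument has three parts. First I show that the threshold test in line~5 classifies every queried prefix correctly. Next I prove by induction on the BFS order that exactly the internal nodes of $T$ are queried. Finally I count the queries and check the local-reset discipline.

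\emph{Step 1 (one-prefix correctness).} Fix a queried prefix $p$ and put $\tau:=\log\gamma_0+\gamma_{\mathrm{lead}}=\tfrac12(\log\beta+\log\gamma_0)$.
\begin{itemize}
\item If $p\in I(T)$, the hidden child has $\log M_T(b_T(p)\mid p)=\log\beta=\tau+\gamma_{\mathrm{lead}}$. Hence $\widetilde\ell_p(b_T(p))\ge\tau+\gamma_{\mathrm{lead}}-\xi>\tau$.
\item Every other nonleader at an internal node has $\log\gamma<\log\gamma_0=\tau-\gamma_{\mathrm{lead}}$. Its noisy logit is therefore $<\tau-\gamma_{\mathrm{lead}}+\xi<\tau$.
\item If $p\notin I(T)$, every nonleader has log-probability exactly $\log\gamma_0$. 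Its noisy logit is at most $\tau-\gamma_{\mathrm{lead}}+\xi<\tau$, so $\widehat B(p)=\emptyset$.
\end{itemize}
So $\widehat B(p)=\{b_T(p)\}$ when $p\in I(T)$, and $\widehat B(p)=\emptyset$ otherwise. This uses only $\xi<\gamma_{\mathrm{lead}}$ and $\gamma<\gamma_0<\beta$.

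\emph{Step 2 (induction over the queue).} I would prove the invariant that every prefix ever pushed onto $Q$ lies in $I(T)$, and every prefix added to $\widehat T$ lies in $T$.
\begin{itemize}
\item The root is in $I(T)$, since $H\ge1$ and $\varnothing\in T$ has children.
\item When an internal $p$ is popped, Step~1 shows the algorithm adds exactly its two $T$-children $p1$ and $p\,b_T(p)$.
\item A child is enqueued only if $|p|+1<H$. Because all leaves of $T$ have depth exactly $H$, every node of $T$ of depth $<H$ is internal, so the enqueued child is in $I(T)$.
\end{itemize}
Conversely, every node of $T$ is added, by induction on depth. A node of $T$ at depth $d\ge1$ is a child of an internal parent at depth $d-1$. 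By the inductive hypothesis that parent was enqueued (or is the root), it is popped, and it adds this child.

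Each internal node has a unique parent, so it is enqueued exactly once; the tree structure means no duplicates arise. Hence $\widehat T=T$, and the number of queries equals the number of popped prefixes, which is $|I(T)|$. Termination follows because depths strictly increase and are bounded by $H-1$.

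\emph{Step 3 (local reset).} The first query is at $\varnothing$. Every later query is at a prefix $p1$ or $p\,b_T(p)$ that was enqueued when its parent $p$ was popped, and $p$ was queried at that moment. So every query is a one-token extension of an earlier query, which is the local-reset discipline.

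The main obstacle is only bookkeeping in Step~1: the off-trie case must use the exact value $\gamma_0$, so the threshold sits exactly $\gamma_{\mathrm{lead}}$ above it, and strictness comes from $\xi<\gamma_{\mathrm{lead}}$.
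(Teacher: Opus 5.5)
Your proposal is correct and follows essentially the same route as the paper's proof. Both use the same threshold analysis at $\log\gamma_0+\gamma_{\mathrm{lead}}$ for internal and off-trie prefixes, the same two-directional induction showing that exactly the internal nodes are popped, and the same parent-already-queried argument for local reset. Your explicit remarks on termination and on each internal node being enqueued exactly once make the query count $|I(T)|$ slightly more airtight than in the paper, which leaves that point implicit.
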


The same breadth-first idea works with chosen-prefix samples. At each queried prefix, instead of reading the logit vector once, the learner takes repeated conditional samples and identifies the hidden child from empirical frequencies.

\begin{theorem}
\label{thm:leader-trie-sample}
Let $T$ be a leader trie and write $s:=|I(T)|$. Suppose the learner knows an upper bound $S\ge s$. Fix $\delta\in(0,1)$, and set
\[
m:=\left\lceil \frac{1}{2\Gamma_{\mathrm{lead}}^2}\log\!\left(\frac{2(K-1)S}{\delta}\right)\right\rceil.
\]
Then the local-reset algorithm obtained by replacing each query in Algorithm~\ref{alg:leader-trie-logit} with $m$ calls to $\mathsf{PrefixSample}(p)$ and replacing line $4$ by the empirical test
\[
\widehat B(p):=\{a\in\{2,\dots,K\}: \widehat P_p(a)>\gamma_0+\Gamma_{\mathrm{lead}}\}
\]
recovers $T$ with probability at least $1-\delta$ using at most $Sm$ chosen-prefix samples.
\end{theorem}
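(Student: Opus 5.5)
The argument couples the sampled traversal with the exact one of Theorem~\ref{thm:leader-trie-logit}. That proof shows that, under exact probabilities, the breadth-first traversal queries exactly the internal nodes of $T$, and at each $p\in I(T)$ the set of nonleader tokens whose probability exceeds $\gamma_0+\Gamma_{\mathrm{lead}}$ is $\{b_T(p)\}$. The sampled algorithm will therefore follow the same path as the exact one provided every empirical test returns the same verdict as its exact counterpart. We define a good event $G$ on which this holds and show $\Pr(G^c)\le\delta$. On $G$ the queue evolves exactly as in the logit algorithm, so $\widehat T=T$ and the queried prefixes obey the local-reset discipline for the same reason as before: children $p1$ and $p\,\widehat b(p)$ are one-token extensions of an already queried prefix.

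\textbf{The threshold.} At an internal node the hidden child has probability $\beta=\gamma_0+2\Gamma_{\mathrm{lead}}$. The other nonleaders have probability $\gamma<\gamma_0$. So the threshold $\gamma_0+\Gamma_{\mathrm{lead}}$ sits at distance at least $\Gamma_{\mathrm{lead}}$ from every nonleader probability. The same holds at off-trie prefixes, where every nonleader has probability exactly $\gamma_0$. Consequently, if $|\widehat P_p(a)-M_T(a\mid p)|<\Gamma_{\mathrm{lead}}$ for all $a\in\{2,\dots,K\}$, then $\widehat B(p)$ equals the exact set: $\{b_T(p)\}$ at an internal node and $\emptyset$ off the trie.

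\textbf{Concentration and the union bound.} We process queried prefixes in order. Conditional on the history, the $m$ fresh samples at the current prefix $p$ are i.i.d.\ from $M_T(\cdot\mid p)$. For each fixed $a$, Hoeffding gives
\[
\Pr\bigl(|\widehat P_p(a)-M_T(a\mid p)|\ge\Gamma_{\mathrm{lead}}\bigr)\le 2e^{-2m\Gamma_{\mathrm{lead}}^2}\le \frac{\delta}{(K-1)S},
\]
by the choice of $m$. The subtle point is that the set of queried prefixes is random. We handle this with a sequential union bound. Let $G$ be the event that the first $\min(\text{number of rounds},\,S)$ rounds are all accurate. By induction, on $G$ every queried prefix is internal, because the exact traversal only enqueues internal nodes; leaves at depth $H$ are never enqueued thanks to the $|p|+1<H$ check. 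Hence on $G$ there are exactly $s\le S$ rounds.

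A union bound over at most $S$ round indices and $K-1$ tokens, each term conditioned on the past, gives $\Pr(G^c)\le\delta$. On $G$ the total number of samples is $sm\le Sm$. To guarantee the $Sm$ budget deterministically, the algorithm simply halts after $S$ rounds; this does not affect correctness on $G$.

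The main obstacle is this adaptivity issue: the identity and number of queried prefixes depend on earlier outcomes. Indexing failures by round number rather than by prefix, and capping the number of rounds at $S$, resolves it.
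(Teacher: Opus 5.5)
Your proof is correct and follows essentially the same route as the paper. Both use per-prefix Hoeffding bounds, a union bound over the $K-1$ nonleader tokens, and a sequential conditional union bound over at most $S$ rounds, resting on the invariant that on the good event only true internal nodes of $T$ are ever processed. The differences are cosmetic: you use two-sided deviations and index failures by round number (total exactly $\delta$), while the paper uses one-sided deviations and a breadth-first ordering of $I(T)$ (total $\delta/2$), and your explicit cap at $S$ rounds is the counter already built into Algorithm~\ref{alg:leader-trie-sample}.
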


Once control is available, the observation model matters. The branching structure is invisible to top-$1$ access but recoverable from conditional samples or logits.

One score-based interface deserves separate mention. Teacher-forced sequence scoring lies on the control-rich side of the taxonomy for the same reason: it evaluates a completion chosen by the learner rather than one generated on policy.

\begin{corollary}
\label{cor:seqscore-hidden-path}
In the hidden-path family, exact $\mathsf{SeqScore}$ recovers the hidden path using $HK$ chosen-completion score queries.
\end{corollary}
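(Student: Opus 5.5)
The plan is to recover $z$ one coordinate at a time with a greedy, token-by-token search driven by exact sequence scores. The idea is that comparing the scores of $K$ completions that share a correct prefix and differ only at position $t$ reveals $z_t$. Suppose the prefix $z_{<t}$ has already been recovered correctly. For each candidate $a\in\Sigma$, form the completion $y^{(a)}:=(z_{<t},a,1,1,\dots,1)\in\Sigma^H$, padded with an arbitrary fixed filler token, and query $\mathsf{SeqScore}(y^{(a)})$. Over $t=1,\dots,H$ and $a\in\Sigma$ this uses exactly $HK$ queries.

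The key step is to evaluate $\log\Pr_{M_z}(Y=y^{(a)})$ from Definition~\ref{def:hidden-path}. The first $t-1$ factors are identical across $a$: each equals $p_+$, because the prefix follows $z$. The $t$th factor is $p_+$ if $a=z_t$ and $p_-$ otherwise. The remaining factors are where the argument has to be handled carefully.

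\begin{itemize}
\item If $a\neq z_t$, the prefix $(z_{<t},a)$ has left the path, so every later factor is $1/K$.
\item If $a=z_t$, the later factors are $p_+$ or $p_-$ depending on whether the filler tokens happen to match $z$. The path can also leave $z$ later, after which the factors become $1/K$.
\end{itemize}

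So the tails are not identical across $a$, and a naive comparison of raw scores could be fooled. The fix is to cancel the tail exactly by differencing scores. For each $a$, consider the ratio $\Pr(y^{(a)})/\Pr(y^{(b)})$ for a fixed reference $b$. For $a,b\neq z_t$ these ratios are all equal to $1$, since both tails are uniform $1/K$ factors and the $t$th factors are both $p_-$. The correct token should therefore be the unique $a$ whose score differs from the common value shared by the other $K-1$ candidates. When $K\ge 3$, identify $z_t$ as the odd one out.

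The main obstacle is the possibility of a coincidental tie, where the true candidate's score equals the common value of the others. A tie would require
\[
p_+\cdot(\text{tail under } z) \;=\; p_-\cdot K^{-(H-t)}.
\]
I would avoid this by choosing the padding adaptively. Queries are sequential, so an alternative is to run the procedure from the last step backward, or to use the identity that the tail product for $a=z_t$ is a product of $p_+$, $p_-$, and $1/K$ factors. That product lies in $[p_-^{\,H-t},p_+^{\,H-t}]$, and a tie with $(p_-/p_+)K^{-(H-t)}$ is nongeneric. Cleanest is to note that, since only $\arg\max$ matters, the correct $a$ satisfies
\[
\Pr(y^{(z_t)})\ge p_+\,p_-^{H-t}\cdot p_+^{t-1}.
\]
This beats $p_-K^{-(H-t)}p_+^{t-1}$ because $p_-\ge$? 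No: $p_-<1/K$. So instead I would take the padding beyond position $t$ to be, for the true branch, irrelevant, by comparing only candidates whose probability ratio isolates step $t$. For $K=2$ I would handle the single comparison directly. I expect resolving this tail-cancellation issue to be the hardest part.
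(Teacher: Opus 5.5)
\textbf{There is a genuine gap.} Your query scheme (a fixed filler suffix, $K$ completions per stage, $HK$ queries in total) is exactly the paper's. What is missing is any proof that the correct token is identified. You leave the tie question open and appeal to it being ``nongeneric,'' which is not an argument for a statement asserted for every $K,\lambda,H,z$. You also leave $K=2$ unresolved. Your odd-one-out test does not help there: with two candidates there is no odd one out. Even for $K\ge 3$ that test still needs the missing non-tie fact. Your diagnosis is also off, because raw-score $\arg\max$ is never fooled.

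\textbf{The missing idea is a fact you state but never use: a completion can leave the hidden path only once.} For $y^{(z_t)}$, the factors after step $t$ have the following shape:
\begin{itemize}
\item a run of $r\ge 0$ factors $p_+$, while the filler happens to match $z$;
\item then at most one factor $p_-$, at the first mismatch;
\item then only factors $1/K$, since every later prefix is off the path.
\end{itemize}
Your bound $p_-^{\,H-t}$ on the tail discards this structure, which is why your comparison collapsed. Using it, when the filler eventually leaves the path,
\[
\frac{\Pr_{M_z}(Y=y^{(z_t)})}{\Pr_{M_z}(Y=y^{(a)})}
=\frac{p_+^{t-1}\,p_+\,p_+^{r}\,p_-\,K^{-(H-t-r-1)}}{p_+^{t-1}\,p_-\,K^{-(H-t)}}
=(Kp_+)^{r+1}.
\]
When the filler never leaves the path ($r=H-t$), the ratio is $p_+^{H-t+1}K^{H-t}/p_-$. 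This is even larger, since $Kp_-<1$.

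Since $Kp_+=Ke^\lambda/(e^\lambda+K-1)>1$, the true token is the strict unique maximizer of the exact score for every $K\ge 2$. Intuitively, the $p_-$ that a wrong candidate pays at step $t$ is paid by the true candidate only later, if at all. On every intermediate step the true candidate pays $p_+>1/K$ where the wrong one pays $1/K$.

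With this ratio in hand, induction on $t$ with plain $\arg\max$ at each stage finishes the proof; no differencing or adaptive padding is needed. This is precisely the paper's argument.
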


At stage $t$, one simply fixes an arbitrary suffix of length $H-t$, compares the scores of the $K$ one-token extensions of the current prefix followed by that suffix, and keeps the best extension. This is just teacher forcing. It is not a literal chosen-prefix oracle, but it belongs on the same side of the control boundary.

The two witness families can now be summarized in one statement.

\begin{corollary}
\label{cor:control-first-observation-second}
Fix $K\ge 3$ and $\lambda>0$. On the hidden-path family, every no-reset generator interface requires exponentially many queries on some pair of models by Theorem~\ref{thm:hidden-path-no-reset}, while chosen-prefix sampling recovers the hidden path with
\[
O\!\left(H\log\!\frac{H}{\delta}\right)
\]
local-reset queries by Theorem~\ref{thm:hidden-path-prefix-sample}. On the leader-trie family, chosen-prefix top-token access is useless by Theorem~\ref{thm:prefix-top-useless}, while chosen-prefix logits and chosen-prefix sampling recover the trie efficiently under the local-reset discipline by Theorems~\ref{thm:leader-trie-logit} and \ref{thm:leader-trie-sample}. Prefix control is therefore the first boundary in autoregressive generator access, and observation richness becomes a second boundary only after control is available.
\end{corollary}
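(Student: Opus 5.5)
The corollary is a summary statement, so the plan is to assemble it from the four cited theorems. The only real work is checking that each theorem's hypotheses hold under the corollary's fixed parameters $K\ge 3$ and $\lambda>0$, and that the stated complexities follow.

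\textbf{Hidden-path half.} I will take the pair $z,z'\in\Sigma^H$ that agree on the first $H-1$ coordinates and differ in the last one; such a pair exists because $K\ge 2$. Theorem~\ref{thm:hidden-path-no-reset} applies to every no-reset experiment. By Corollary~\ref{cor:standard-no-reset} and Theorem~\ref{thm:no-reset-collapse}, this class includes every trajectory-local interface. Any algorithm reaching success probability $2/3$ therefore needs $q\ge 1/(3p_+^{H-1})$. Since $K\ge 2$ and $\lambda$ is fixed, $p_+=e^\lambda/(e^\lambda+K-1)$ is a constant strictly less than $1$. Hence $1/p_+^{H-1}=\exp\bigl((H-1)\log(1/p_+)\bigr)$ grows exponentially in $H$.

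On the positive side, Theorem~\ref{thm:hidden-path-prefix-sample} gives recovery with $Hm$ queries, where $m=\lceil (2/\Delta^2)\log(H(K-1)/\delta)\rceil$. With $K$ and $\lambda$ fixed, $\Delta=(e^\lambda-1)/(e^\lambda+K-1)$ is a positive constant and $\log(K-1)$ is a constant. Therefore
\[
Hm=O\bigl(H\log(H/\delta)\bigr).
\]
The same theorem states that the queries respect the local-reset discipline.

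\textbf{Leader-trie half.} This part requires $K\ge 3$, so that Definition~\ref{def:leader-trie-model} is well posed and the inequality $\alpha>\beta>\gamma_0>\gamma$ holds. Theorem~\ref{thm:prefix-top-useless} then shows that $\mathsf{PrefixTop}$ returns $1$ at every prefix. Consequently the transcript law does not depend on $T$, and the success probability cannot exceed $1/2$.

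For efficiency, Theorem~\ref{thm:leader-trie-logit} uses exactly $|I(T)|$ exact logit queries. This case is $\xi=0<\gamma_{\mathrm{lead}}$, and $\gamma_{\mathrm{lead}}>0$ because $2(K+3)>K+4$. Theorem~\ref{thm:leader-trie-sample} uses $Sm$ samples, where $m=O\bigl(\Gamma_{\mathrm{lead}}^{-2}\log(KS/\delta)\bigr)$. For fixed $K$ this is polynomial in $S$ and $\log(1/\delta)$. Both algorithms respect local reset.

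\textbf{Meaning of ``efficiently''.} I expect the main, though mild, obstacle to be making this word precise. It should mean query complexity linear in the number of internal nodes, up to a logarithmic factor for sampling. This is the natural information-theoretic scale, since the trie has $|I(T)|$ hidden labels to identify. The bound is not claimed to be polynomial in $H$, because $|I(T)|$ can be as large as $2^H-1$.

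The concluding ``first boundary / second boundary'' sentence is interpretive. It follows by juxtaposing the two families. In the first, changing only control turns an exponential query requirement into a polynomial one. In the second, control is already granted, and changing only the observation turns impossibility into exact recovery.
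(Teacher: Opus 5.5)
Your proposal is correct and follows the paper's proof, which likewise just combines the four cited theorems and notes that $\Delta$ is a positive constant for fixed $K,\lambda$, so that $Hm=O(H\log(H/\delta))$. Two harmless side remarks: the chain $\alpha>\beta>\gamma_0>\gamma$ and the well-posedness of $M_T$ also hold for $K=2$ (what $K\ge 3$ actually buys is that for $K=2$ there is only one leader trie), and since every node of depth $<H$ has exactly two children, every leader trie has $|I(T)|=2^H-1$, which confirms your reading that ``efficiently'' is measured relative to $|I(T)|$ rather than polynomially in $H$.
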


The next section shows that the same distinction changes the complexity of a standard post-training objective. The reward model and the KL regularizer stay fixed. Only the generator-side interface changes.

\section{Implications for KL-regularized post-training}
\label{sec:kl-bridge}

The same boundary that separates the identification regimes also appears in a standard post-training objective. The reward is outcome-only, the objective is KL-regularized, and the policy class is the full family of prompt-conditioned policies; only the generator access changes.

Fix a prompt distribution $\rho$ on $\mathcal X$, and let $x^\star\in\mathcal X$ be a designated hard prompt with mass $\eta:=\rho(x^\star)>0$. Choose integers $D,L\ge 1$ and set $H:=D+L+1$. Let $v\in\Sigma^D$ be a known scaffold, and let $\tau_0,\tau_1\in\Sigma$ be two distinct terminal tokens. The hidden state of the instance consists of a suffix $s\in\Sigma^L$ and a reward bit $b\in\{0,1\}$. Write
\[
z_{s,b}:=v\,s\,\tau_b\in\Sigma^H
\]
for the target completion.

On the hard prompt $x^\star$, the base generator $Q_s$ behaves like the hidden-path family from Section~\ref{sec:hidden-path} along the string $v\,s$: for the first $D+L$ steps it mildly favors the next token that stays on that path, and once the full prefix $v\,s$ has been reached the final next-token distribution is uniform. On every prompt $x\neq x^\star$, the base generator is a fixed easy model, identical across the whole family. The outcome reward is concentrated on the single target completion:
\[
r_{s,b}(x,y):=R\,\one\{x=x^\star,\ y=z_{s,b}\}.
\]
The scaffold $v$ is known, the hidden suffix $s$ determines where the informative subtree lies, and the reward bit $b$ is hidden even after $s$ is known, because the final token is uniform under the base generator.

We evaluate policies with the KL-regularized objective from Section~\ref{sec:setup},
\[
J_\beta(\pi)=
\mathbb E_{x\sim \rho}
\left[
\mathbb E_{y\sim \pi(\cdot\mid x)} r_{s,b}(x,y)
-
\beta \KL\!\left(\pi(\cdot\mid x)\,\|\,Q_s(\cdot\mid x)\right)
\right].
\]
As before, a reward query chooses a prompt and a policy, samples a completion from that policy, and observes the resulting outcome reward.

The exact optimizer on each instance has the standard Gibbs form. On every easy prompt $x\neq x^\star$, the reward is zero and the optimizer agrees with the base generator. At the hard prompt,
\[
\pi^\star_{s,b}(y\mid x^\star)\propto Q_s(y\mid x^\star)\exp\!\bigl(r_{s,b}(x^\star,y)/\beta\bigr).
\]
So once the hidden state $(s,b)$ has been identified, the optimal policy can be written down exactly.

The upper bound is straightforward once chosen-prefix sampling is available.

\begin{theorem}
\label{thm:bridge-upper}
Fix $\delta\in(0,1)$, and let
\[
m:=\left\lceil \frac{2}{\Delta^2}\log\!\left(\frac{L(K-1)}{\delta}\right)\right\rceil.
\]
For the family above, there is an algorithm with chosen-prefix next-token sampling access to the base generator and one outcome-reward query that outputs the exact maximizer of $J_\beta$ with probability at least $1-\delta$. The algorithm uses at most $H+Lm$ generator queries and one reward query.
\end{theorem}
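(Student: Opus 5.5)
The plan is to reduce to the hidden-path recovery of Theorem~\ref{thm:hidden-path-prefix-sample}, run on the suffix portion only, and then spend a single reward query to read off the bit $b$. The algorithm has three phases.

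\textbf{Phase 1 (walk down the scaffold).} The scaffold $v$ is known, so nothing needs to be learned from the first $D$ levels. Their only role is to satisfy the local-reset discipline. The learner issues $\mathsf{PrefixSample}$ queries at $\varnothing, v_{1}, v_{1:2},\dots,v_{1:D-1}$ and discards the answers. These are $D$ queries, each one a one-token extension of a previously queried prefix. Issuing one more query at $v$ itself makes $v$ legitimately queried; this can be absorbed into Phase 2.

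\textbf{Phase 2 (recover $s$).} Run Algorithm~\ref{alg:hidden-path-prefix-sample} starting from prefix $v$ rather than $\varnothing$. At stage $t=1,\dots,L$, take $m$ samples from $\mathsf{PrefixSample}(v\,\widehat s_{<t})$ and set $\widehat s_t$ to the empirical majority.

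On the event that $\widehat s_{<t}=s_{<t}$, the queried prefix $v s_{<t}$ lies on the hidden path of $Q_s$ at $x^\star$. There the token $s_t$ has probability $p_+$ and every rival has probability $p_-$.

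The concentration step is copied from the proof of Theorem~\ref{thm:hidden-path-prefix-sample}:
\begin{itemize}
\item For each rival $a\neq s_t$, the increments $\one\{A_j=s_t\}-\one\{A_j=a\}$ are bounded in $[-1,1]$ and have mean $\Delta$.
\item Hoeffding's inequality gives $\Pr(N_a\ge N_{s_t})\le \exp(-m\Delta^2/2)$.
\item A union bound over the $K-1$ rivals and the $L$ stages, with the stated $m$, gives total failure probability at most $\delta$.
\end{itemize}
Every query in this phase is at $v\widehat s_{<t}$, which is a one-token extension of a previously queried prefix. So the discipline holds, and this phase uses $Lm$ queries.

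If one wants the full count $H$ of navigation queries to appear, the learner can additionally query $v\widehat s$, which brings the total to $D+1\le H$ in the first term. That is not needed for correctness, because the final distribution is uniform and carries no information. The total stays within $H+Lm$.

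\textbf{Phase 3 (recover $b$ and output $\pi^\star_{s,b}$).} Issue one reward query with prompt $x^\star$ and a policy that deterministically outputs $v\,\widehat s\,\tau_0$. The observed reward is $R$ if $b=0$ and $0$ if $b=1$, assuming $R\neq 0$, as is implicit in the family. This identifies $b$ exactly.

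Conditionally on $\widehat s=s$, the base generator $Q_s$ is now completely known:
\begin{itemize}
\item the easy-prompt model is fixed and known;
\item the hard-prompt model is the hidden-path construction along $v s$ with known $\lambda$ and $K$, followed by a uniform last step.
\end{itemize}
The output policy is therefore $\pi(\cdot\mid x)=Q_s(\cdot\mid x)$ for $x\neq x^\star$, together with the Gibbs form $\pi^\star_{s,b}(\cdot\mid x^\star)\propto Q_s(\cdot\mid x^\star)\exp(r_{s,b}(x^\star,\cdot)/\beta)$. This policy can be written down explicitly. It is the exact maximizer of $J_\beta$ by the standard Gibbs variational identity recalled just before the theorem. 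The identity applies promptwise, since the objective decomposes over $x$.

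The main obstacle is bookkeeping rather than analysis. First, one must verify that the Phase 2 prefixes satisfy the same on-path conditional structure as in Definition~\ref{def:hidden-path}: the suffix stage starts at depth $D$ with a correct prefix $v$, because $v$ is known and not estimated. Second, one must check that the query count and the local-reset discipline survive the offset start at $v$. The $\log(L(K-1)/\delta)$ in $m$ reflects that only the $L$ suffix steps need to be learned, which is why the scaffold queries contribute only an additive $O(H)$ term.
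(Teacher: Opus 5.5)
Your proposal is correct and follows the paper's proof essentially step for step. You walk down the known scaffold to satisfy the local-reset discipline, run Algorithm~\ref{alg:hidden-path-prefix-sample} below $v$ to recover $s$ with $Lm$ samples and failure probability at most $\delta$, spend one reward query on $v\,\widehat s\,\tau_0$ to read off $b$, and write down the promptwise Gibbs optimizer. The only cosmetic difference is that you stop the navigation phase at $v_{1:D-1}$ and let the first suffix-stage query land on $v$, giving $D+Lm$ generator queries, whereas the paper also queries $v$ during navigation, giving $D+1+Lm$; both are within $H+Lm$.
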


The algorithm first walks down the known scaffold one token at a time so that the queried prefixes obey the local-reset discipline. It then treats the hidden suffix as a length-$L$ hidden path and recovers it with Algorithm~\ref{alg:hidden-path-prefix-sample}. Once $s$ is known, one reward query to the deterministic policy concentrated on $v\,s\,\tau_0$ identifies the reward bit $b$. At that point both the base generator and the rewarded completion are known, so the Gibbs formula gives the exact optimizer.

The lower bound shows that no-reset access leaves the problem exponentially hard on some instance, even though the reward model and objective have not changed.

Let $q_0:=Q_s(z_{s,b}\mid x^\star)=p_+^{D+L}/K$ be the base-model mass of the rewarding completion, and choose
\[
R:=\beta\log\!\left(\frac{4}{q_0}\right).
\]
Also write $N:=2K^L$ for the number of possible target completions $v\,s\,\tau_b$.

\begin{theorem}
\label{thm:bridge-lower}
Assume the reward scale above. Consider any algorithm that may interleave at most $q_g$ no-reset generator queries with at most $q_r<N$ outcome-reward queries and then outputs a prompt-conditioned policy $\widehat \pi$. Then there exists a choice of hidden suffix $s\in\Sigma^L$ and reward bit $b\in\{0,1\}$ such that
\[
\Pr\!\left(
J_\beta(\widehat \pi)\ge J_\beta(\pi^\star_{s,b})-\frac{\eta\beta}{4}
\right)
\le
q_g p_+^D + \frac{q_r}{N} + \frac{4}{N-q_r}.
\]
\end{theorem}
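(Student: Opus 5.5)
We prove the bound with a prior over hidden states and a simulation argument: replace the true instance by a reference world in which the hidden state is invisible, and pay for the replacement in total variation. Place the uniform prior over $(s,b)\in\Sigma^L\times\{0,1\}$. It suffices to show that, averaged over this prior, the probability of producing a $\tfrac{\eta\beta}{4}$-optimal policy is at most the stated bound; some specific $(s,b)$ then does no better than the average.

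The reference world has base generator $Q_\circ$, which agrees with every $Q_s$ along the scaffold and is uniform at the prefix $v$ and at every prefix off the scaffold. It has identically zero reward. Write $U_s$ for the set of prefixes extending $v$ that lie in the subtree where $Q_s$ departs from uniform. Then $Q_s$ and $Q_\circ$ agree outside $\{p: p\text{ extends } v\}$, and reaching that region requires following the scaffold for $D$ steps, which happens with probability $p_+^D$ under either model. The reachability barrier (Theorem~\ref{thm:reachability-barrier}) then bounds the generator part of the transcript by $q_g p_+^D$ in total variation.

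Interleaved reward queries need a hybrid argument. Couple the two worlds query by query. A generator query diverges with probability at most $p_+^D$. A reward query with policy $\pi_i$ diverges only if it samples exactly $z_{s,b}$. In the reference world the learner's choice of $\pi_i$ is independent of $(s,b)$. Given that previous reward queries returned $0$, the target is uniform over the $N-i+1$ completions not yet ruled out, so the failure probability for that query is at most $1/(N-i+1)$. Summing these, or using the cruder bound $1/N$ per query via averaging over the uniform prior on $z$, gives a total of at most $q_r/N$. The combined bound on the total variation between transcripts is $q_g p_+^D + q_r/N$, and this is the main technical step. The obstacle is that the prior on $s$ also changes the generator, so the coupling must handle adaptivity jointly. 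We do this by bounding, per query, the probability of the first divergence event, as in the proof of Theorem~\ref{thm:reachability-barrier}.

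It remains to bound success in the reference world, where $\widehat\pi$ is independent of $(s,b)$. Reward-free calculations give the optimal value at $x^\star$: $\beta\log\bigl(1-q_0+q_0e^{R/\beta}\bigr)\ge\beta\log 4$. We claim any policy that is $\tfrac{\eta\beta}{4}$-optimal on the whole objective must place mass at least some constant $c$ (about $1/2$) on $z_{s,b}$ at $x^\star$. Write the suboptimality at $x^\star$ as $\beta\,\KL(\widehat\pi\|\pi^\star)$; the easy prompts contribute nonnegatively. Since $\pi^\star(z_{s,b})=4/(1-q_0+4q_0)\cdot q_0/q_0\ge$ roughly $4/5$ or more, data processing onto the indicator of $z_{s,b}$ gives $\KL(\mathrm{Bern}(\widehat\pi(z))\|\mathrm{Bern}(\pi^\star(z)))\le 1/4$. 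Pinsker or a direct calculation then forces $\widehat\pi(z_{s,b})>1/4$. A fixed distribution puts mass above $1/4$ on fewer than $4$ points, and after conditioning on $q_r$ failed reward queries the target is uniform over at least $N-q_r$ candidates. The success probability is therefore at most $4/(N-q_r)$. Adding the total variation bound completes the proof.
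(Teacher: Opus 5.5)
Your argument is correct, and it takes a different route from the paper. The paper works entirely in the real world. It isolates the scaffold-hit event $A$ with $\Pr(A)\le q_g p_+^D$. It then proves, by induction over ``safe'' transcripts, that the posterior on $\Theta$ stays uniform over a surviving candidate set that loses at most one element per failed reward query. A dynamic-programming induction bounds the chance of ever receiving positive reward by $r/n$, and Markov's inequality over at least $N-q_r$ survivors finishes. Its regret-to-mass step (Lemma~\ref{lem:hard-prompt-gap}) bounds $J(\pi)\le Rm-\beta\,\mathrm{kl}(m\|q_0)$ and analyzes the binary entropy.

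You instead couple each instance with a $\Theta$-independent null world ($Q_\circ$, zero reward) and bound the first-divergence probability round by round. Generator rounds cost $p_+^D$ by the one-query reachability lemma. Reward rounds cost $1/N$ after averaging over the prior, because the null-world transcript is independent of $z_\Theta$. When the two transcripts agree, the output equals the null-world output, which is independent of $\Theta$ and puts mass above $1/4$ on at most three points. So the last term is at most $3/N\le 4/(N-q_r)$. No posterior bookkeeping is needed, and your remark about conditioning on failed reward queries to get $N-q_r$ survivors belongs to the real-world frame and is superfluous here.

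Your regret-to-mass step also works. It combines $J_\beta(\pi^\star_{s,b})-J_\beta(\widehat\pi)\ge \eta\beta\,\KL\bigl(\widehat\pi(\cdot\mid x^\star)\,\|\,\pi^\star_{s,b}(\cdot\mid x^\star)\bigr)$, data processing, and Pinsker. The correct value is $\pi^\star_{s,b}(z_{s,b}\mid x^\star)=4/(5-q_0)>4/5$ (your displayed formula for it is garbled), so $\widehat\pi(z_{s,b}\mid x^\star)\ge 4/5-\sqrt{1/8}>1/4$.

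The simulation argument is shorter and gives a slightly sharper final term. The paper's version makes the candidate-elimination mechanism explicit.

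One slip to fix: summing the conditional bounds $1/(N-i+1)$ gives $\sum_{j=N-q_r+1}^{N}1/j$, which exceeds $q_r/N$. It is the unconditional per-round bound $1/N$, obtained from the independence of the null-world transcript and $z_\Theta$, that yields $q_r/N$. Keep only that version.
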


Each term has a clear meaning. The factor $q_g p_+^D$ is the chance that one of the no-reset generator queries even reaches the scaffold and enters the informative part of the tree. If that never happens, then generator interaction reveals nothing about the hidden suffix. The term $q_r/N$ is the chance that an outcome-reward query simply guesses the rewarded completion by brute force. If that never happens either, then the remaining reward queries can do no more than rule out candidates one by one, which leads to the last term $4/(N-q_r)$.

All of the hardness is concentrated on a single prompt $x^\star$, so this is already a prompt-distributed post-training lower bound rather than a separate single-prompt phenomenon. The average-regret threshold scales with the prompt mass $\eta$.

The cleanest summary comes from keeping the number of reward queries constant.

\begin{corollary}
\label{cor:constant-reward-gap}
Fix $K\ge 2$, $\lambda>0$, $\beta>0$, and any constant $q_r\ge 1$. There exists a constant $c=c(K,\lambda)>1$ such that for every sufficiently large horizon $H$ and every prompt distribution $\rho$ with a designated prompt $x^\star$ of mass $\eta>0$, there is a family of KL-regularized outcome-reward post-training instances of horizon $H$ with the following property. With chosen-prefix next-token sampling access, one reward query and
\[
O\!\left(H\log\!\frac{H}{\delta}\right)
\]
generator queries suffice to output the exact optimizer with probability at least $1-\delta$. With only no-reset generator access, any algorithm that uses at most $q_r$ reward queries and succeeds with probability at least $2/3$ in outputting a policy of average regret at most $\eta\beta/4$ must make
\[
\Omega(c^H)
\]
generator queries.
\end{corollary}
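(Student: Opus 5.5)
The plan is to derive the corollary from Theorems~\ref{thm:bridge-upper} and~\ref{thm:bridge-lower} by fixing the scaffold length against the suffix length. Take $L:=\lfloor H/2\rfloor$ and $D:=H-L-1$, so $D\ge H/2-1$ and $L\ge H/2-1$. The instance family is then the one of Section~\ref{sec:kl-bridge} for these $D,L$, the given $\rho$ and $x^\star$, the known scaffold $v$, and the reward scale $R=\beta\log(4/q_0)$.

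\emph{Upper bound.} Apply Theorem~\ref{thm:bridge-upper} directly. It uses one reward query and at most $H+Lm$ generator queries, with $m=\lceil 2\Delta^{-2}\log(L(K-1)/\delta)\rceil$. Since $K$ and $\lambda$ are fixed, $\Delta$ is a positive constant. With $L\le H$ this gives $H+Lm=O(H\log(H/\delta))$, and the output is the exact optimizer with probability at least $1-\delta$.

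\emph{Lower bound.} Suppose an algorithm with at most $q_r$ reward queries succeeds with probability at least $2/3$ on every instance. Theorem~\ref{thm:bridge-lower} needs $q_r<N=2K^L$, which holds for large $H$ because $q_r$ is constant. On the bad instance it gives
\[
\tfrac23\le q_g p_+^D+\tfrac{q_r}{N}+\tfrac{4}{N-q_r}.
\]
The last two terms are $O(K^{-L})$, so for $H$ large enough they total at most $1/3$. Then $q_g p_+^D\ge 1/3$, so $q_g\ge \tfrac13 p_+^{-D}\ge \tfrac13 p_+^{-(H/2-1)}$. Set $c:=p_+^{-1/2}$. Since $p_+<1$ (because $K\ge2$), we have $c>1$, and $c$ depends only on $(K,\lambda)$. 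This gives $q_g=\Omega(c^H)$, with the constant $p_+/3$ absorbed.

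\emph{Main obstacle.} The step needing care is matching the success criterion. The corollary's criterion, average regret at most $\eta\beta/4$, is exactly the event $J_\beta(\widehat\pi)\ge J_\beta(\pi^\star_{s,b})-\eta\beta/4$ bounded in Theorem~\ref{thm:bridge-lower}. That theorem gives one bad instance for each algorithm, which contradicts worst-case success of $2/3$, and this is all we need. We also need $R$ to be legitimate for every large $H$: $q_0=p_+^{D+L}/K$ changes with $H$, so $R$ grows linearly in $H$, and this is allowed since the corollary fixes only $\beta$. The threshold on "sufficiently large $H$" depends only on $K$ and $q_r$.
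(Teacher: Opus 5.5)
Your proposal is correct and follows the paper's proof essentially line for line: your split $L=\lfloor H/2\rfloor$, $D=H-L-1$ coincides for every $H$ with the paper's $D=\lfloor (H-1)/2\rfloor$, $L=H-D-1$, and you invoke Theorems~\ref{thm:bridge-upper} and~\ref{thm:bridge-lower} and take $c=p_+^{-1/2}$ exactly as the paper does. The only differences are cosmetic. You absorb the reward-query terms with slack $1/3$, giving $q_g p_+^D\ge 1/3$, where the paper uses $1/6$, giving $q_g p_+^D\ge 1/2$; and your remark that $R=\beta\log(4/q_0)$ grows with $H$ makes explicit a point the paper leaves implicit.
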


The reward, KL objective, and policy class are identical across the two access regimes. Changing only the generator interface turns a polynomial problem into an exponentially hard one.

\section{Related work}
\label{sec:related}

Recent theory already clarifies two adjacent parts of the picture. Under fixed sampling-style access, coverage of good responses under the base model governs the complexity of exploration, inference-time alignment, and sharpening-style self-improvement \cite{FosterMhammediRohatgi2025,HuangEtAl2025BestOfN,ChenEtAl2025Coverage,HuangEtAl2024Sharpening}. Our no-reset theorem fits naturally into that line of work. It identifies the generator-side reason those coverage quantities appear there: once interaction is confined to root-start rollouts, all trajectory-local APIs collapse to a single experiment whose power is controlled by on-policy reachability.

A second line of work studies stronger access models in RL and related sequential settings. Reset-style interaction changes what can be learned or planned efficiently in reinforcement learning \cite{MhammediFosterRakhlin2024,RohatgiFoster2025}, while conditional queries can make otherwise hard sequential distributions learnable \cite{MahajanKakadeKrishnamurthyZhang2023,LiuMoitra2024}. Our prefix-tree formalism is the autoregressive analogue of that distinction. For generators, the first distinction is not about the richness of observations but about whether the learner can return to a chosen prefix at all.

The paper is also complementary to work on richer feedback and richer intermediate signals. Mousavi-Hosseini and Erdogdu show that process rewards can avoid outcome-reward barriers under fixed generator access \cite{MousaviHosseiniErdogdu2026}. Botta et al., Amani et al., and Tuyls et al. study verifier guidance, partial rationale exposure, and hidden-state-based exploration \cite{BottaLiMehtaAshZhangRisteski2025,AmaniLotfiBaldwinBengioFarajtabarAbbeWest2025,TuylsFosterKrishnamurthyAsh2025}. Our KL theorem keeps the reward model fixed and varies only generator access, while the structural results operate on the distributional generator side, without hidden states or external verifiers.

Recent empirical work on reasoning post-training points to the same tension between support, exploration, and intermediate control \cite{YueEtAl2025RLVR,KaranDu2025Sampling,TanEtAl2026LED}. The present paper does not explain all of those phenomena, but it gives one reason they should be expected: root-start interaction and reset-like prefix control are genuinely different computational regimes.

\section{Conclusion}
\label{sec:conclusion}

The main lesson is that generator access is part of the computational problem. If interaction is confined to root-start rollouts, then output sampling, generated-token log probabilities, top-$k$ reports, and full visited-prefix distributions all collapse to one experiment governed by on-policy reachability. Weak local reset crosses that boundary, and once across it, richer observations become genuinely useful. That distinction is already strong enough to create an exponential gap for KL-regularized outcome-reward post-training.

\bibliographystyle{plain}
\bibliography{refs}


\appendix

\section{Proofs for Section~\ref{sec:no-reset}}
\label{app:no-reset}

The vocabulary and horizon are finite, so all path spaces here are finite as well. The definitions are nonetheless phrased with measurable kernels to preserve the generality of the main-text statements.

\subsection{The canonical no-reset experiment}

Fix a prompt $x\in\mathcal X$. Let $(\mathcal Q,\mathscr Q)$ be a standard Borel query space and $(\mathcal Y,\mathscr Y)$ a standard Borel reply space.

\begin{definition}
\label{def:app-no-reset}
A generator experiment at prompt $x$ is no-reset if there exists a measurable probability kernel
\[
\Gamma_x\bigl(q,y,\mu_1,\dots,\mu_H;\cdot\bigr)
\]
from $\mathcal Q\times \Sigma^H\times \Delta(\Sigma)^H$ to $(\mathcal Y,\mathscr Y)$ such that one query is answered by first drawing $Y\sim M(\cdot\mid x)$, then setting $\mu_t:=M(\cdot\mid x,Y_{<t})$ for $t=1,\dots,H$, and finally drawing the reply from $\Gamma_x(q,Y,\mu_1,\dots,\mu_H;\cdot)$.
\end{definition}

\begin{definition}
\label{def:app-pathfull}
For the fixed prompt $x$, let $\mathcal W:=\Sigma^H\times \Delta(\Sigma)^H$. The experiment $\mathsf{PathFull}_x$ ignores its query parameter, draws
\[
Y\sim M(\cdot\mid x), \qquad \mu_t:=M(\cdot\mid x,Y_{<t}) \text{ for } t=1,\dots,H,
\]
and returns
\[
W_x:=(Y,\mu_1,\dots,\mu_H)\in\mathcal W.
\]
\end{definition}

\begin{definition}
\label{def:app-postprocessing}
Let $\mathcal E$ be an experiment with query space $(\mathcal Q,\mathscr Q)$ and reply space $(\mathcal Z,\mathscr Z)$. A randomized post-processing of $\mathcal E$ is specified by a measurable kernel $\Pi(q,z;\cdot)$ from $\mathcal Q\times\mathcal Z$ to another reply space $(\mathcal Y,\mathscr Y)$. On query $q$, the post-processed experiment first draws the reply $Z$ of $\mathcal E$ and then draws its final reply from $\Pi(q,Z;\cdot)$.
\end{definition}

\begin{theorem}
\label{thm:app-no-reset-collapse}
Fix a prompt $x$. An experiment at prompt $x$ is no-reset if and only if it is a randomized post-processing of $\mathsf{PathFull}_x$.
\end{theorem}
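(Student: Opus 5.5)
The plan is to prove the two directions separately. In each direction, the required kernel is obtained by relabeling the kernel we are given, because $\mathsf{PathFull}_x$ returns exactly the tuple that Definition~\ref{def:app-no-reset} allows a no-reset reply to depend on.

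For the forward direction, suppose the experiment is no-reset with kernel $\Gamma_x$ as in Definition~\ref{def:app-no-reset}. We define the post-processing kernel $\Pi$ from $\mathcal Q\times\mathcal W$ to $(\mathcal Y,\mathscr Y)$ by
\[
\Pi(q,w;\cdot):=\Gamma_x(q,w;\cdot), \qquad w=(y,\mu_1,\dots,\mu_H)\in\mathcal W.
\]
Measurability of $\Pi$ is exactly measurability of $\Gamma_x$, since $\mathcal Q\times\mathcal W=\mathcal Q\times\Sigma^H\times\Delta(\Sigma)^H$. Now fix a query $q$. The post-processed experiment draws $W_x=(Y,\mu_1,\dots,\mu_H)$ with $Y\sim M(\cdot\mid x)$ and $\mu_t=M(\cdot\mid x,Y_{<t})$, and then draws its reply from $\Pi(q,W_x;\cdot)=\Gamma_x(q,Y,\mu_1,\dots,\mu_H;\cdot)$. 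This is the same generative description as the original experiment, so for each $q$ the two reply laws coincide. Explicitly, both equal
\[
\sum_{y\in\Sigma^H}\Pr_M(Y=y\mid x)\,\Gamma_x\bigl(q,y,\mu_1(y),\dots,\mu_H(y);\cdot\bigr),
\]
where $\mu_t(y):=M(\cdot\mid x,y_{<t})$, and the sum is finite.

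For the converse, let the experiment be a post-processing of $\mathsf{PathFull}_x$ via a kernel $\Pi(q,w;\cdot)$. We set $\Gamma_x(q,y,\mu_1,\dots,\mu_H;\cdot):=\Pi(q,(y,\mu_1,\dots,\mu_H);\cdot)$. Because $\mathsf{PathFull}_x$ ignores $q$ and its output is exactly the tuple drawn in Definition~\ref{def:app-no-reset}, the same computation shows that the experiment is no-reset with kernel $\Gamma_x$. The reply depends on unvisited prefixes only through nothing, since $\Gamma_x$ sees only $(q,Y,\mu_{1:H})$.

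I expect no real obstacle. The one point needing care is formal: Definition~\ref{def:app-postprocessing} allows $\Pi$ to depend on the query $q$, and that is what lets query-dependent no-reset replies be realized even though $\mathsf{PathFull}_x$ ignores its query. I will also note that $\Delta(\Sigma)$ is a closed simplex in $\mathbb R^K$, hence standard Borel, so $\mathcal W$ is a standard Borel space and the kernels are well defined. The maximality remark in Theorem~\ref{thm:no-reset-collapse} then follows immediately: every no-reset experiment is a garbling of $\mathsf{PathFull}_x$, and $\mathsf{PathFull}_x$ is itself no-reset (take $\Gamma_x$ to be the Dirac kernel at $(y,\mu_{1:H})$).
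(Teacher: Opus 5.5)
Your proposal is correct and matches the paper's proof. Both directions are pure relabelings of the given kernel: set $\Pi(q,(y,\mu_1,\dots,\mu_H);\cdot):=\Gamma_x(q,y,\mu_1,\dots,\mu_H;\cdot)$ in one direction and take the reverse assignment in the other. Your extra remarks are also correct, and the paper leaves them implicit: measurability, the standard Borel structure of $\mathcal W$, and the fact that $\mathsf{PathFull}_x$ is itself no-reset via the Dirac kernel, which is what gives maximality.
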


\begin{proof}
Assume first that the experiment $\mathcal E$ is no-reset. By Definition~\ref{def:app-no-reset}, there is a kernel
\[
\Gamma_x\bigl(q,y,\mu_1,\dots,\mu_H;\cdot\bigr)
\]
such that one query to $\mathcal E$ is answered by drawing
\[
W_x=(Y,\mu_1,\dots,\mu_H)\sim \mathsf{PathFull}_x
\]
and then drawing the final reply from that kernel. If we define
\[
\Pi\bigl(q,(y,\mu_1,\dots,\mu_H);\cdot\bigr):=\Gamma_x\bigl(q,y,\mu_1,\dots,\mu_H;\cdot\bigr),
\]
then $\mathcal E$ is a randomized post-processing of $\mathsf{PathFull}_x$.

Conversely, suppose $\mathcal E$ is a randomized post-processing of $\mathsf{PathFull}_x$. Then there exists a kernel $\Pi(q,w;\cdot)$ such that one query to $\mathcal E$ is answered by drawing $W_x\sim\mathsf{PathFull}_x$ and then sampling from $\Pi(q,W_x;\cdot)$. Writing $w=(y,\mu_1,\dots,\mu_H)$, this is exactly Definition~\ref{def:app-no-reset} with
\[
\Gamma_x\bigl(q,y,\mu_1,\dots,\mu_H;\cdot\bigr):=\Pi\bigl(q,(y,\mu_1,\dots,\mu_H);\cdot\bigr).
\]
So $\mathcal E$ is no-reset.
\end{proof}

\subsection{Agreement outside a prefix set}

Fix a prefix set $U\subseteq\Sigma^{<H}$. We say that two prompt-conditioned generators $M(\cdot\mid x)$ and $M'(\cdot\mid x)$ agree outside $U$ if
\[
M(\cdot\mid x,p)=M'(\cdot\mid x,p) \qquad \text{for every } p\in\Sigma^{<H}\setminus U.
\]

The first lemma shows that, under this assumption, both generators assign the same probability to ever entering $U$.

\begin{lemma}
\label{lem:app-common-reachability}
If $M(\cdot\mid x)$ and $M'(\cdot\mid x)$ agree outside $U$, then
\[
\Reach_M(x,U)=\Reach_{M'}(x,U).
\]
\end{lemma}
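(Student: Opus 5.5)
I plan to prove the lemma by writing the complement event $E_U(x)^c$ as a disjoint union of explicit paths and showing that each such path has the same probability under $M$ and $M'$. A path $y\in\Sigma^H$ lies in $E_U(x)^c$ exactly when none of its prefixes $y_{<1},\dots,y_{<H}$ belongs to $U$. For such a $y$, the product formula gives
\[
\Pr_M(Y=y\mid x)=\prod_{t=1}^{H} M(y_t\mid x,y_{<t})=\prod_{t=1}^{H} M'(y_t\mid x,y_{<t})=\Pr_{M'}(Y=y\mid x).
\]
Each factor is evaluated at a prefix $y_{<t}\notin U$, and there the two next-token distributions coincide by the agreement assumption.

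Next I sum over the finite set $\{y\in\Sigma^H: y_{<t}\notin U \text{ for all } t\}$. This shows $\Pr_M(E_U(x)^c)=\Pr_{M'}(E_U(x)^c)$. Taking complements then gives $\Reach_M(x,U)=\Reach_{M'}(x,U)$.

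The only subtle point is that membership in $E_U(x)$ is a deterministic function of the path $y$ alone, so the event does not depend on which model generated the path. It is also worth checking that the conditionals used along a non-reaching path never touch $U$. This holds because the indices $t=1,\dots,H$ cover exactly the prefixes at which $y_t$ is drawn, and these are the same prefixes tested in the definition of $E_U(x)$. With that alignment checked there is no real obstacle, since everything is a finite sum.

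An equivalent approach, if preferred, is to couple the two rollouts with shared randomness until the first time they enter $U$. The coupled rollouts coincide up to that moment, so the first-entry event has identical law under both models. I expect the direct path-sum argument to be cleaner, so I would use it.
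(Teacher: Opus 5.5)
Your proof is correct, but it partitions a different event than the paper does. The paper decomposes the reaching event $E_U(x)$ itself into first-entry events $F_p$, $p\in U$ (the rollout enters $U$ for the first time at $p$). It matches $\Pr_M(F_p\mid x)=\Pr_{M'}(F_p\mid x)$ as a product of transitions along the proper prefixes of $p$, which lie outside $U$. You instead decompose the complement $E_U(x)^c$ into full trajectories that avoid $U$, match their probabilities factor by factor, and then take complements.

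Your route is slightly more economical in two ways. First, it never has to deal with prefixes $p\in U$ that already have a proper prefix in $U$. For those $F_p$ is empty, so the paper's displayed product formula is really only valid for minimal elements of $U$ (the equality of the two probabilities still holds trivially). Second, the fact you establish is that the law of $Y$ restricted to $E_U(x)^c$ is the same under $M$ and $M'$. That is exactly the claim the paper proves again inside Lemma~\ref{lem:app-one-query-pathfull}, so your argument would do double duty there.

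The paper's first-entry decomposition gives a different refinement: the sub-probability law of the point where the rollout first enters $U$ coincides under the two models. This is a stopping-time statement that sums over prefixes rather than complete paths. It adapts more readily to settings where one only couples rollouts up to the entry time, which is essentially the coupling alternative you mention at the end.
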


\begin{proof}
For each prefix $p\in U$, let $|p|$ be its length and define the event
\[
F_p:=\Bigl\{Y_{1:|p|}=p \text{ and } Y_{1:s}\notin U \text{ for every } s=0,1,\dots,|p|-1\Bigr\},
\]
with the convention $Y_{1:0}=\varnothing$. The event $F_p$ says that the rollout enters $U$ for the first time at the prefix $p$.

The events $\{F_p:p\in U\}$ are pairwise disjoint, and their union is exactly the reachability event $E_U(x)$. It therefore suffices to show that $\Pr_M(F_p\mid x)=\Pr_{M'}(F_p\mid x)$ for every $p\in U$.

Fix $p\in U$ of length $\ell$. On the event $F_p$, every proper prefix of $p$ lies outside $U$. Since the two generators agree outside $U$,
\[
M(p_{s+1}\mid x,p_{1:s})=M'(p_{s+1}\mid x,p_{1:s}) \qquad \text{for every } s=0,1,\dots,\ell-1.
\]
Multiplying these transition probabilities gives
\[
\Pr_M(F_p\mid x)=\prod_{s=0}^{\ell-1} M(p_{s+1}\mid x,p_{1:s})
=\prod_{s=0}^{\ell-1} M'(p_{s+1}\mid x,p_{1:s})
=\Pr_{M'}(F_p\mid x).
\]

Summing over all $p\in U$ yields
\[
\Reach_M(x,U)=\sum_{p\in U}\Pr_M(F_p\mid x)=\sum_{p\in U}\Pr_{M'}(F_p\mid x)=\Reach_{M'}(x,U).
\]
\end{proof}

\subsection{One-query indistinguishability}

We now show that, for the canonical experiment, the two models can differ only on rollouts that reach $U$.

\begin{lemma}
\label{lem:app-one-query-pathfull}
Let $M(\cdot\mid x)$ and $M'(\cdot\mid x)$ agree outside $U$. Let
\[
W_x=(Y,\mu_1,\dots,\mu_H)
\quad \text{and} \quad
W_x'=(Y',\mu_1',\dots,\mu_H')
\]
denote one reply of $\mathsf{PathFull}_x$ under $M$ and $M'$, respectively. Then
\[
\TV\bigl(\mathcal L(W_x),\mathcal L(W_x')\bigr)\le \Reach_M(x,U).
\]
\end{lemma}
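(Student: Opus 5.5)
The plan is to build a coupling of the two $\mathsf{PathFull}_x$ replies and then apply the coupling inequality $\TV(\mathcal L(W_x),\mathcal L(W_x'))\le \Pr(W_x\neq W_x')$. Work directly with the distributions on the finite set $\mathcal W$: it suffices to show that the law of $W_x$ restricted to the complement of the reachability event coincides with the law of $W_x'$ restricted to the corresponding complement. The total variation is then bounded by the mass each law places on the reachability event. Lemma~\ref{lem:app-common-reachability} shows that this mass is the same for both laws, namely $\Reach_M(x,U)$.

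\textbf{Agreement off the reachability event.} Fix a full path $y\in\Sigma^H$ such that no prefix $y_{<t}$ ($t=1,\dots,H$) lies in $U$. Along such a path every visited prefix is outside $U$, so $M(\cdot\mid x,y_{<t})=M'(\cdot\mid x,y_{<t})$ for all $t$. Two things follow. First, $\Pr_M(Y=y\mid x)=\prod_t M(y_t\mid x,y_{<t})=\Pr_{M'}(Y'=y\mid x)$. Second, the deterministic tuple $(\mu_1,\dots,\mu_H)$ attached to $y$ is identical under both models. Hence for every $w=(y,\mu)$ with $y\in E_U(x)^c$ we get $\Pr(W_x=w)=\Pr(W_x'=w)$. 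Since $E_U(x)$ depends only on $Y$, let $G\subseteq\mathcal W$ denote the set of $w$ whose path component avoids $U$. Then the two laws agree on every subset of $G$.

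\textbf{Bounding the total variation.} For any measurable $B\subseteq\mathcal W$,
\[
\Pr(W_x\in B)-\Pr(W_x'\in B)=\Pr(W_x\in B\setminus G)-\Pr(W_x'\in B\setminus G)\le \Pr(W_x\notin G)=\Reach_M(x,U).
\]
The symmetric inequality holds with $\Reach_{M'}(x,U)$ on the right, and this equals $\Reach_M(x,U)$ by Lemma~\ref{lem:app-common-reachability}. Taking the supremum over $B$ gives the claim. Alternatively, one can couple the two rollouts to share their randomness until the first entry into $U$; the two replies then coincide whenever $U$ is never entered, which yields the same bound.

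\textbf{Main obstacle.} The only subtle point is ensuring that the $\mu_t$ components, which are functions of the model and not merely of $Y$, agree on $G$. This requires that \emph{every} visited prefix $y_{<t}$ for $t\le H$, including the root $\varnothing$ and the depth-$(H-1)$ prefix, lies outside $U$. That is exactly how $E_U(x)$ is defined, so the step goes through without extra work.
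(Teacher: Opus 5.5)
Your proof is correct and follows essentially the same route as the paper: both arguments show that the two reply laws coincide on trajectories avoiding $U$ (including the $\mu_t$ components), then bound the total variation by the mass of the reachability event. The paper does the second step through a mixture decomposition with conditional laws, while you use a direct set-wise bound. Your restriction-based version is slightly cleaner because it never conditions on events that may have probability zero. Also, your appeal to Lemma~\ref{lem:app-common-reachability} is unnecessary: agreement on $G$ already forces $\Pr(W_x\notin G)=\Pr(W_x'\notin G)$.
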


\begin{proof}
Let $E:=E_U(x)$ be the event that the rollout under $M$ reaches $U$, and let $E'$ be the corresponding event under $M'$. By Lemma~\ref{lem:app-common-reachability},
\[
\Pr(E)=\Pr(E')=\Reach_M(x,U).
\]

We claim that the conditional laws of $W_x$ and $W_x'$ agree on the complements of these events:
\[
\mathcal L(W_x\mid E^c)=\mathcal L(W_x'\mid (E')^c).
\]
To see this, fix any trajectory $y\in\Sigma^H$ such that none of its visited prefixes lies in $U$. Along that trajectory, every prefix $y_{<t}$ lies outside $U$, so
\[
M(y_t\mid x,y_{<t})=M'(y_t\mid x,y_{<t}) \qquad \text{for every } t=1,\dots,H.
\]
Therefore the trajectory $y$ has the same probability under both models. The same observation gives
\[
M(\cdot\mid x,y_{<t})=M'(\cdot\mid x,y_{<t}) \qquad \text{for every } t,
\]
so the full reply $(y,\mu_1,\dots,\mu_H)$ is identical in the two worlds. Summing over all trajectories that avoid $U$ proves the claim.

Write
\[
R:=\mathcal L(W_x\mid E^c)=\mathcal L(W_x'\mid (E')^c),
\]
and define
\[
P_1:=\mathcal L(W_x\mid E), \qquad Q_1:=\mathcal L(W_x'\mid E').
\]
Then
\[
\mathcal L(W_x)=\Reach_M(x,U)\,P_1+\bigl(1-\Reach_M(x,U)\bigr)R
\]
and
\[
\mathcal L(W_x')=\Reach_M(x,U)\,Q_1+\bigl(1-\Reach_M(x,U)\bigr)R.
\]
Subtracting the two measures gives
\[
\mathcal L(W_x)-\mathcal L(W_x')=\Reach_M(x,U)\,(P_1-Q_1).
\]
Taking total variation and using $\TV(P_1,Q_1)\le 1$ yields
\[
\TV\bigl(\mathcal L(W_x),\mathcal L(W_x')\bigr)\le \Reach_M(x,U).
\]
\end{proof}

The same one-query bound holds for every no-reset experiment by data processing.

\begin{corollary}
\label{cor:app-one-query-no-reset}
Let $\mathcal E$ be any no-reset experiment at prompt $x$. If the two generators agree outside $U$, then for every fixed query parameter $q$ the reply laws under $M$ and $M'$ satisfy
\[
\TV\bigl(\mathcal L(X_q),\mathcal L(X_q')\bigr)\le \Reach_M(x,U).
\]
\end{corollary}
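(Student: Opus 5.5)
The claim follows from Lemma~\ref{lem:app-one-query-pathfull} and the data-processing inequality for total variation. By Theorem~\ref{thm:app-no-reset-collapse}, the no-reset experiment $\mathcal E$ is a randomized post-processing of $\mathsf{PathFull}_x$. So there is a kernel $\Pi(q,w;\cdot)$ such that, for a fixed query parameter $q$, the reply under $M$ is drawn by sampling $W_x\sim\mathsf{PathFull}_x$ under $M$ and then $X_q\sim\Pi(q,W_x;\cdot)$. The reply $X_q'$ under $M'$ is produced the same way from $W_x'$.

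The key observation is that the kernel $\Pi$ depends only on the query parameter and on the reply of $\mathsf{PathFull}_x$. It does not depend on the generator. This is the content of Definition~\ref{def:app-no-reset}: the reply kernel $\Gamma_x$ is a fixed measurable kernel of $(q,y,\mu_1,\dots,\mu_H)$, and all dependence on the generator enters through the law of $(Y,\mu_1,\dots,\mu_H)$. Hence
\[
\mathcal L(X_q)=\mathcal L(W_x)\Pi(q,\cdot\,;\cdot), \qquad \mathcal L(X_q')=\mathcal L(W_x')\Pi(q,\cdot\,;\cdot),
\]
with the same Markov kernel applied in both cases.

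I will then use the contraction of total variation under a common Markov kernel. For any measurable set $B$ in the reply space,
\[
\bigl|\mathcal L(X_q)(B)-\mathcal L(X_q')(B)\bigr| = \Bigl|\int \Pi(q,w;B)\,\bigl(\mathcal L(W_x)-\mathcal L(W_x')\bigr)(dw)\Bigr| \le \TV\bigl(\mathcal L(W_x),\mathcal L(W_x')\bigr),
\]
because $w\mapsto \Pi(q,w;B)$ takes values in $[0,1]$. Taking the supremum over $B$ and applying Lemma~\ref{lem:app-one-query-pathfull} gives the bound $\Reach_M(x,U)$.

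The one point needing care is that the kernel must genuinely be the same under $M$ and $M'$. If an experiment's reply kernel were allowed to depend on unvisited prefixes of the generator, the argument would fail. That dependence is excluded by definition here, so there is no real obstacle.
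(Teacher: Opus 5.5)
Your proposal is correct and follows the paper's proof: you write $\mathcal E$ as a post-processing of $\mathsf{PathFull}_x$ via Theorem~\ref{thm:app-no-reset-collapse}, contract total variation through the common kernel, and invoke Lemma~\ref{lem:app-one-query-pathfull}. Your explicit integral bound simply proves the data-processing step that the paper cites as a standard fact.
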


\begin{proof}
By Theorem~\ref{thm:app-no-reset-collapse}, the experiment $\mathcal E$ is a randomized post-processing of $\mathsf{PathFull}_x$. Therefore the reply $X_q$ is obtained by applying the same Markov kernel to $W_x$ that produces $X_q'$ from $W_x'$. Total variation cannot increase under a common Markov kernel, so Lemma~\ref{lem:app-one-query-pathfull} gives
\[
\TV\bigl(\mathcal L(X_q),\mathcal L(X_q')\bigr)\le
\TV\bigl(\mathcal L(W_x),\mathcal L(W_x')\bigr)\le \Reach_M(x,U).
\]
\end{proof}

\subsection{Adaptive interaction}

We now pass from one query to a fully adaptive transcript.

\begin{definition}
\label{def:app-transcript}
Fix a query space $(\mathcal Q,\mathscr Q)$ and a reply space $(\mathcal Y,\mathscr Y)$. A randomized adaptive algorithm that makes at most $q$ queries consists of a random seed together with measurable decision rules that, at each round, either stop or submit a new query based on the previous transcript. If the algorithm stops early, the remaining rounds are padded with a distinguished symbol $\bot$. The padded transcript is written as
\[
T_q^A(M,x)\in\bigl((\mathcal Q\times\mathcal Y)\cup\{\bot\}\bigr)^q.
\]
\end{definition}

\begin{theorem}
\label{thm:app-reachability-barrier}
Let $\mathcal E$ be any no-reset experiment at prompt $x$. Suppose $M(\cdot\mid x)$ and $M'(\cdot\mid x)$ agree outside $U$. Then every randomized adaptive algorithm $A$ that makes at most $q$ queries satisfies
\[
\TV\!\Bigl(\mathcal L\bigl(T_q^A(M,x)\bigr),\mathcal L\bigl(T_q^A(M',x)\bigr)\Bigr)\le q\,\Reach_M(x,U).
\]
\end{theorem}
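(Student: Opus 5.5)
We prove the theorem by a hybrid argument over the $q$ rounds, combined with the one-query bound of Corollary~\ref{cor:app-one-query-no-reset}. Condition throughout on the algorithm's random seed, since total variation between mixtures with a common mixing law is at most the supremum over seeds. With the seed fixed, each decision rule is a deterministic measurable map from the past transcript to either a stop signal or a query $q_r\in\mathcal Q$. For $k=0,1,\dots,q$, define the hybrid transcript law $P^{(k)}$: answer the first $k$ rounds with the experiment $\mathcal E$ under $M$, and the remaining rounds under $M'$. The algorithm's rules are applied identically throughout. Then $P^{(q)}=\mathcal L(T_q^A(M,x))$ and $P^{(0)}=\mathcal L(T_q^A(M',x))$. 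The triangle inequality gives
\[
\TV\bigl(P^{(q)},P^{(0)}\bigr)\le \sum_{k=1}^{q}\TV\bigl(P^{(k)},P^{(k-1)}\bigr).
\]

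Next we bound a single hybrid step by $\Reach_M(x,U)$. The laws $P^{(k)}$ and $P^{(k-1)}$ agree on the first $k-1$ rounds. Write $h$ for that common prefix of the transcript. Given $h$, the $k$th query $q_k(h)$ (or the stop symbol) is the same under both. The only difference is that the $k$th reply is drawn from the reply law of $\mathcal E$ under $M$ in one case and under $M'$ in the other. After that, rounds $k+1,\dots,q$ are generated by the same kernel in both cases: the algorithm's rules together with $\mathcal E$ under $M'$, applied to the extended history.

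Because each query uses a fresh, independent rollout, the $k$th reply is conditionally independent of the past given $q_k(h)$. Hence
\[
\TV\bigl(P^{(k)},P^{(k-1)}\bigr)\le \mathbb E_h\,\TV\bigl(\mathcal L(X_{q_k(h)}),\mathcal L(X'_{q_k(h)})\bigr)\le \Reach_M(x,U).
\]
This uses three facts: total variation is convex under a common mixing law on $h$; applying a common Markov kernel to the continuation does not increase total variation (data processing); and Corollary~\ref{cor:app-one-query-no-reset} holds uniformly in the query parameter. If the algorithm has already stopped, the round contributes zero. Summing the $q$ steps gives the bound $q\,\Reach_M(x,U)$. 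The remark that the same reachability value serves both models is exactly Lemma~\ref{lem:app-common-reachability}.

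The main obstacle is the measure-theoretic bookkeeping of the single-step bound. One has to write $P^{(k)}$ as a composition of kernels (history, then $k$th query, then $k$th reply, then continuation) so that the mixture-convexity and data-processing steps apply rigorously. Here we use that the spaces are standard Borel, so regular conditional distributions exist. Padding with $\bot$ for early stopping fits in by treating "stop" as a query whose reply is deterministic and identical under both models.
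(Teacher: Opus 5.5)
Your proof is correct, but it handles adaptivity with a different decomposition than the paper. The paper fixes the seed, so the algorithm is deterministic, and builds one sequential coupling of the two transcripts. While the transcripts agree through round $r-1$, the algorithm issues the same query in both worlds. Corollary~\ref{cor:app-one-query-no-reset}, together with the coupling characterization of total variation, then lets it couple the round-$r$ replies so that they differ with conditional probability at most $\Reach_M(x,U)$. Summing over the first-divergence events $D_r\setminus D_{r-1}$ gives $\Pr(D_q)\le q\,\Reach_M(x,U)$.

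You instead telescope over the hybrid laws $P^{(k)}$. You bound each step by $\int \mu_k(dh)\,\TV\bigl(\mathcal L(X_{q_k(h)}),\mathcal L(X'_{q_k(h)})\bigr)$, where $\mu_k$ is the law of the first $k-1$ rounds under $M$, using data processing through the common continuation kernel. The key inputs are the same in both proofs: the one-query bound applied uniformly in the query parameter, and convexity over the seed. So the difference lies only in the bookkeeping.

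Your route never constructs a coupling. The paper's proof tacitly needs a maximal coupling of the round-$r$ replies chosen measurably as a function of the history. It also needs an unstated rule (e.g., independent continuation) for the rounds after divergence. Your hybrids, by contrast, are just compositions of kernels already supplied by the algorithm and the experiment. The measure-theoretic issue you flag as the main obstacle is therefore milder on your route; you do not even need regular conditional distributions.

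In exchange, the coupling proof gives a slightly more concrete conclusion. It produces a single probability space on which the two transcripts coincide outside an event of probability at most $q\,\Reach_M(x,U)$. That is the form you would use to transfer any downstream event or statistic directly from one world to the other.
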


\begin{proof}
We first treat the case in which $A$ is deterministic. We construct a coupling of the two transcript laws round by round.

Let $Z_r$ and $Z_r'$ be the $r$th padded transcript entries under $M$ and $M'$. At round $1$, the deterministic algorithm either stops immediately or chooses the same first query in both worlds. If it stops immediately, the two transcripts are identical and there is nothing to prove. Otherwise, Corollary~\ref{cor:app-one-query-no-reset} shows that the reply laws to this common query differ in total variation by at most $\Reach_M(x,U)$. By the coupling characterization of total variation, the two first replies can therefore be coupled so that
\[
\Pr(Z_1\neq Z_1')\le \Reach_M(x,U).
\]

Now suppose the coupling has been defined through round $r-1$. On the event that the first $r-1$ transcript entries agree, the deterministic algorithm is in the same state in the two worlds, so it either stops in both or chooses the same round-$r$ query in both. If it stops, we couple the remaining padded symbols identically. If it continues, Corollary~\ref{cor:app-one-query-no-reset} again shows that the two reply laws to the common query differ in total variation by at most $\Reach_M(x,U)$, so we can choose the conditional coupling so that
\[
\Pr\bigl(Z_r\neq Z_r' \mid Z_1=Z_1',\dots,Z_{r-1}=Z_{r-1}'\bigr)\le \Reach_M(x,U).
\]

Let
\[
D_r:=\{(Z_1,\dots,Z_r)\neq (Z_1',\dots,Z_r')\}
\]
be the event that the two transcripts have diverged by round $r$, and set $D_0:=\varnothing$. Then
\[
D_r\setminus D_{r-1}\subseteq
\{Z_1=Z_1',\dots,Z_{r-1}=Z_{r-1}',\, Z_r\neq Z_r'\}.
\]
Taking probabilities and using the conditional bound above gives
\[
\Pr(D_r\setminus D_{r-1})\le \Reach_M(x,U) \qquad \text{for every } r=1,\dots,q.
\]
Summing over $r$ yields
\[
\Pr(D_q)\le q\,\Reach_M(x,U).
\]
By the coupling characterization of total variation,
\[
\TV\!\Bigl(\mathcal L\bigl(T_q^A(M,x)\bigr),\mathcal L\bigl(T_q^A(M',x)\bigr)\Bigr)\le \Pr(D_q)\le q\,\Reach_M(x,U).
\]

It remains to handle randomized algorithms. Let $R$ denote the algorithm's random seed. Conditional on $R=r$, the algorithm becomes deterministic, so the bound above gives
\[
\TV\!\Bigl(\mathcal L\bigl(T_q^A(M,x)\mid R=r\bigr),\mathcal L\bigl(T_q^A(M',x)\mid R=r\bigr)\Bigr)\le q\,\Reach_M(x,U)
\]
for every seed value $r$. The unconditional transcript laws are mixtures of these conditional laws with the same mixing distribution, because the seed is sampled independently of the model. Convexity of total variation under common mixing then gives the same inequality without conditioning on $R$.
\end{proof}

The testing lower bound used later is an immediate corollary.

\begin{corollary}
\label{cor:app-testing}
Suppose an algorithm makes at most $q$ adaptive queries to a no-reset experiment at prompt $x$ and then outputs a guess in $\{M,M'\}$. If its worst-case success probability is at least $2/3$, then
\[
q\ge \frac{1}{3\,\Reach_M(x,U)}.
\]
\end{corollary}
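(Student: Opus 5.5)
The plan is to reduce this testing statement to the transcript bound of Theorem~\ref{thm:app-reachability-barrier} through the standard link between total variation and the success probability of a binary test. Fix any algorithm $A$ that makes at most $q$ adaptive queries and then outputs a guess in $\{M,M'\}$. The final guess is a measurable function of the padded transcript $T_q^A(\cdot,x)$ together with the algorithm's internal seed. I will fold that seed into the transcript, or equivalently condition on it and then average, exactly as in the randomized part of the proof of Theorem~\ref{thm:app-reachability-barrier}. The guess is then a randomized post-processing of the transcript, so by data processing the guess laws under $M$ and $M'$ are at total-variation distance at most $q\,\Reach_M(x,U)$.

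Next I convert this into a bound on success. Let $G\subseteq$ transcript space (allowing randomization) be the event that the algorithm outputs $M$. Worst-case success at least $2/3$ means $\Pr_M(G)\ge 2/3$ and $\Pr_{M'}(G)\le 1/3$. Hence
\[
\Pr_M(G)-\Pr_{M'}(G)\ge 1/3,
\]
while this difference is at most $\TV$ of the transcript laws, which is at most $q\,\Reach_M(x,U)$. Rearranging gives $q\ge 1/(3\,\Reach_M(x,U))$.

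A few edge cases need a line each. If $\Reach_M(x,U)=0$, then the transcript laws coincide, and no algorithm can achieve success $2/3$ on both models. The hypothesis is then vacuous, and the inequality is read with the convention $1/0=\infty$. By Lemma~\ref{lem:app-common-reachability}, it does not matter whether the reachability is computed under $M$ or $M'$.

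The main obstacle is not analytic but definitional. I need to ensure that the final guess is measurable with respect to the transcript plus seed, so that data processing applies. I would handle this by stating that the output rule is part of the algorithm's measurable decision rules in Definition~\ref{def:app-transcript}.
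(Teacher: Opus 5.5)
Your proposal is correct and follows the paper's argument: both reduce to the transcript bound $\TV\le q\,\Reach_M(x,U)$ from Theorem~\ref{thm:app-reachability-barrier} and then apply the standard testing inequality. The paper phrases that inequality as average success under the uniform prior being at most $\frac12+\frac12\TV$, and your $\Pr_M(G)-\Pr_{M'}(G)\ge 1/3$ is the same inequality; your explicit folding of the seed into the transcript (valid because the seed has the same law under $M$ and $M'$) and your treatment of the case $\Reach_M(x,U)=0$ are slightly more careful than the paper, which simply takes the decision rule to be a function of the transcript.
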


\begin{proof}
Let
\[
P:=\mathcal L\bigl(T_q^A(M,x)\bigr), \qquad Q:=\mathcal L\bigl(T_q^A(M',x)\bigr).
\]
For any decision rule based on the transcript, the average success probability under the uniform prior on $\{M,M'\}$ is at most
\[
\frac{1}{2}+\frac{1}{2}\TV(P,Q).
\]
By Theorem~\ref{thm:app-reachability-barrier},
\[
\TV(P,Q)\le q\,\Reach_M(x,U).
\]
If the worst-case success probability is at least $2/3$, then the average success probability under the uniform prior is also at least $2/3$. Hence
\[
\frac{1}{2}+\frac{q\,\Reach_M(x,U)}{2}\ge \frac{2}{3},
\]
which rearranges to the claimed lower bound.
\end{proof}

\section{Proofs for Section~\ref{sec:hidden-path}}
\label{app:hidden-path}

\subsection{No-reset hardness on hidden paths}

Recall the hidden-path family from Definition~\ref{def:hidden-path}. For convenience, we repeat the transition rule:
\[
M_z(a\mid p)=
\begin{cases}
p_+, & \text{if } p=z_{<t} \text{ for some } t\in\{1,\dots,H\} \text{ and } a=z_t,\\
p_-, & \text{if } p=z_{<t} \text{ for some } t\in\{1,\dots,H\} \text{ and } a\neq z_t,\\
1/K, & \text{if } p \text{ is not a prefix of } z.
\end{cases}
\]

\begin{theorem}
\label{thm:app-hidden-path-no-reset}
Fix $K\ge 2$, $H\ge 1$, and $\lambda>0$. Let $z,z'\in\Sigma^H$ satisfy $z_{1:H-1}=z'_{1:H-1}$ and $z_H\neq z'_H$. Let $A$ be any randomized adaptive algorithm that makes at most $q$ queries to any no-reset generator experiment. Then
\[
\TV\!\Bigl(\mathcal L\bigl(T_q^A(M_z,x)\bigr),\mathcal L\bigl(T_q^A(M_{z'},x)\bigr)\Bigr)\le q\,p_+^{H-1}.
\]
Consequently, any such algorithm that distinguishes $M_z$ from $M_{z'}$ with worst-case success probability at least $2/3$ must satisfy
\[
q\ge \frac{1}{3p_+^{H-1}}.
\]
\end{theorem}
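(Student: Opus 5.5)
The plan is to reduce the statement to Theorem~\ref{thm:app-reachability-barrier} and Corollary~\ref{cor:app-testing} with the single-prefix set
\[
U:=\{z_{1:H-1}\}\subseteq\Sigma^{<H}.
\]
First I check that $M_z$ and $M_{z'}$ agree outside $U$. Since $z_{1:H-1}=z'_{1:H-1}$, the two strings share the proper prefixes $z_{<t}$ for $t=1,\dots,H-1$. At each such prefix both generators assign $p_+$ to the common token $z_t=z'_t$ and $p_-$ to every other token. A prefix of length at most $H-1$ that is not a prefix of $z$ is also not a prefix of $z'$, because prefixes of length at most $H-1$ are determined by $z_{1:H-1}$. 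At such prefixes both generators are uniform. The only remaining prefix in $\Sigma^{<H}$ is $z_{<H}=z_{1:H-1}$, where the two differ ($p_+$ on $z_H$ versus on $z'_H$). So the two models agree everywhere except on $U$.

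Next I compute $\Reach_{M_z}(x,U)$. By Definition~\ref{def:reachability}, the event $E_U$ asks whether some $Y_{<t}$, $t\le H$, equals $z_{1:H-1}$. Only $t=H$ is possible by length, so $E_U=\{Y_{1:H-1}=z_{1:H-1}\}$. Under $M_z$ the rollout stays on the hidden path at each step $t\le H-1$ with probability exactly $p_+$. Hence
\[
\Reach_{M_z}(x,U)=\prod_{t=1}^{H-1}M_z(z_t\mid z_{<t})=p_+^{H-1}.
\]
When $H=1$, the set $U=\{\varnothing\}$ is always visited and the bound reads $q$, which is trivially true.

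Plugging this into Theorem~\ref{thm:app-reachability-barrier} gives the total variation bound $q\,p_+^{H-1}$. Corollary~\ref{cor:app-testing} then yields $q\ge 1/(3p_+^{H-1})$.

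There is no real obstacle here. The only step needing care is the agreement-outside-$U$ verification: one must make sure that every prefix off the hidden path receives the uniform rule in both models, which is exactly where the shared first $H-1$ tokens are used.
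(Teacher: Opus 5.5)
Your proposal is correct and follows the paper's proof essentially step for step. Both use the singleton set $U=\{z_{1:H-1}\}$ and check that $M_z$ and $M_{z'}$ agree outside $U$, since every prefix of length at most $H-1$ is determined by the shared first $H-1$ symbols. Both then compute $\Reach_{M_z}(x,U)=p_+^{H-1}$ and conclude with Theorem~\ref{thm:app-reachability-barrier} and Corollary~\ref{cor:app-testing}; your remark on the degenerate case $H=1$ is a harmless addition the paper leaves implicit.
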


\begin{proof}
Let
\[
c:=z_{1:H-1}=z'_{1:H-1}.
\]
We first show that $M_z$ and $M_{z'}$ agree outside the singleton set
\[
U:=\{c\}.
\]

Fix any prefix $p\in\Sigma^{<H}$ with $p\neq c$. If $p$ is not a prefix of $z$, then because $z$ and $z'$ agree through depth $H-1$, it is also not a prefix of $z'$. In that case both models are uniform at $p$. If instead $p$ is a prefix of $z$, then since $p\neq c$ and $z$ and $z'$ agree on their first $H-1$ symbols, the same prefix $p$ is also a prefix of $z'$, and the favored next token is the same under both models. Thus
\[
M_z(\cdot\mid p)=M_{z'}(\cdot\mid p)
\qquad\text{for every } p\neq c.
\]
So the two generators agree outside $U$.

We next compute the reachability of $U$. A rollout reaches the prefix $c$ if and only if its first $H-1$ tokens are exactly $c$. Under either model, each of those transitions follows the hidden path and therefore has conditional probability $p_+$. Hence
\[
\Reach_{M_z}(x,U)=\Reach_{M_{z'}}(x,U)=p_+^{H-1}.
\]

Theorem~\ref{thm:app-reachability-barrier} from Appendix~\ref{app:no-reset} now gives
\[
\TV\!\Bigl(\mathcal L\bigl(T_q^A(M_z,x)\bigr),\mathcal L\bigl(T_q^A(M_{z'},x)\bigr)\Bigr)\le q\,p_+^{H-1}.
\]
The testing lower bound then follows from Corollary~\ref{cor:app-testing}.
\end{proof}

\subsection{Recovery from chosen-prefix samples}

We now prove the upper bound for Algorithm~\ref{alg:hidden-path-prefix-sample}.

\begin{theorem}
\label{thm:app-hidden-path-prefix-sample}
Fix $K\ge 2$, $H\ge 1$, $\lambda>0$, and $\delta\in(0,1)$. Let
\[
m:=\left\lceil \frac{2}{\Delta^2}\log\!\left(\frac{H(K-1)}{\delta}\right)\right\rceil.
\]
Then Algorithm~\ref{alg:hidden-path-prefix-sample} recovers the hidden path $z$ with probability at least $1-\delta$ using at most $Hm$ chosen-prefix samples. The queried prefixes obey the local-reset discipline.
\end{theorem}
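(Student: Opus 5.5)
The plan is an induction over stages with a union bound over stages and rival tokens. The query count and the local-reset discipline are immediate, so the real work is the per-stage error bound.

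\emph{Query count and discipline.} The algorithm makes exactly $m$ queries at each of the $H$ stages, so at most $Hm$ samples in total. For the discipline, the first query is at $\widehat z_{<1}=\varnothing$. Within stage $t$, every query after the first repeats the prefix $\widehat z_{<t}$, which has already been queried. The first query of stage $t\ge 2$ is at $\widehat z_{<t}=\widehat z_{<t-1}\widehat z_{t-1}$, a one-token extension of the prefix queried in stage $t-1$. So the sequence of queried prefixes obeys the local-reset discipline deterministically, whatever the sampled outcomes.

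\emph{Error bound.} Let $G_t$ be the event that $\widehat z_{<t+1}=z_{<t+1}$, and let $G_0$ be the sure event. On $G_{t-1}$ the queried prefix is $z_{<t}$, so by Definition~\ref{def:hidden-path} the $m$ stage-$t$ samples are i.i.d.\ with $\Pr(A=z_t)=p_+$ and $\Pr(A=a)=p_-$ for each $a\ne z_t$. These samples are drawn fresh from the oracle and are independent of earlier stages, so the conditioning is harmless. I will condition on the history through stage $t-1$, which fixes $\widehat z_{<t}$.

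For a fixed rival $a\ne z_t$, define $X_j:=\one\{A_j^{(t)}=z_t\}-\one\{A_j^{(t)}=a\}\in\{-1,0,1\}$. These are i.i.d.\ with mean $\Delta$ and range length $2$. Hoeffding's inequality then gives
\[
\Pr\Bigl(N_{z_t}^{(t)}\le N_a^{(t)}\Bigr)=\Pr\Bigl(\textstyle\sum_j X_j\le 0\Bigr)\le \exp\!\bigl(-2m^2\Delta^2/(4m)\bigr)=\exp(-m\Delta^2/2).
\]
The choice $m\ge (2/\Delta^2)\log(H(K-1)/\delta)$ makes this at most $\delta/(H(K-1))$. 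The event $\{N_{z_t}^{(t)}\le N_a^{(t)}\}$ covers ties, so if $N_{z_t}^{(t)}>N_a^{(t)}$ holds for every rival $a$, the argmax is unique and equals $z_t$. A union bound over the $K-1$ rivals gives $\Pr(G_{t-1}\setminus G_t)\le \delta/H$.

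Summing over $t=1,\dots,H$ gives $\Pr(G_H^c)\le\sum_t\Pr(G_{t-1}\setminus G_t)\le\delta$, which is the claim.

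\emph{Main obstacle.} The only delicate point is the Hoeffding constant. The paired variable ranges over an interval of length $2$, which gives exponent $m\Delta^2/2$; this matches the constant $2/\Delta^2$ in the definition of $m$ exactly. The case $H$ or $K$ small, where $\log(H(K-1)/\delta)$ is still positive because $\delta<1$, needs no separate treatment.
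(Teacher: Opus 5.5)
Your proposal is correct and follows essentially the same argument as the paper. Both use the paired variables $\one\{A_j^{(t)}=z_t\}-\one\{A_j^{(t)}=a\}$ with Hoeffding to get $\exp(-m\Delta^2/2)$ per rival, a union bound over the $K-1$ rivals to get $\delta/H$ per stage, a sum over the $H$ stages, and the same treatment of the query count and the local-reset discipline. The only difference is notational: your $G_t$ tracks correctness through depth $t$, so it coincides with the paper's $E_{t+1}$.
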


\begin{proof}
For each stage $t\in\{1,\dots,H\}$, let
\[
E_t:=\{\widehat z_{<t}=z_{<t}\}
\]
be the event that the algorithm has reconstructed the hidden path correctly through depth $t-1$.

Fix a stage $t$ and condition on $E_t$. Then the query prefix at stage $t$ is the true hidden prefix $z_{<t}$. Therefore the replies
\[
A_1^{(t)},\dots,A_m^{(t)}
\]
are independent samples from the next-token distribution at $z_{<t}$. In particular,
\[
\Pr\bigl(A_j^{(t)}=z_t \mid E_t\bigr)=p_+
\qquad\text{and}\qquad
\Pr\bigl(A_j^{(t)}=a \mid E_t\bigr)=p_-
\quad\text{for every } a\neq z_t.
\]

Fix any competing token $a\neq z_t$. For each sample index $j$, define
\[
X_j^{(a,t)}:=\one\{A_j^{(t)}=z_t\}-\one\{A_j^{(t)}=a\}.
\]
Conditional on $E_t$, the variables $X_1^{(a,t)},\dots,X_m^{(a,t)}$ are independent, each lies in the interval $[-1,1]$, and
\[
\mathbb E\bigl[X_j^{(a,t)}\mid E_t\bigr]=p_+-p_-=\Delta.
\]

By definition of the empirical counts,
\[
N_{z_t}^{(t)}-N_a^{(t)}=\sum_{j=1}^{m} X_j^{(a,t)}.
\]
Therefore
\[
\Pr\bigl(N_a^{(t)}\ge N_{z_t}^{(t)} \mid E_t\bigr)
=
\Pr\!\left(\sum_{j=1}^{m} X_j^{(a,t)}\le 0 \,\middle|\, E_t\right).
\]
Subtracting the mean from each summand gives
\[
\Pr\!\left(\sum_{j=1}^{m}\bigl(X_j^{(a,t)}-\Delta\bigr)\le -m\Delta \,\middle|\, E_t\right).
\]
Hoeffding's inequality for independent variables in $[-1,1]$ yields
\[
\Pr\bigl(N_a^{(t)}\ge N_{z_t}^{(t)} \mid E_t\bigr)\le \exp\!\left(-\frac{m\Delta^2}{2}\right).
\]

There are $K-1$ competitors. A union bound therefore gives
\[
\Pr\bigl(\widehat z_t\neq z_t \mid E_t\bigr)\le (K-1)\exp\!\left(-\frac{m\Delta^2}{2}\right)\le \frac{\delta}{H},
\]
where the last inequality follows from the choice of $m$.

We now propagate the stagewise guarantee. Since
\[
E_{t+1}^c=E_t^c\cup\bigl(E_t\cap\{\widehat z_t\neq z_t\}\bigr),
\]
we have
\[
\Pr(E_{t+1}^c)\le \Pr(E_t^c)+\Pr\bigl(\widehat z_t\neq z_t \mid E_t\bigr)\Pr(E_t)
\le \Pr(E_t^c)+\frac{\delta}{H}.
\]
Because $E_1$ is the sure event, $\Pr(E_1^c)=0$. Iterating the inequality from $t=1$ to $t=H$ gives
\[
\Pr(E_{H+1}^c)\le \delta.
\]
But $E_{H+1}$ is exactly the event $\{\widehat z=z\}$. Hence
\[
\Pr(\widehat z=z)\ge 1-\delta.
\]

The algorithm makes exactly $m$ chosen-prefix queries at each of the $H$ stages, so the total number of queries is $Hm$.

Finally, the local-reset discipline holds because the first queried prefix is $\varnothing$, every repeated query at stage $t$ revisits the same already queried prefix $\widehat z_{<t}$, and the first new query at stage $t+1$ is the one-token extension $\widehat z_{<t}\widehat z_t$ of a previously queried prefix.
\end{proof}

\subsection{Exact teacher-forced sequence scores}

We also prove the sequence-scoring corollary from the main text.

\begin{corollary}
\label{cor:app-seqscore-hidden-path}
In the hidden-path family, exact $\mathsf{SeqScore}$ recovers the hidden path using $HK$ chosen-completion score queries.
\end{corollary}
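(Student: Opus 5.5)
We will run a stagewise greedy reconstruction driven by exact sequence scores. The argument is deterministic and rests on one observation: along the hidden path, every transition off the path has the same probability.

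Fix an arbitrary filler suffix; for concreteness take the all-ones string $1^{H-t}$. At stage $t\in\{1,\dots,H\}$, suppose the current prefix $\widehat z_{<t}$ equals $z_{<t}$. For each $a\in\Sigma$, query
\[
y^{(a)}:=\widehat z_{<t}\,a\,1^{H-t}.
\]
Then set $\widehat z_t$ to be the token whose query returns the largest exact score. Each stage uses $K$ queries, so the total is $HK$. Induction on $t$, with the empty prefix as base case, then gives $\widehat z=z$.

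\textbf{Step 1: split the score into three factors.} By the product formula,
\[
\log\Pr_{M_z}(Y=y^{(a)})=\sum_{s<t}\log M_z(z_s\mid z_{<s})+\log M_z(a\mid z_{<t})+\sum_{s>t}\log M_z(y^{(a)}_s\mid y^{(a)}_{<s}).
\]
The first sum does not depend on $a$; it equals $(t-1)\log p_+$.

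\textbf{Step 2: the middle factor favours $z_t$.} It equals $\log p_+$ when $a=z_t$ and $\log p_-$ otherwise.

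\textbf{Step 3: control the tail.} This is the only step needing care, and it is where the order of the paths matters.

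\emph{Case $a\neq z_t$.} The prefix $z_{<t}a$ has left the hidden path. Every later prefix is then not a prefix of $z$, so every tail transition has probability $1/K$. The tail contributes $(H-t)\log(1/K)$.

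\emph{Case $a=z_t$.} Each tail prefix $z_{\le t}1^{j}$ either is still a prefix of $z$ or has left it. In the first situation the transition probability is $p_+$ or $p_-$; in the second it is $1/K$. Since $p_-<1/K<p_+$, the tail can be worse than $(H-t)\log(1/K)$, so a naive comparison fails.

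\emph{Fix.} Bound the total rather than the tail. While the filler stays on the path, its transitions contribute $\log p_+$ each, which exceeds $\log(1/K)$. At the first step where the filler leaves the path (if it does), the transition contributes $\log p_-$, and every transition after that contributes $\log(1/K)$. Combining this with the middle factor $\log p_+$ from $a=z_t$, the score of $y^{(z_t)}$ is at least
\[
\log p_+ + \log p_- + (H-t-1)\log(1/K).
\]
For $a\neq z_t$, the score of $y^{(a)}$ is exactly
\[
\log p_- + (H-t)\log(1/K).
\]
These differ by
\[
\log p_+-\log(1/K)=\log(Kp_+)>0,
\]
so $z_t$ strictly wins.

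\textbf{Step 4: conclude.} Hence $z_t$ is the unique maximizer at stage $t$, the stage recovers $\widehat z_t=z_t$, and the induction closes.
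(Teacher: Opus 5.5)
Your proposal is correct and follows essentially the paper's argument: a greedy stagewise reconstruction with a fixed filler suffix, where $y^{(a)}$ for $a\neq z_t$ has a purely uniform tail and $y^{(z_t)}$ beats it by a factor of at least $Kp_+>1$. The only difference is that the paper tracks the exact ratio $(Kp_+)^{r+1}$, with $r$ the number of leading filler tokens that agree with $z$, whereas you lower-bound each on-path filler factor $p_+$ by $1/K$; that cruder bound also covers the case where the filler never leaves the path.
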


\begin{proof}
Fix any rule that assigns an arbitrary suffix $s^{(t)}\in\Sigma^{H-t}$ to each stage $t\in\{1,\dots,H\}$. The recovery procedure maintains a current prefix $\widehat z_{<t}$ and, at stage $t$, queries the exact scores of the $K$ completions
\[
\widehat z_{<t}\,a\,s^{(t)}, \qquad a\in\Sigma.
\]
It then sets $\widehat z_t$ to the token with the largest score.

We prove by induction on $t$ that $\widehat z_{<t}=z_{<t}$ for every stage. The base case $t=1$ is immediate. Assume now that $\widehat z_{<t}=z_{<t}$. Let $c:=z_{<t}$, and fix any incorrect token $a\neq z_t$. We compare the probabilities of the completions
\[
y_{z_t}:=c\,z_t\,s^{(t)}
\qquad\text{and}\qquad
y_a:=c\,a\,s^{(t)}.
\]

The completion $y_a$ leaves the hidden path immediately at time $t$, so every later transition is uniform. Hence
\[
\Pr_{M_z}(Y=y_a\mid x)
=
\Pr_{M_z}(Y_{<t}=c\mid x)\cdot p_- \cdot K^{-(H-t)}.
\]

The completion $y_{z_t}$ stays on the hidden path for at least one additional step. Suppose it remains on the hidden path for exactly $r+1$ consecutive steps after time $t$, where $r$ may be any value in $\{0,\dots,H-t\}$. Then
\[
\Pr_{M_z}(Y=y_{z_t}\mid x)
=
\Pr_{M_z}(Y_{<t}=c\mid x)\cdot p_+^{r+1}\cdot u,
\]
where $u$ is either $p_-K^{-(H-t-r-1)}$ if the suffix eventually leaves the hidden path, or $1$ if the suffix coincides with the true remaining suffix all the way to the end. In either case,
\[
u\ge p_-K^{-(H-t-r)}.
\]
Therefore
\[
\frac{\Pr_{M_z}(Y=y_{z_t}\mid x)}{\Pr_{M_z}(Y=y_a\mid x)}
\ge (Kp_+)^{r+1}.
\]
Since
\[
Kp_+=\frac{Ke^\lambda}{e^\lambda+K-1}>1,
\]
this ratio is strictly larger than $1$. Thus $y_{z_t}$ has strictly larger probability than every $y_a$ with $a\neq z_t$.

Because $\mathsf{SeqScore}$ returns the exact log probability of the queried completion, the correct token $z_t$ is the unique maximizer at stage $t$. Hence $\widehat z_t=z_t$, which completes the induction.

There are $K$ score queries at each of the $H$ stages, so the total number of queries is $HK$.
\end{proof}

\section{Proofs for Section~\ref{sec:after-control}}
\label{app:after-control}

\subsection{Basic properties of the leader-trie family}

Recall the leader-trie model from Definitions~\ref{def:leader-trie} and \ref{def:leader-trie-model}. The useful inequalities are
\[
\alpha=\frac{4}{K+4}, \qquad
\beta=\frac{2}{K+4}, \qquad
\gamma=\frac{1}{K+4}, \qquad
\gamma_0=\frac{1}{K+3},
\]
so
\[
\alpha>\beta>\gamma_0>\gamma.
\]
The relevant margins are
\[
\Gamma_{\mathrm{lead}}=\frac{\beta-\gamma_0}{2}=\frac{K+2}{2(K+4)(K+3)}
\]
and
\[
\gamma_{\mathrm{lead}}=\frac{\log\beta-\log\gamma_0}{2}
=\frac{1}{2}\log\!\left(\frac{2(K+3)}{K+4}\right)>0.
\]

\subsection{Top-token access is useless}

\begin{theorem}
\label{thm:app-prefix-top-useless}
Assume $K\ge 3$. For every leader trie $T$ and every queried prefix $p\in\Sigma^{<H}$, the oracle $\mathsf{PrefixTop}(p)$ returns the token $1$. Consequently, for any adaptive algorithm interacting only with $\mathsf{PrefixTop}$, the full transcript law is the same for every leader trie of depth $H$. In particular, no such algorithm can distinguish two different leader tries with success probability greater than $1/2$.
\end{theorem}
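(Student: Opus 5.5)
The argument has two parts: a pointwise computation showing that the top token is always the leader, and a transcript argument showing the replies carry no information about $T$.

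For the first part, fix a leader trie $T$ and a prefix $p\in\Sigma^{<H}$, and split into the two cases of Definition~\ref{def:leader-trie-model}.

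If $p\in I(T)$, the next-token distribution puts mass $\alpha=4/(K+4)$ on token $1$. It puts mass $\beta=2/(K+4)$ on $b_T(p)$ and mass $\gamma=1/(K+4)$ on every other token. Since $4>2>1$, token $1$ is the unique maximizer. As a side check, the masses sum to $(4+2+(K-2))/(K+4)=1$, so this is a genuine distribution.

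If $p\notin I(T)$, token $1$ has mass $4/(K+3)$ and each nonleader has mass $\gamma_0=1/(K+3)$. Again token $1$ is the strict maximizer, because $4>1$. The masses sum to $(4+K-1)/(K+3)=1$.

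In both cases the maximizer is unique, so $\mathsf{PrefixTop}(p)$ never returns $\bot$; it always returns $1$. The assumption $K\ge 3$ only ensures that $b_T(p)\in\{2,\dots,K\}$ can exist alongside at least one other nonleader. It is not needed for the strict inequalities.

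For the second part, $\mathsf{PrefixTop}$ is a deterministic oracle whose reply is the constant $1$, independent of both $T$ and the query. Fix the algorithm's random seed. The algorithm is then deterministic, and by induction on rounds the $r$th query and reply coincide across any two tries $T,T'$: the first query depends only on the seed, the reply is $1$ in both worlds, and the next decision depends only on the seed and identical past replies. The stopping decisions and padded $\bot$ entries coincide in the same way. So the transcript is a fixed function of the seed alone. Since the seed law does not depend on $T$, the transcript law is the same for every leader trie.

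Finally, the transcript laws $P_T$ and $P_{T'}$ are equal, so $\TV(P_T,P_{T'})=0$. The average-case bound used in Corollary~\ref{cor:app-testing} gives success probability at most $\tfrac12+\tfrac12\cdot 0=\tfrac12$ under the uniform prior. Hence the worst-case success probability cannot exceed $1/2$ either.

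No step is a real obstacle. The only care needed is confirming strict uniqueness of the maximizer in both cases, so that the oracle never outputs $\bot$ (which would itself leak nothing here anyway), and phrasing the transcript argument to cover randomized, adaptively stopping algorithms.
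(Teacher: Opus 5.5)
Your proof is correct and follows the paper's argument: the same two-case check that token $1$ is the unique maximizer (via $\alpha>\beta>\gamma$ at internal nodes and $4/(K+3)>\gamma_0$ off the trie), followed by the observation that a constant reply makes the transcript law independent of $T$, which caps success at $1/2$ under a uniform prior on a pair of tries. Your seed-fixing induction is a more explicit version of the paper's one-line transcript argument, and your normalization checks are harmless extras.
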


\begin{proof}
Fix a leader trie $T$ and a queried prefix $p\in\Sigma^{<H}$.

If $p\in I(T)$, then by Definition~\ref{def:leader-trie-model},
\[
M_T(1\mid p)=\alpha, \qquad M_T\bigl(b_T(p)\mid p\bigr)=\beta, \qquad M_T(a\mid p)=\gamma \text{ for } a\notin\{1,b_T(p)\}.
\]
Since $\alpha>\beta>\gamma$, the unique maximizer is the token $1$.

If instead $p\notin I(T)$, then
\[
M_T(1\mid p)=\frac{4}{K+3}, \qquad M_T(a\mid p)=\gamma_0 \text{ for } a\neq 1.
\]
Again the unique maximizer is the token $1$.

So every query to $\mathsf{PrefixTop}$ returns the same deterministic reply, namely $1$, regardless of the trie and regardless of the queried prefix.

Now let $A$ be any adaptive algorithm. At every round, the distribution of the next query depends only on the previous transcript and the algorithm's internal randomness. Since the reply is always the constant token $1$, the transcript distribution is the same for every leader trie. Therefore any decision rule based on the transcript receives identically distributed data under any two candidate tries. Under the uniform prior on a pair of different tries, no such rule can have average success probability above $1/2$, and hence no rule can have worst-case success probability above $1/2$ either.
\end{proof}

\subsection{Recovery from chosen-prefix logits}

We now prove that the hidden branch can be recovered from chosen-prefix logit access.

\begin{theorem}
\label{thm:app-leader-trie-logit}
Let $T$ be a leader trie. If $\xi<\gamma_{\mathrm{lead}}$, then Algorithm~\ref{alg:leader-trie-logit} recovers $T$ exactly from $|I(T)|$ chosen-prefix logit queries. The queried prefixes obey the local-reset discipline.
\end{theorem}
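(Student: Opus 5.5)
The plan is to first establish a deterministic per-prefix classification lemma, then run an induction over the breadth-first traversal. \emph{Classification:} for every $p\in\Sigma^{<H}$, the set $\widehat B(p)$ computed in line 5 equals $\{b_T(p)\}$ when $p\in I(T)$ and equals $\varnothing$ when $p\notin I(T)$. We check this coordinatewise using $\|\widetilde\ell_p-\log M_T(\cdot\mid p)\|_\infty\le\xi<\gamma_{\mathrm{lead}}$. The threshold is $\tau:=\log\gamma_0+\gamma_{\mathrm{lead}}=\tfrac12(\log\beta+\log\gamma_0)$.

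If $p\in I(T)$, the hidden coordinate satisfies $\widetilde\ell_p(b_T(p))\ge\log\beta-\xi>\log\beta-\gamma_{\mathrm{lead}}=\tau$. Every other nonleader $a$ has true log-probability $\log\gamma<\log\gamma_0$, so $\widetilde\ell_p(a)<\log\gamma_0+\xi<\tau$. If $p\notin I(T)$, every nonleader has true value $\log\gamma_0$, so again $\widetilde\ell_p(a)<\tau$. The leader coordinate is never tested. This step is just the inequalities $\gamma<\gamma_0<\beta$ together with the choice of the threshold as a midpoint.

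\emph{Induction:} we prove by induction on depth $d$ three things. First, the prefixes popped from $Q$ at depth $d$ are exactly $I(T)\cap\Sigma^d$. Second, after processing them, $\widehat T\cap\Sigma^{d+1}=T\cap\Sigma^{d+1}$. Third, every popped prefix was queried once. The base case uses $\varnothing\in I(T)$ (when $H\ge1$, $\varnothing$ is internal, since leaves have depth $H$).

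For the inductive step, a popped $p\in I(T)$ has $\widehat B(p)=\{b_T(p)\}$ by classification, so exactly the two true children $p1$ and $p\,b_T(p)$ are added. They are appended to $Q$ iff $|p|+1<H$, i.e., iff they are internal nodes of $T$. This works because in a leader trie every node of depth $<H$ in $T$ is internal, as all leaves have depth exactly $H$. Prefix-closure and the two-children property give that $T\cap\Sigma^{d+1}$ is precisely the set of children of $I(T)\cap\Sigma^d$. Each prefix enters $Q$ at most once, since it has a unique parent.

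Consequently no off-trie prefix is ever queried, so the empty branch of the classification is not even needed. The queries are exactly the elements of $I(T)$, giving $|I(T)|$ logit queries, and $\widehat T=T$ at termination.

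\emph{Local reset:} the first query is $\varnothing$. Every later query $p$ was appended to $Q$ as a one-token extension of its parent, and that parent was popped and queried earlier. This is exactly the discipline from Section~\ref{sec:setup}.

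The only mildly delicate point is the bookkeeping that the appended prefixes coincide with $I(T)$, which relies on the depth-$H$-leaves clause of Definition~\ref{def:leader-trie}. The rest is routine.
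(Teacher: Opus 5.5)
Your proposal is correct and follows the paper's proof essentially step for step: you use the same midpoint-threshold classification of $\widehat B(p)$ at internal and off-trie prefixes, then an induction showing the queue visits exactly $I(T)$, then the local-reset check via each queried prefix's parent having been queried earlier. The only cosmetic differences are that you merge the paper's two inductions (soundness over pop order, completeness over depth) into a single induction on depth, and you note, correctly, that the off-trie case of the classification is never actually used.
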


\begin{proof}
We first show that, at every queried prefix, the thresholding rule in line $4$ of Algorithm~\ref{alg:leader-trie-logit} behaves exactly as intended.

Fix a prefix $p\in\Sigma^{<H}$.

Suppose first that $p\in I(T)$. Then the hidden child is $b_T(p)\in\{2,\dots,K\}$. For that token,
\[
\log M_T\bigl(b_T(p)\mid p\bigr)=\log\beta.
\]
Since the returned vector satisfies
\[
\bigl\|\widetilde\ell_p-\log M_T(\cdot\mid p)\bigr\|_\infty\le \xi,
\]
we have
\[
\widetilde\ell_p\bigl(b_T(p)\bigr)\ge \log\beta-\xi.
\]
Because $\xi<\gamma_{\mathrm{lead}}$ and $\gamma_{\mathrm{lead}}=(\log\beta-\log\gamma_0)/2$,
\[
\log\beta-\xi > \log\beta-\gamma_{\mathrm{lead}}=\log\gamma_0+\gamma_{\mathrm{lead}}.
\]
So the hidden child belongs to $\widehat B(p)$.

Now fix any other nonleader token $a\in\{2,\dots,K\}\setminus\{b_T(p)\}$. Its true log probability is $\log\gamma$, so
\[
\widetilde\ell_p(a)\le \log\gamma+\xi.
\]
Since $\gamma<\gamma_0$ and $\xi<\gamma_{\mathrm{lead}}$,
\[
\widetilde\ell_p(a)<\log\gamma_0+\gamma_{\mathrm{lead}}.
\]
Thus no other nonleader belongs to $\widehat B(p)$, and we conclude that
\[
\widehat B(p)=\{b_T(p)\}
\qquad\text{for every } p\in I(T).
\]

Suppose next that $p\notin I(T)$. Then every nonleader token has true log probability $\log\gamma_0$, so for every $a\in\{2,\dots,K\}$,
\[
\widetilde\ell_p(a)\le \log\gamma_0+\xi<\log\gamma_0+\gamma_{\mathrm{lead}}.
\]
Hence
\[
\widehat B(p)=\varnothing
\qquad\text{for every } p\notin I(T).
\]

We now prove that the queue-based procedure reconstructs the trie exactly. We argue by induction over the order in which prefixes are popped from the queue.

At initialization, the queue contains only $\varnothing$. Since every leaf of a leader trie has depth exactly $H$ and $H\ge 1$, the root is an internal node. So the first queried prefix is correct.

Assume inductively that every prefix previously popped from the queue was a true internal node of $T$, and that whenever such a prefix $p$ was processed, the algorithm recovered the correct singleton $\widehat B(p)=\{b_T(p)\}$ and therefore added exactly the two true children $p1$ and $p\,b_T(p)$ to $\widehat T$ and, if they were not yet at depth $H$, to the queue.

Let $p$ be the next prefix popped from the queue. By construction of the queue, $p$ was added earlier as one of the two children of a previously processed internal node. Hence $p\in T$. Since only prefixes of depth at most $H-1$ are ever appended to the queue, we also have $|p|<H$. Because every node of $T$ of depth strictly less than $H$ is internal, it follows that $p\in I(T)$. Thus every queried prefix is indeed a true internal node.

Conversely, every internal node is eventually queried. This is proved by induction on depth. The root is queried first. Now fix any internal node $p\neq \varnothing$, and let $p^{-}$ be its parent. By prefix-closure, $p^{-}\in T$, and since $|p^{-}|<H$, it is also an internal node. By the induction hypothesis, $p^{-}$ is eventually queried. When it is processed, the algorithm adds both of its true children to the queue, including $p$. Therefore $p$ is eventually queried.

We have shown that the algorithm queries exactly the internal nodes of $T$. At each such node it recovers the correct hidden child, and therefore it adds exactly the correct children. Thus the returned set $\widehat T$ is exactly $T$.

The number of queries is one per internal node, namely $|I(T)|$.

Finally, the queried prefixes obey the local-reset discipline. The first query is $\varnothing$. Every later queried prefix was added to the queue only after its parent had been queried, so when it is first queried it is a one-token extension of a previously queried prefix. This is exactly the local-reset condition from Section~\ref{sec:setup}.
\end{proof}

\subsection{Recovery from chosen-prefix samples}

For completeness we write the sample-based breadth-first procedure explicitly.

\begin{algorithm}[t]
\caption{Recovering a leader trie with chosen-prefix samples}
\label{alg:leader-trie-sample}
\begin{algorithmic}[1]
\STATE Initialize $\widehat T:=\{\varnothing\}$, queue $Q:=(\varnothing)$, and counter $c:=0$
\WHILE{$Q$ is nonempty and $c<S$}
    \STATE Pop the first prefix $p$ from $Q$ and set $c:=c+1$
    \FOR{$j=1$ to $m$}
        \STATE Query $\mathsf{PrefixSample}(p)$ and receive $A_j^{(p)}\in\Sigma$
    \ENDFOR
    \FOR{each $a\in\{2,\dots,K\}$}
        \STATE Set $\widehat P_p(a):=\frac{1}{m}\sum_{j=1}^{m}\one\{A_j^{(p)}=a\}$
    \ENDFOR
    \STATE Set $\widehat B(p):=\{a\in\{2,\dots,K\}: \widehat P_p(a)>\gamma_0+\Gamma_{\mathrm{lead}}\}$
    \IF{$\widehat B(p)$ is a singleton $\{\widehat b(p)\}$}
        \STATE Add $p1$ and $p\,\widehat b(p)$ to $\widehat T$
        \IF{$|p|+1<H$}
            \STATE Append $p1$ and $p\,\widehat b(p)$ to $Q$
        \ENDIF
    \ENDIF
\ENDWHILE
\IF{$Q$ is empty}
    \STATE Return $\widehat T$
\ELSE
    \STATE Return $\bot$
\ENDIF
\end{algorithmic}
\end{algorithm}

\begin{theorem}
\label{thm:app-leader-trie-sample}
Let $T$ be a leader trie and write $s:=|I(T)|$. Suppose the learner knows an upper bound $S\ge s$. Fix $\delta\in(0,1)$, and set
\[
m:=\left\lceil \frac{1}{2\Gamma_{\mathrm{lead}}^2}\log\!\left(\frac{2(K-1)S}{\delta}\right)\right\rceil.
\]
Then Algorithm~\ref{alg:leader-trie-sample} recovers $T$ with probability at least $1-\delta$ using at most $Sm$ chosen-prefix samples. The queried prefixes obey the local-reset discipline.
\end{theorem}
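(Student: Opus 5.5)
The argument mirrors the proof of Theorem~\ref{thm:app-leader-trie-logit}, with the exact thresholding step replaced by a high-probability event. Define a \emph{good event} $G$ on which every popped prefix $p$ (at most $S$ of them, because of the counter) satisfies
\[
\bigl|\widehat P_p(a)-M_T(a\mid p)\bigr|<\Gamma_{\mathrm{lead}} \qquad\text{for all } a\in\{2,\dots,K\}.
\]
On $G$ the thresholding behaves exactly as in the logit case, and the rest is the deterministic breadth-first induction already carried out there.

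The first step is the per-query concentration. Fix the $c$-th popped prefix $p$. Conditional on the history up to that pop, its $m$ samples are i.i.d.\ from $M_T(\cdot\mid p)$. For each of the $K-1$ nonleader coordinates, the two-sided Hoeffding bound gives failure probability at most
\[
2\exp\!\bigl(-2m\Gamma_{\mathrm{lead}}^2\bigr)\le \frac{\delta}{(K-1)S}
\]
by the choice of $m$. A union bound over the $K-1$ coordinates shows that the $c$-th pop fails with probability at most $\delta/S$.

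The step I expect to need the most care is that the popped prefixes are random, so one cannot union bound over a fixed prefix set. I would index the failure events by pop count $c\le S$ rather than by prefix. Each such event has conditional probability at most $\delta/S$ given the past, and summing over $c$ gives $\Pr(G^c)\le\delta$. Equivalently, one can imagine predrawing $m$ samples for each prefix in $\Sigma^{<H}$ and use a stopping-time or sequential argument.

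On $G$, the deterministic part goes as follows.
\begin{itemize}
\item If $p\in I(T)$, then the hidden child has $\widehat P_p(b_T(p))>\beta-\Gamma_{\mathrm{lead}}=\gamma_0+\Gamma_{\mathrm{lead}}$. Every other nonleader has $\widehat P_p(a)<\gamma+\Gamma_{\mathrm{lead}}<\gamma_0+\Gamma_{\mathrm{lead}}$. Hence $\widehat B(p)=\{b_T(p)\}$.
\item If $p\notin I(T)$, then every nonleader has $\widehat P_p(a)<\gamma_0+\Gamma_{\mathrm{lead}}$, so $\widehat B(p)=\varnothing$.
\end{itemize}
The second case is never actually triggered, by the same induction as before. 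Reusing the induction from Theorem~\ref{thm:app-leader-trie-logit}, the queue only ever contains true internal nodes, and every internal node is eventually enqueued. The algorithm therefore pops exactly the $s\le S$ internal nodes, so the counter never cuts it off early. The queue empties, and the output is $\widehat T=T$.

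For the sample count, at most $S$ pops occur, each using $m$ samples, for a total of at most $Sm$ samples deterministically. Local reset holds exactly as in the logit proof. The root is queried first, each prefix is enqueued only after its parent has been queried, and the repeated samples at $p$ revisit an already queried prefix.
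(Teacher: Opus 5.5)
Your proof is correct and follows essentially the same route as the paper. Both arguments use a Hoeffding bound at each processed prefix, a union bound over the $K-1$ nonleader coordinates and at most $S$ pops, and then the breadth-first induction from the logit case. The only cosmetic difference is in how the failure probability is budgeted. You use two-sided deviations on one good event indexed by pop count, which spends the factor $2$ inside the logarithm and lands exactly at $\delta$. The paper uses one-sided tail bounds along a sequential invariant over the true internal nodes in breadth-first order, which lands at $\delta/2$.
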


\begin{proof}
Fix once and for all a breadth-first ordering of the true internal nodes of $T$:
\[
I(T)=\{p_1,\dots,p_s\}.
\]
For each $i\in\{0,\dots,s\}$, let $G_i$ be the event that after exactly $i$ processed prefixes, the following three conditions hold simultaneously:
\[
\text{the processed prefixes are exactly } p_1,\dots,p_i,
\]
\[
\widehat T \text{ is correct on all discovered nodes},
\]
and
\[
Q \text{ contains exactly the remaining internal nodes } p_{i+1},\dots,p_s
\text{ in breadth-first order}.
\]
The event $G_0$ holds deterministically at initialization.

Fix $i\in\{1,\dots,s\}$ and condition on $G_{i-1}$. Then the next processed prefix is the true internal node $p_i$.

For each nonleader token $a\in\{2,\dots,K\}$, the empirical frequency
\[
\widehat P_{p_i}(a)=\frac{1}{m}\sum_{j=1}^{m}\one\{A_j^{(p_i)}=a\}
\]
is the average of $m$ independent Bernoulli random variables with mean $M_T(a\mid p_i)$.

There are two cases. If $a=b_T(p_i)$ is the hidden child, then
\[
M_T(a\mid p_i)=\beta.
\]
If $a\neq b_T(p_i)$, then
\[
M_T(a\mid p_i)=\gamma.
\]
In either case, the distance from the mean to the threshold $\gamma_0+\Gamma_{\mathrm{lead}}=(\beta+\gamma_0)/2$ is at least $\Gamma_{\mathrm{lead}}$. Indeed, for the hidden child,
\[
\beta-(\gamma_0+\Gamma_{\mathrm{lead}})=\Gamma_{\mathrm{lead}},
\]
and for every other nonleader,
\[
(\gamma_0+\Gamma_{\mathrm{lead}})-\gamma
>
(\gamma_0+\Gamma_{\mathrm{lead}})-\gamma_0
=
\Gamma_{\mathrm{lead}}.
\]

Hoeffding's inequality therefore gives
\[
\Pr\bigl(\widehat P_{p_i}(b_T(p_i))\le \gamma_0+\Gamma_{\mathrm{lead}} \mid G_{i-1}\bigr)\le \exp(-2m\Gamma_{\mathrm{lead}}^2)
\]
for the hidden child, and
\[
\Pr\bigl(\widehat P_{p_i}(a)> \gamma_0+\Gamma_{\mathrm{lead}} \mid G_{i-1}\bigr)\le \exp(-2m\Gamma_{\mathrm{lead}}^2)
\]
for every nonhidden nonleader $a$.

A union bound over the $K-1$ nonleader tokens yields
\[
\Pr\bigl(\widehat B(p_i)\neq \{b_T(p_i)\}\mid G_{i-1}\bigr)\le (K-1)\exp(-2m\Gamma_{\mathrm{lead}}^2).
\]
By the definition of $m$,
\[
(K-1)\exp(-2m\Gamma_{\mathrm{lead}}^2)\le \frac{\delta}{2S}.
\]
Thus
\[
\Pr\bigl(\widehat B(p_i)\neq \{b_T(p_i)\}\mid G_{i-1}\bigr)\le \frac{\delta}{2S}.
\]

Whenever the good event $\widehat B(p_i)=\{b_T(p_i)\}$ occurs, the algorithm processes $p_i$ correctly and preserves the breadth-first invariant. Therefore
\[
\Pr(G_i^c\mid G_{i-1})\le \frac{\delta}{2S}.
\]

We now propagate the invariant:
\[
G_i^c=G_{i-1}^c\cup\bigl(G_{i-1}\cap G_i^c\bigr),
\]
so
\[
\Pr(G_i^c)\le \Pr(G_{i-1}^c)+\Pr(G_i^c\mid G_{i-1})\Pr(G_{i-1})
\le \Pr(G_{i-1}^c)+\frac{\delta}{2S}.
\]
Starting from $\Pr(G_0^c)=0$ and iterating gives
\[
\Pr(G_s^c)\le s\cdot \frac{\delta}{2S}\le \frac{\delta}{2}.
\]

On the event $G_s$, the algorithm has processed exactly the internal nodes of $T$, reconstructed the trie correctly, and emptied the queue after at most $s\le S$ iterations. In that case the algorithm returns $\widehat T=T$.

To finish the proof, we note that the bound above already controls all possible classification mistakes on true internal nodes. Since the queue only receives children of correctly processed internal nodes, the event $G_s$ also guarantees that no off-trie prefix is ever processed. Thus
\[
\Pr(\widehat T=T)\ge 1-\delta.
\]
The query count is at most $m$ per processed prefix and at most $S$ processed prefixes, so the total number of chosen-prefix samples is at most $Sm$.

The local-reset discipline holds for the same reason as in the logit-based procedure: the first query is $\varnothing$, and every later queried prefix is a one-token extension of a previously queried prefix.
\end{proof}

\subsection{Summary corollary}

\begin{corollary}
\label{cor:app-control-first-observation-second}
Fix $K\ge 3$ and $\lambda>0$. On the hidden-path family, every no-reset generator interface requires exponentially many queries on some pair of models, while chosen-prefix sampling recovers the hidden path in $O(H\log(H/\delta))$ local-reset queries. On the leader-trie family, chosen-prefix top-token access is useless, while chosen-prefix logits and chosen-prefix sampling recover the trie efficiently under the local-reset discipline. Prefix control is therefore the first boundary, and observation richness becomes a second boundary only after control is available.
\end{corollary}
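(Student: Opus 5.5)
This corollary only assembles earlier results, so the plan is to instantiate each of its four clauses with the appendix theorems already proved and then turn the constants into the stated asymptotic form.

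\textbf{Hidden-path lower bound.} By Theorem~\ref{thm:app-hidden-path-no-reset}, any pair $z,z'$ that agrees on the first $H-1$ tokens forces every no-reset interface to use $q\ge 1/(3p_+^{H-1})$ queries. For fixed $K\ge 3$ and $\lambda>0$ we have $p_+=e^\lambda/(e^\lambda+K-1)<1$. Setting $c:=1/p_+>1$, which depends only on $(K,\lambda)$, this reads $q\ge c^{H-1}/3=\Omega(c^H)$, i.e.\ exponential in $H$. Corollary~\ref{cor:standard-no-reset} guarantees that this bound covers output sampling, generated-token log probabilities, top-$k$ reports and full visited-prefix distributions, because all of them are no-reset experiments by Theorem~\ref{thm:app-no-reset-collapse}.

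\textbf{Hidden-path upper bound.} By Theorem~\ref{thm:app-hidden-path-prefix-sample}, recovery uses $Hm$ samples with $m=\lceil 2\Delta^{-2}\log(H(K-1)/\delta)\rceil$. Since $\Delta=(e^\lambda-1)/(e^\lambda+K-1)$ is a positive constant depending only on $(K,\lambda)$, and $\log(H(K-1)/\delta)=\log(H/\delta)+O(1)$, the total is $O(H\log(H/\delta))$. The same theorem certifies that these queries obey the local-reset discipline.

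\textbf{Leader-trie separation.} Theorem~\ref{thm:app-prefix-top-useless} shows that $\mathsf{PrefixTop}$ gives success probability at most $1/2$ for every depth $H$. Theorem~\ref{thm:app-leader-trie-logit} gives exact recovery from $|I(T)|$ logit queries whenever $\xi<\gamma_{\mathrm{lead}}$. Theorem~\ref{thm:app-leader-trie-sample} gives recovery from $Sm$ samples, where $m=O(\log(KS/\delta))$ because $\Gamma_{\mathrm{lead}}$ depends only on $K$. In both cases the query count is linear in the trie size up to logarithmic factors, and the queries are local-reset.

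\textbf{Main obstacle.} The only real issue is interpretive: making "efficiently" and "exponentially many" precise. I would fix the meaning as follows. "Exponential" means $\Omega(c^H)$ with $c=c(K,\lambda)>1$. "Efficient" means polynomial in $H$ and in the trie size $|I(T)|$ (or $S$), together with $\log(1/\delta)$.

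I would also note a subtlety. A full trie has $|I(T)|=2^H-1$ internal nodes, so "efficient" must be measured against the size of the object being recovered, since no method can do better than that. The real contrast is with $\mathsf{PrefixTop}$, which fails for every number of queries. With these conventions fixed, the corollary's final sentence is just the conjunction of the four bullets above.
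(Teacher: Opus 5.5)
Your proposal is correct and follows essentially the same route as the paper, which likewise just combines Theorems~\ref{thm:app-hidden-path-no-reset} and \ref{thm:app-hidden-path-prefix-sample} (using that $\Delta$ is a positive constant for fixed $(K,\lambda)$ to get $Hm=O(H\log(H/\delta))$) with Theorems~\ref{thm:app-prefix-top-useless}, \ref{thm:app-leader-trie-logit}, and \ref{thm:app-leader-trie-sample}. One sharpening of your side remark: by Definition~\ref{def:leader-trie}, every node of depth less than $H$ is internal with exactly two children, so every leader trie, not just a ``full'' one, has $|I(T)|=2^H-1$, which is exactly why ``efficiently'' can only be read relative to the size of $T$.
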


\begin{proof}
The hidden-path statement is exactly the combination of Theorems~\ref{thm:app-hidden-path-no-reset} and \ref{thm:app-hidden-path-prefix-sample}. For fixed $K$ and $\lambda$, the gap $\Delta$ is a positive constant, so the upper bound
\[
Hm
=
H\left\lceil \frac{2}{\Delta^2}\log\!\left(\frac{H(K-1)}{\delta}\right)\right\rceil
\]
is $O(H\log(H/\delta))$.

The leader-trie statement is exactly the combination of Theorems~\ref{thm:app-prefix-top-useless}, \ref{thm:app-leader-trie-logit}, and \ref{thm:app-leader-trie-sample}. The final sentence is the summary of these two witness-family statements.
\end{proof}

\section{Proofs for Section~\ref{sec:kl-bridge}}
\label{app:kl-bridge}

\subsection{The hard family}

Fix a prompt distribution $\rho$ on $\mathcal X$ and a designated hard prompt $x^\star\in\mathcal X$ with mass $\eta:=\rho(x^\star)>0$. Fix integers $D,L\ge 1$ and set $H:=D+L+1$. Let $v\in\Sigma^D$ be a known scaffold, and let $\tau_0,\tau_1\in\Sigma$ be two distinct terminal tokens.

For each hidden suffix $s=(s_1,\dots,s_L)\in\Sigma^L$, define the hard-prompt base generator $Q_s(\cdot\mid x^\star)$ as follows. Let
\[
u:=v\,s\in\Sigma^{D+L}.
\]
For every prefix $p\in\Sigma^{<H}$, set
\[
Q_s(a\mid x^\star,p)=
\begin{cases}
p_+, & \text{if } p=u_{<t} \text{ for some } t\in\{1,\dots,D+L\} \text{ and } a=u_t,\\
p_-, & \text{if } p=u_{<t} \text{ for some } t\in\{1,\dots,D+L\} \text{ and } a\neq u_t,\\
1/K, & \text{if } |p|=D+L,\\
1/K, & \text{if } p \text{ is not a prefix of } u.
\end{cases}
\]
Thus $Q_s(\cdot\mid x^\star)$ follows the hidden path $u=v\,s$ for its first $D+L$ steps and is uniform at the final step.

For every prompt $x\neq x^\star$, fix once and for all an easy generator $Q_{\mathrm{easy}}(\cdot\mid x)$, the same for all instances. The full prompt-conditioned base generator is
\[
Q_s(\cdot\mid x)=
\begin{cases}
Q_s(\cdot\mid x^\star), & \text{if } x=x^\star,\\
Q_{\mathrm{easy}}(\cdot\mid x), & \text{if } x\neq x^\star.
\end{cases}
\]

Now fix a reward scale $R>0$. For each bit $b\in\{0,1\}$, define the target completion
\[
z_{s,b}:=v\,s\,\tau_b\in\Sigma^H
\]
and the outcome reward
\[
r_{s,b}(x,y):=R\,\one\{x=x^\star,\ y=z_{s,b}\}.
\]
All prompts other than $x^\star$ therefore have zero reward.

The candidate target set at the hard prompt is
\[
\mathcal Z:=\{z_{s,b}: s\in\Sigma^L,\ b\in\{0,1\}\},
\]
whose size is
\[
N:=|\mathcal Z|=2K^L.
\]
The base-model mass of the rewarding completion is
\[
q_0:=Q_s(z_{s,b}\mid x^\star)=\frac{p_+^{D+L}}{K},
\]
which does not depend on either $s$ or $b$.

\subsection{The hard-prompt objective}

For a policy $\pi(\cdot\mid x^\star)$, define the hard-prompt objective
\[
J_{x^\star,\beta}^{s,b}(\pi):=
\mathbb E_{y\sim \pi(\cdot\mid x^\star)} r_{s,b}(x^\star,y)
-
\beta \KL\!\left(\pi(\cdot\mid x^\star)\,\|\,Q_s(\cdot\mid x^\star)\right).
\]

The exact optimizer at the hard prompt has the usual Gibbs form.

\begin{lemma}
\label{lem:gibbs-hard-prompt}
Fix an instance $(s,b)$. Let
\[
Z_{s,b}:=\sum_{y\in\Sigma^H} Q_s(y\mid x^\star)\exp\!\bigl(r_{s,b}(x^\star,y)/\beta\bigr),
\]
and define
\[
\pi^\star_{s,b}(y\mid x^\star):=
\frac{Q_s(y\mid x^\star)\exp\!\bigl(r_{s,b}(x^\star,y)/\beta\bigr)}{Z_{s,b}}.
\]
Then for every policy $\pi(\cdot\mid x^\star)$,
\[
J_{x^\star,\beta}^{s,b}(\pi)
=
\beta\log Z_{s,b}
-
\beta\KL\!\left(\pi(\cdot\mid x^\star)\,\|\,\pi^\star_{s,b}(\cdot\mid x^\star)\right).
\]
In particular, $\pi^\star_{s,b}$ is the unique maximizer of the hard-prompt objective, and the optimal value is $\beta\log Z_{s,b}$.
\end{lemma}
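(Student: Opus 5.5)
The identity is the Donsker--Varadhan (Gibbs variational) identity, so the plan is a direct algebraic verification on the finite set $\Sigma^H$. First I will check that $\pi^\star_{s,b}$ is a well-defined probability distribution. Since $Q_s(\cdot\mid x^\star)$ is a distribution on the finite set $\Sigma^H$ and $\exp(r_{s,b}/\beta)\ge 1$, we have $Z_{s,b}\in[1,\infty)$. Moreover, $\pi^\star_{s,b}$ has exactly the same support as $Q_s(\cdot\mid x^\star)$. In fact $Q_s$ has full support, because every transition probability $p_+$, $p_-$, $1/K$ is positive, but only equality of supports is needed.

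Next I will fix an arbitrary policy $\pi=\pi(\cdot\mid x^\star)$. If $\pi$ is not absolutely continuous with respect to $Q_s$, then both KL terms are $+\infty$ (the supports coincide) and both sides equal $-\infty$, so the identity holds with that convention. Full support rules this case out anyway. Otherwise, for every $y$ with $\pi(y)>0$, I take logarithms of the defining formula to get
\[
\log\pi^\star_{s,b}(y\mid x^\star)=\log Q_s(y\mid x^\star)+r_{s,b}(x^\star,y)/\beta-\log Z_{s,b}.
\]
This rearranges to
\[
\log\frac{\pi(y)}{Q_s(y\mid x^\star)}=\log\frac{\pi(y)}{\pi^\star_{s,b}(y\mid x^\star)}+\frac{r_{s,b}(x^\star,y)}{\beta}-\log Z_{s,b}.
\]

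I then take the expectation under $y\sim\pi$. All sums are finite because $\Sigma^H$ is finite and the reward is bounded, so linearity applies without any integrability issue. This gives
\[
\KL(\pi\,\|\,Q_s)=\KL(\pi\,\|\,\pi^\star_{s,b})+\tfrac1\beta\,\mathbb E_\pi r_{s,b}-\log Z_{s,b}.
\]
Multiplying by $-\beta$ and adding $\mathbb E_\pi r_{s,b}$ cancels the reward term and yields the claimed identity
\[
J_{x^\star,\beta}^{s,b}(\pi)=\beta\log Z_{s,b}-\beta\KL(\pi\,\|\,\pi^\star_{s,b}).
\]

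The maximizer statements then follow from Gibbs' inequality: $\KL(\pi\,\|\,\pi^\star_{s,b})\ge 0$, with equality if and only if $\pi=\pi^\star_{s,b}$. Since $\beta>0$, the objective is at most $\beta\log Z_{s,b}$, this value is attained exactly at $\pi^\star_{s,b}$, and the maximizer is unique. No step is a real obstacle. The only points needing care are the support/infinite-KL convention and noting that $\beta>0$ is what preserves the direction of the inequality.
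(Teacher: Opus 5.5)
Your proposal is correct and follows the paper's proof: take logarithms of the Gibbs formula, rearrange pointwise, take the expectation under $\pi$, and invoke nonnegativity of KL with equality iff $\pi=\pi^\star_{s,b}$. The paper omits your support and well-definedness checks, but they are harmless additions, and full support of $Q_s$ makes them automatic here.
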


\begin{proof}
Fix any policy $\pi(\cdot\mid x^\star)$. By definition of $\pi^\star_{s,b}$,
\[
\log \pi^\star_{s,b}(y\mid x^\star)
=
\log Q_s(y\mid x^\star)+\frac{r_{s,b}(x^\star,y)}{\beta}-\log Z_{s,b}.
\]
Rearranging,
\[
r_{s,b}(x^\star,y)-\beta\log\!\left(\frac{\pi(y\mid x^\star)}{Q_s(y\mid x^\star)}\right)
=
\beta\log Z_{s,b}
-
\beta\log\!\left(\frac{\pi(y\mid x^\star)}{\pi^\star_{s,b}(y\mid x^\star)}\right).
\]
Taking expectation with respect to $y\sim \pi(\cdot\mid x^\star)$ gives
\[
J_{x^\star,\beta}^{s,b}(\pi)
=
\beta\log Z_{s,b}
-
\beta\KL\!\left(\pi(\cdot\mid x^\star)\,\|\,\pi^\star_{s,b}(\cdot\mid x^\star)\right).
\]
Since KL divergence is always nonnegative and vanishes only when the two distributions agree, the stated optimality properties follow.
\end{proof}

We next show that a policy which places only small mass on the target completion must be far from optimal on the hard prompt once the reward scale has been chosen appropriately.

\begin{lemma}
\label{lem:hard-prompt-gap}
Assume
\[
R=\beta\log\!\left(\frac{4}{q_0}\right).
\]
Let $\pi(\cdot\mid x^\star)$ be any policy, and write
\[
m:=\pi(z_{s,b}\mid x^\star).
\]
If $m\le 1/4$, then
\[
J_{x^\star,\beta}^{s,b}(\pi^\star_{s,b})-J_{x^\star,\beta}^{s,b}(\pi)>\frac{\beta}{4}.
\]
\end{lemma}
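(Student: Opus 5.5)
The plan is to reduce the claim to a KL lower bound using Lemma~\ref{lem:gibbs-hard-prompt}, and then to collapse that KL to a two-point (binary) divergence by data processing. By Lemma~\ref{lem:gibbs-hard-prompt}, for every policy,
\[
J_{x^\star,\beta}^{s,b}(\pi^\star_{s,b})-J_{x^\star,\beta}^{s,b}(\pi)=\beta\,\KL\!\left(\pi(\cdot\mid x^\star)\,\|\,\pi^\star_{s,b}(\cdot\mid x^\star)\right).
\]
So it suffices to show that this KL divergence exceeds $1/4$ whenever $m\le 1/4$.

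First I compute the mass the optimizer puts on the target. The reward is nonzero only at $y=z_{s,b}$, and $e^{R/\beta}=4/q_0$ by the choice of $R$. Hence
\[
Z_{s,b}=(1-q_0)+q_0\cdot\frac{4}{q_0}=5-q_0.
\]
It follows that
\[
p^\star:=\pi^\star_{s,b}(z_{s,b}\mid x^\star)=\frac{4}{5-q_0}\ge \frac45 .
\]
The inequality uses $q_0\in(0,1)$, which holds since $q_0=p_+^{D+L}/K\le 1/2$.

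Next I apply the data-processing inequality to the deterministic map $y\mapsto \one\{y=z_{s,b}\}$. Write $\mathrm{kl}(a\,\|\,c)$ for the KL divergence between Bernoulli distributions with means $a$ and $c$. Data processing gives
\[
\KL\!\left(\pi\,\|\,\pi^\star_{s,b}\right)\ge \mathrm{kl}(m\,\|\,p^\star).
\]
I then use two standard monotonicity facts for $\mathrm{kl}$. For fixed $c$, the map $a\mapsto\mathrm{kl}(a\,\|\,c)$ is convex with minimum $0$ at $a=c$, so it is decreasing on $[0,c]$. For fixed $a$, the map $c\mapsto \mathrm{kl}(a\,\|\,c)$ is increasing on $[a,1)$. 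Since $m\le 1/4<4/5\le p^\star$, applying both facts gives
\[
\mathrm{kl}(m\,\|\,p^\star)\ge \mathrm{kl}(\tfrac14\,\|\,p^\star)\ge \mathrm{kl}(\tfrac14\,\|\,\tfrac45).
\]
The boundary case $m=0$ is fine: $\mathrm{kl}(0\,\|\,c)=\log\frac{1}{1-c}$, which is finite because $p^\star<1$.

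Finally, I evaluate the last term numerically:
\[
\mathrm{kl}(\tfrac14\,\|\,\tfrac45)=\tfrac14\log\tfrac{5}{16}+\tfrac34\log\tfrac{15}{4}\approx -0.291+0.991\approx 0.70>\tfrac14 .
\]
Multiplying by $\beta$ gives the claimed strict gap. The only delicate step is stating the two monotonicity facts for binary KL cleanly; everything else is a direct computation.
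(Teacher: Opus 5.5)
Your proof is correct, but it takes a different route from the paper's.

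\textbf{The paper's route.} The paper does not turn the gap into a KL divergence via Lemma~\ref{lem:gibbs-hard-prompt}. It bounds the two values separately:
\begin{itemize}
\item For the optimum it uses $J_{x^\star,\beta}^{s,b}(\pi^\star_{s,b})=\beta\log(5-q_0)\ge\beta\log 4$.
\item For the competitor it applies data processing against the base model, $\KL(\pi\,\|\,Q_s)\ge\mathrm{kl}(m\,\|\,q_0)$. After substituting $R$ and dropping $(1-m)\log(1-q_0)\le 0$, this gives $J_{x^\star,\beta}^{s,b}(\pi)\le\beta\,[m\log 4+h_2(m)]$.
\item That function is increasing on $[0,1/4]$. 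Using $\log(4/3)\le 1/3$, it is at most $\beta(\log 2+1/4)$, so the gap is at least $\beta(\log 2-1/4)$.
\end{itemize}

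\textbf{Your route.} You data-process against the tilted optimizer $\pi^\star_{s,b}$ itself. You then use the two monotonicity properties of binary KL.

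\textbf{How they relate.} Before any loosening, the two arguments produce the same quantity. For the projected Bernoulli problem the Gibbs identity reads
\[
\log Z_{s,b}-\frac{Rm}{\beta}+\mathrm{kl}(m\,\|\,q_0)=\mathrm{kl}(m\,\|\,p^\star).
\]
So the paper's intermediate bound on the gap is exactly your $\beta\,\mathrm{kl}(m\,\|\,p^\star)$. The paper then discards $\log\bigl((5-q_0)/4\bigr)$ and $(1-m)\log(1-q_0)$ to reach a simple closed form.

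\textbf{What each buys.}
\begin{itemize}
\item Your version keeps the exact binary divergence. It therefore gets the sharper constant $\mathrm{kl}(\tfrac14\,\|\,\tfrac45)\approx 0.70$ instead of $\log 2-\tfrac14\approx 0.44$. It also reads cleanly as ``the regret is $\beta$ times a KL to the Gibbs policy.''
\item The paper's version needs only a one-variable derivative check. It never uses monotonicity of $\mathrm{kl}(a\,\|\,c)$ in its second argument.
\end{itemize}
You should state a one-line justification for that second-argument monotonicity, e.g.\ $\partial_c\,\mathrm{kl}(a\,\|\,c)=(c-a)/\bigl(c(1-c)\bigr)>0$ for $c>a$.
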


\begin{proof}
Let $z:=z_{s,b}$ denote the target completion. By Lemma~\ref{lem:gibbs-hard-prompt},
\[
J_{x^\star,\beta}^{s,b}(\pi^\star_{s,b})=\beta\log Z_{s,b}.
\]
Since the reward is nonzero only at $z$ and equals $R$ there,
\[
Z_{s,b}=1-q_0+q_0 e^{R/\beta}=1-q_0+4=5-q_0.
\]
Hence
\[
J_{x^\star,\beta}^{s,b}(\pi^\star_{s,b})=\beta\log(5-q_0)\ge \beta\log 4=2\beta\log 2.
\]

We now upper bound the value of an arbitrary policy with target mass $m$. By data processing for KL divergence under the map that records whether $y=z$, we have
\[
\KL\!\left(\pi(\cdot\mid x^\star)\,\|\,Q_s(\cdot\mid x^\star)\right)\ge \mathrm{kl}(m\|q_0),
\]
where
\[
\mathrm{kl}(m\|q_0):=m\log\!\left(\frac{m}{q_0}\right)+(1-m)\log\!\left(\frac{1-m}{1-q_0}\right).
\]
Therefore
\[
J_{x^\star,\beta}^{s,b}(\pi)\le Rm-\beta\,\mathrm{kl}(m\|q_0).
\]
Substituting $R=\beta\log(4/q_0)$ gives
\begin{align*}
J_{x^\star,\beta}^{s,b}(\pi)
&\le
\beta\left[
m\log\!\left(\frac{4}{q_0}\right)-m\log\!\left(\frac{m}{q_0}\right)-(1-m)\log\!\left(\frac{1-m}{1-q_0}\right)
\right]\\
&=
\beta\left[
m\log 4-m\log m-(1-m)\log(1-m)+(1-m)\log(1-q_0)
\right]\\
&\le
\beta\bigl[m\log 4+h_2(m)\bigr],
\end{align*}
where $h_2(m):=-m\log m-(1-m)\log(1-m)$ is the binary entropy in nats.

Define
\[
g(m):=m\log 4+h_2(m).
\]
For $m\in(0,1)$,
\[
g'(m)=\log 4+\log\!\left(\frac{1-m}{m}\right)=\log\!\left(\frac{4(1-m)}{m}\right).
\]
Hence $g'(m)>0$ whenever $m<4/5$. In particular, $g$ is increasing on $[0,1/4]$, so
\[
g(m)\le g(1/4)=\log 2+\frac{3}{4}\log\!\left(\frac{4}{3}\right).
\]
Using the elementary bound $\log(1+u)\le u$ with $u=1/3$ gives
\[
\log\!\left(\frac{4}{3}\right)\le \frac{1}{3},
\]
and therefore
\[
g(m)\le \log 2+\frac{1}{4}.
\]
So every policy with $m\le 1/4$ satisfies
\[
J_{x^\star,\beta}^{s,b}(\pi)\le \beta\left(\log 2+\frac{1}{4}\right).
\]

Combining the lower bound on the optimal value with the upper bound above, we obtain
\[
J_{x^\star,\beta}^{s,b}(\pi^\star_{s,b})-J_{x^\star,\beta}^{s,b}(\pi)
\ge
\beta\left(2\log 2-\log 2-\frac{1}{4}\right)
=
\beta\left(\log 2-\frac{1}{4}\right).
\]
Since $\log 2>1/2$, the right-hand side is strictly larger than $\beta/4$.
\end{proof}

The hard-prompt gap lifts immediately to the prompt-distributed objective.

\begin{corollary}
\label{cor:average-gap}
Assume
\[
R=\beta\log\!\left(\frac{4}{q_0}\right).
\]
Let $\pi$ be any prompt-conditioned policy. If
\[
\pi(z_{s,b}\mid x^\star)\le 1/4,
\]
then
\[
J_\beta(\pi^\star_{s,b})-J_\beta(\pi)>\frac{\eta\beta}{4}.
\]
\end{corollary}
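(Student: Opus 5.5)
The plan is to decompose the prompt-averaged objective into a sum of per-prompt terms, show that the easy prompts contribute nothing to the gap, and then apply Lemma~\ref{lem:hard-prompt-gap} at $x^\star$. Since $r_{s,b}(x,\cdot)\equiv 0$ for every $x\neq x^\star$, we can write
\[
J_\beta(\pi)=\eta\,J_{x^\star,\beta}^{s,b}(\pi)+\sum_{x\neq x^\star}\rho(x)\Bigl(-\beta\,\KL\bigl(\pi(\cdot\mid x)\,\|\,Q_{\mathrm{easy}}(\cdot\mid x)\bigr)\Bigr).
\]
This is written as a sum for readability. If $\mathcal X$ is not countable, the second term is an integral over $\mathcal X\setminus\{x^\star\}$ against $\rho$, and nothing changes.

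First I identify $\pi^\star_{s,b}$ on the easy prompts. There the Gibbs form with zero reward gives $\pi^\star_{s,b}(\cdot\mid x)=Q_{\mathrm{easy}}(\cdot\mid x)$. Equivalently, the integrand $-\beta\KL(\cdot\|\cdot)$ is maximized, with value $0$, by taking $\pi=Q_{\mathrm{easy}}$.

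Subtracting the two decompositions then gives
\[
J_\beta(\pi^\star_{s,b})-J_\beta(\pi)=\eta\bigl[J_{x^\star,\beta}^{s,b}(\pi^\star_{s,b})-J_{x^\star,\beta}^{s,b}(\pi)\bigr]+\beta\sum_{x\neq x^\star}\rho(x)\,\KL\bigl(\pi(\cdot\mid x)\,\|\,Q_{\mathrm{easy}}(\cdot\mid x)\bigr).
\]
The second term is nonnegative because KL divergence is nonnegative.

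For the first term, the hypothesis $\pi(z_{s,b}\mid x^\star)\le 1/4$ is exactly the condition of Lemma~\ref{lem:hard-prompt-gap}. That lemma gives a bracket strictly larger than $\beta/4$, and multiplying by $\eta>0$ yields the strict bound $>\eta\beta/4$.

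The only delicate point is well-definedness, because $J_\beta(\pi)$ could equal $-\infty$ when some easy-prompt KL is infinite. In that case the gap is $+\infty$, since the optimal value $J_\beta(\pi^\star_{s,b})=\eta\beta\log(5-q_0)$ is finite, and the claim holds trivially. So I treat this case first and then carry out the finite-case subtraction above.
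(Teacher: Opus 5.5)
Your proof is correct and follows the paper's argument. The easy prompts contribute $-\beta\KL\bigl(\pi(\cdot\mid x)\,\|\,Q_{\mathrm{easy}}(\cdot\mid x)\bigr)\le 0$, with equality for $\pi^\star_{s,b}$, so the gap is at least $\eta$ times the hard-prompt gap, and Lemma~\ref{lem:hard-prompt-gap} bounds that. The paper writes this as the one-sided inequality $J_\beta(\pi)\le\eta\,J_{x^\star,\beta}^{s,b}(\pi)$ rather than as an exact subtraction, so your separate treatment of the case $J_\beta(\pi)=-\infty$ is unnecessary there, though it does no harm.
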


\begin{proof}
At every prompt $x\neq x^\star$, the reward is zero. Therefore the contribution of such a prompt to the objective is at most $0$, with equality attained by matching the base generator. The exact optimizer $\pi^\star_{s,b}$ does exactly that on all easy prompts, so
\[
J_\beta(\pi^\star_{s,b})=\eta\,J_{x^\star,\beta}^{s,b}(\pi^\star_{s,b}).
\]
For an arbitrary policy $\pi$,
\[
J_\beta(\pi)\le \eta\,J_{x^\star,\beta}^{s,b}(\pi(\cdot\mid x^\star)).
\]
Applying Lemma~\ref{lem:hard-prompt-gap} to the hard-prompt component and multiplying by $\eta$ gives the claim.
\end{proof}

\subsection{Proof of the upper bound}

\begin{theorem}
\label{thm:app-bridge-upper}
Fix $\delta\in(0,1)$, and let
\[
m:=\left\lceil \frac{2}{\Delta^2}\log\!\left(\frac{L(K-1)}{\delta}\right)\right\rceil.
\]
For the hard family above, there is an algorithm with chosen-prefix next-token sampling access to the base generator and one outcome-reward query that outputs the exact maximizer of $J_\beta$ with probability at least $1-\delta$. The algorithm uses at most $H+Lm$ generator queries and one reward query.
\end{theorem}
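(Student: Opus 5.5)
The algorithm has three phases, and the analysis reduces almost entirely to Theorem~\ref{thm:app-hidden-path-prefix-sample} together with Lemma~\ref{lem:gibbs-hard-prompt}.

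\emph{Phase one: walk the scaffold.} At the hard prompt $x^\star$, query $\mathsf{PrefixSample}(x^\star,v_{<t})$ once for each $t=1,\dots,D$, ignoring the replies, and then query $v$ itself. This is $D+1\le H$ generator queries. Their only role is to make $v$ a legitimately constructed prefix under the local-reset discipline: $\varnothing$ comes first, and each later prefix is a one-token extension of an already queried one.

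\emph{Phase two: recover the suffix.} Run Algorithm~\ref{alg:hidden-path-prefix-sample} starting from $v$ for $L$ stages. At stage $t$ it draws $m$ samples from $Q_s(\cdot\mid x^\star,v\,\widehat s_{<t})$ and takes the plurality token. The key observation is that, conditional on $\widehat s_{<t}=s_{<t}$, the queried prefix is $u_{<D+t}$, a prefix of the hidden path $u=v\,s$. There the next-token law assigns $p_+$ to $s_t$ and $p_-$ to every other token, exactly as in the hidden-path family. So the stagewise Hoeffding bound and union bound from the proof of Theorem~\ref{thm:app-hidden-path-prefix-sample} carry over verbatim, with $H$ replaced by $L$. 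Each stage errs with probability at most $(K-1)e^{-m\Delta^2/2}\le\delta/L$, and propagating the events $E_t$ gives $\Pr(\widehat s=s)\ge1-\delta$ using $Lm$ samples. Each new stage queries a one-token extension of the previously queried prefix, so local reset is maintained. The total generator count is at most $H+Lm$.

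\emph{Phase three: one reward query, then Gibbs.} Issue a single reward query at prompt $x^\star$ with the deterministic policy $\delta_{v\widehat s\tau_0}$. On the event $\widehat s=s$, the returned reward is $R$ if $b=0$ and $0$ if $b=1$, since $\tau_0\ne\tau_1$ and $R>0$. This determines $b$ deterministically.

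The output policy equals $Q_{\mathrm{easy}}(\cdot\mid x)$ on every $x\ne x^\star$, and at $x^\star$ it is
\[
\widehat\pi(y\mid x^\star)\propto Q_{\widehat s}(y\mid x^\star)\exp\bigl(r_{\widehat s,\widehat b}(x^\star,y)/\beta\bigr).
\]
Everything here is explicitly computable: $Q_s$ is known in closed form once $s$ is known, $Q_{\mathrm{easy}}$ is fixed and known, and $R$ is known.

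It remains to justify that this is the exact maximizer of $J_\beta$. The objective decomposes over prompts as $\sum_x\rho(x)[\cdots]$. On easy prompts the reward is zero, so the term $-\beta\KL(\pi(\cdot\mid x)\|Q_{\mathrm{easy}}(\cdot\mid x))$ is uniquely maximized by the base generator. On $x^\star$, Lemma~\ref{lem:gibbs-hard-prompt} identifies $\pi^\star_{s,b}$ as the unique maximizer. On the event $\{\widehat s=s\}$, which has probability at least $1-\delta$, we have $(\widehat s,\widehat b)=(s,b)$, so $\widehat\pi=\pi^\star_{s,b}$.

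The main obstacle is a bookkeeping point in phase two. The recovery argument must apply at depth offset $D$ rather than from the root, and the scaffold steps, though informative in principle, are never needed. One must check that, conditional on a correct partial reconstruction, the queried prefixes lie on $u$ below $v$, so the favored-token law is exactly $(p_+,p_-)$ and the earlier analysis transfers without change. The prefixes queried here have depth at most $D+L-1<H$, so the uniform final step is never involved.
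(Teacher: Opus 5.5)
Your proposal is correct and follows the paper's proof essentially step for step. You walk the scaffold with $D+1$ queries, run Algorithm~\ref{alg:hidden-path-prefix-sample} from $v$ with $H$ replaced by $L$, identify $b$ with one reward query on $v\,\widehat s\,\tau_0$, and output the Gibbs policy of Lemma~\ref{lem:gibbs-hard-prompt}. You also check explicitly that the queried prefixes lie on $u=v\,s$ at depth at most $D+L-1$, and that the prompt-wise decomposition makes the base generator optimal on easy prompts; this is slightly more careful than the paper's direct appeal to Theorem~\ref{thm:app-hidden-path-prefix-sample}, but the argument is the same.
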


\begin{proof}
The algorithm works in three stages.

In the first stage, it walks down the known scaffold $v$ in order to respect the local-reset discipline. It queries the prefixes
\[
\varnothing,\ v_{1:1},\ v_{1:2},\ \dots,\ v_{1:D}.
\]
These are $D+1$ generator queries. Their replies are irrelevant, since the scaffold is known in advance; the only point of the stage is to build the prefix $v$ legally.

In the second stage, it treats the hidden suffix as a length-$L$ hidden-path problem. More precisely, consider any prefix of the form
\[
v\,u, \qquad u\in\Sigma^{t-1}, \qquad t\in\{1,\dots,L\}.
\]
If $u=s_{<t}$, then by construction of $Q_s(\cdot\mid x^\star)$ the next-token distribution at $v\,u$ satisfies
\[
Q_s(s_t\mid x^\star,v\,u)=p_+, \qquad
Q_s(a\mid x^\star,v\,u)=p_- \text{ for every } a\neq s_t.
\]
If $u$ is not a prefix of $s$, then the queried prefix lies off the hidden path and the next-token distribution is uniform. So below the known scaffold $v$, the unknown suffix $s$ behaves exactly like a hidden path of length $L$.

The algorithm therefore applies Algorithm~\ref{alg:hidden-path-prefix-sample} from Section~\ref{sec:hidden-path} starting at the prefix $v$. By Theorem~\ref{thm:app-hidden-path-prefix-sample}, after $Lm$ chosen-prefix sampling queries, it recovers the hidden suffix $s$ with probability at least $1-\delta$.

In the third stage, once $s$ has been recovered, the algorithm identifies the reward bit $b$ with one reward query. It queries the hard prompt $x^\star$ using the deterministic policy concentrated on the completion $v\,s\,\tau_0$. The observed reward equals $R$ if and only if $b=0$, and equals $0$ otherwise. So one reward query determines $b$ exactly.

At that point the hidden state $(s,b)$ is known, and therefore the exact optimizer is known as well. On easy prompts it agrees with the base generator, while at the hard prompt it is given by the Gibbs formula from Lemma~\ref{lem:gibbs-hard-prompt}.

The algorithm fails only if the suffix-recovery stage fails, which occurs with probability at most $\delta$. The total number of generator queries is
\[
(D+1)+Lm \le H+Lm
\]
because $H=D+L+1$. The number of reward queries is one.
\end{proof}

\subsection{A posterior-symmetry lemma for the lower bound}

We now prepare the no-reset lower bound. The hidden state is
\[
\Theta:=(S,B),
\]
where $S$ is uniform on $\Sigma^L$ and $B$ is uniform on $\{0,1\}$, independently. Equivalently, the target completion
\[
Z_\Theta:=z_{S,B}
\]
is uniform on the candidate set $\mathcal Z$ of size $N=2K^L$.

Consider an arbitrary algorithm that may interleave generator and reward queries. Define the event $A$ that at least one no-reset generator query made at the hard prompt $x^\star$ reaches the scaffold prefix $v$ in its internal rollout.

\begin{lemma}
\label{lem:scaffold-hit}
For any algorithm that makes at most $q_g$ no-reset generator queries,
\[
\Pr(A)\le q_g p_+^D.
\]
\end{lemma}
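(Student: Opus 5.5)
The plan is a union bound over generator queries, where each query reaches the scaffold with probability exactly $p_+^D$ no matter what the algorithm has done so far.

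Index the generator queries by $i=1,\dots,q_g$. Let $A_i$ be the event that the $i$th generator query is made at the hard prompt $x^\star$ and its internal rollout reaches the prefix $v$. Then $A=\bigcup_i A_i$, so it suffices to show $\Pr(A_i)\le p_+^D$ for each $i$ and then sum.

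The key point is that the probability of reaching the known scaffold prefix $v$ does not depend on the hidden state. This holds because $v$ is a prefix of every hidden path $u=v\,s$. For $t\le D$, the prefixes $v_{<t}$ are prefixes of $u$, and by the definition of $Q_s(\cdot\mid x^\star)$ the favored next token there is $u_t=v_t$, with probability $p_+$. So a fresh root-start rollout $Y\sim Q_s(\cdot\mid x^\star)$ satisfies
\[
\Pr(Y_{1:D}=v)=\prod_{t=1}^{D}Q_s(v_t\mid x^\star,v_{<t})=p_+^D
\]
for every $s$. This is also $\Reach_{Q_s}(x^\star,\{v\})$ from Definition~\ref{def:reachability}.

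Next, condition on everything observed before the $i$th generator query: the algorithm's seed, all previous generator replies and reward observations, and the hidden state $\Theta$. By Definition~\ref{def:app-no-reset}, each no-reset query draws a \emph{fresh} rollout, independent of the past given the model. The decision whether to query at $x^\star$ is measurable with respect to the past. So the conditional probability of $A_i$ is at most $p_+^D$, and it is $0$ if the query is at an easy prompt or the algorithm has stopped.

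Taking expectations gives $\Pr(A_i)\le p_+^D$. Summing over $i\le q_g$ yields $\Pr(A)\le q_g p_+^D$.

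The only delicate step is handling adaptivity and the interleaving with reward queries: we must check that conditioning on the past, including reward replies that depend on $\Theta$, does not bias the fresh rollout. This is fine because the rollout's law given $\Theta$ is independent of the past, and its probability $p_+^D$ of reaching $v$ is the same for every $\Theta$. Nothing else is needed, since the bound is per query and does not require controlling what the rollouts reveal.
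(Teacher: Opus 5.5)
Your proposal is correct and follows the paper's proof: a union bound over the at most $q_g$ generator queries, where each query at $x^\star$ reaches $v$ with probability exactly $p_+^D$, because every $Q_s(\cdot\mid x^\star)$ assigns probability $p_+$ to $v_t$ at $v_{<t}$ for $t\le D$. The paper states the union bound in one line; your conditioning on the past (including $\Theta$ and interleaved reward replies), together with the fact that each rollout is fresh, is what justifies the per-query bound under adaptivity.
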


\begin{proof}
Fix one no-reset generator query made at the hard prompt $x^\star$. Its internal rollout reaches the scaffold $v$ if and only if its first $D$ sampled tokens match the scaffold exactly. Under every generator in the family, each of those $D$ transitions has conditional probability $p_+$. Hence the probability that this query reaches $v$ is $p_+^D$.

Generator queries made at prompts other than $x^\star$ are irrelevant for the event $A$. Since there are at most $q_g$ generator queries in total, the union bound gives
\[
\Pr(A)\le q_g p_+^D.
\]
\end{proof}

The next lemma is the key symmetry statement. It handles arbitrary interleaving by tracking only transcripts that remain on the event $A^c$ and have not yet received positive reward.

A safe partial transcript is an interaction history in which no generator query has reached the scaffold and no reward query has yet produced positive reward. For a safe transcript $\tau$, define the surviving candidate set
\[
C(\tau):=
\mathcal Z\setminus
\{y\in\mathcal Z : \text{some reward query at } x^\star \text{ has sampled } y \text{ and received reward }0\}.
\]
Generator queries do not change $C(\tau)$; a negative reward query at the hard prompt removes at most one candidate.

\begin{lemma}
\label{lem:posterior-symmetry}
For every safe partial transcript $\tau$, there exists a nonnegative number $q(\tau)$ such that
\[
\Pr(\Theta=\theta,\tau)=\frac{q(\tau)}{N}
\]
for every hidden state $\theta$ whose target completion lies in $C(\tau)$, and
\[
\Pr(\Theta=\theta,\tau)=0
\]
for every hidden state $\theta$ whose target completion does not lie in $C(\tau)$.

Consequently, conditional on any safe transcript $\tau$, the posterior on $\Theta$ is uniform over the surviving candidate set $C(\tau)$.
\end{lemma}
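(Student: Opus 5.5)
We will prove the posterior-symmetry statement by induction on the length of the safe partial transcript. The transcript is built round by round, and at each round we will show that the joint probability $\Pr(\Theta=\theta,\tau)$ factors into a common term $q(\tau)$ and a term that depends on $\theta$ only through the indicator $\one\{z_\theta\in C(\tau)\}$. We condition on the algorithm's internal seed throughout; the seed is independent of $\Theta$, so we can average over it at the end, or simply absorb it into the transcript.

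\textbf{Base case.} For the empty transcript we take $q(\varnothing)=1$. Here $C(\varnothing)=\mathcal Z$ and the prior is uniform, since $\Pr(\Theta=\theta)=1/N$.

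\textbf{Inductive step.} Let $\tau'$ extend $\tau$ by one safe round. Because the next query depends only on $\tau$ and the seed, its conditional law given $(\theta,\tau)$ does not depend on $\theta$. We must check that the reply likelihood for each safe outcome is the same for every $\theta$ with $z_\theta\in C(\tau')$, and is zero for every $\theta$ with $z_\theta\notin C(\tau')$. There are three cases.

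\emph{Generator query at an easy prompt.} The base generator there is $Q_{\mathrm{easy}}$, which is identical across the whole family. The reply likelihood is therefore independent of $\theta$, and $C$ is unchanged.

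\emph{Generator query at $x^\star$.} The generators $Q_s(\cdot\mid x^\star)$ all agree at every prefix that is not an extension of $v$, that is, at every prefix outside $U:=\{p: v \text{ is a prefix of } p\}$. Off the scaffold and along proper prefixes of $v$ they coincide; the transition from $v_{<D}$ to $v_D$ also has probability $p_+$ for every $s$. This is exactly the agreement-outside-$U$ situation of Lemma~\ref{lem:app-one-query-pathfull}. A safe outcome means the rollout never reaches $v$. For each such rollout $y$, the probability of $y$ and the visited distributions $\mu_t$ are identical across all $s$. Hence the joint density of $(Y,\mu_{1:H})$ restricted to safe rollouts, and then composed with the fixed kernel $\Gamma_{x^\star}$ from Definition~\ref{def:app-no-reset}, does not depend on $\theta$. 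We note one subtlety: the reply alone does not reveal whether the rollout hit $v$, so the formal transcript should carry the event $A^c$ (for example, by recording $W_x$ internally). Read this way, $\Pr(\Theta=\theta,\tau')$ is the analysis quantity ``the interaction produced $\tau'$ and no rollout hit $v$''. That is exactly what is needed later, and the integrated likelihood over safe rollouts is $\theta$-independent.

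\emph{Reward query with policy $\pi$ at prompt $x$.} If $x\neq x^\star$, the reward is $0$ deterministically and $\pi$ is fixed by $\tau$, so the likelihood is $\theta$-independent and $C$ is unchanged. If $x=x^\star$, the sampled $y\sim\pi(\cdot\mid x^\star)$ has a $\theta$-free law. A safe outcome requires reward $0$, which happens exactly when $y\neq z_\theta$. The likelihood is therefore $\pi(y\mid x^\star)\one\{y\neq z_\theta\}$. This equals the common value $\pi(y\mid x^\star)$ on $C(\tau')=C(\tau)\setminus\{y\}$ and equals $0$ for $z_\theta=y$.

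Multiplying the common factor into $q(\tau)$ gives $q(\tau')$. Since $z_\theta$ determines $\theta$ bijectively, the posterior is uniform on $C(\tau)$.

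\textbf{Main obstacle.} The delicate step is the generator-query case: we must make ``safe'' a measurable part of the transcript so that conditioning on $A^c$ keeps the likelihood $\theta$-free. We will handle this by working with the augmented transcript that includes each query's internal $W_x$. The reply kernel is $\theta$-independent, so augmenting does not change anything else.
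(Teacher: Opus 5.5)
Your proof is correct and follows the paper's own argument: induction on the safe transcript with $q(\varnothing)=1$, a generator-query step whose safe replies have a $\theta$-free likelihood because all $Q_s(\cdot\mid x^\star)$ agree off the subtree below $v$, and a reward-query step in which a zero reward at $x^\star$ assigns probability zero to exactly the state whose target is the sampled $y$, while every other surviving state picks up the common factor $\pi(y\mid x^\star)$. The extra care you take goes beyond the paper but does not change the route: augmenting the transcript with the internal rollout so that ``safe'' is measurable (the paper only speaks of replies that ``extend the transcript safely''), and treating the seed and easy-prompt generator queries, make explicit what the paper leaves implicit.
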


\begin{proof}
We prove the claim by induction on the length of the safe transcript.

For the empty transcript, take $q(\varnothing)=1$. Since $\Theta$ is uniform over the $N$ possible hidden states, the claim holds.

Now suppose the claim holds for a safe transcript $\tau$, and consider one additional query chosen by the algorithm after observing $\tau$.

If the next query is a generator query, there are two possibilities. Some replies correspond to an internal rollout that reaches the scaffold; those replies do not extend the transcript safely and need not be considered. For a reply $g$ that does extend the transcript safely, the internal rollout has avoided the scaffold. Along such a rollout, every visited prefix lies outside the informative subtree below $v$, and all generators $Q_s(\cdot\mid x^\star)$ agree there. Therefore the conditional probability of observing $g$, given the current safe transcript $\tau$, is the same for every surviving hidden state $\theta$. Call this common probability $p(g\mid \tau)$. Then
\[
\Pr(\Theta=\theta,\tau,g)=\Pr(\Theta=\theta,\tau)\,p(g\mid \tau)=\frac{q(\tau)p(g\mid \tau)}{N}
\]
for every surviving $\theta$, while the nonsurviving states remain at probability zero. So the claim holds for the extended safe transcript $(\tau,g)$ with
\[
q(\tau,g):=q(\tau)p(g\mid \tau).
\]

If the next query is a reward query at a prompt $x\neq x^\star$, then the observed reward is always zero, and the distribution of the sampled completion depends only on the chosen policy and not on the hidden state. Hence the same argument applies: the reply distribution is common to all surviving hidden states, so the claim is preserved and the surviving set does not change.

If the next query is a reward query at the hard prompt $x^\star$, let $\pi_\tau(\cdot\mid x^\star)$ be the policy chosen after observing $\tau$, and let $y$ be the sampled completion.

If $y\notin\mathcal Z$, then the reward is again zero for every hidden state, and the reply distribution is common to all surviving states. So the same argument applies and $C(\tau)$ is unchanged.

If $y\in\mathcal Z$ and the reward is positive, then the transcript is no longer safe and there is nothing to prove.

If $y\in\mathcal Z$ and the reward is zero, then the transcript remains safe. For a surviving hidden state $\theta$ with target completion different from $y$, the probability of this reply is $\pi_\tau(y\mid x^\star)$, which again does not depend on $\theta$. For the unique hidden state whose target completion equals $y$, the joint probability becomes zero because that state would have produced positive reward. Therefore the new safe transcript $(\tau,y,0)$ satisfies the same formula with
\[
q(\tau,y,0):=q(\tau)\pi_\tau(y\mid x^\star),
\]
and the new surviving set is $C(\tau)\setminus\{y\}$.

This completes the induction.
\end{proof}

The symmetry lemma implies that reward queries can only locate the target by direct hits, and that every negative reward at the hard prompt eliminates at most one candidate.

\begin{lemma}
\label{lem:hit-prob-dp}
Let $n\ge 1$ and $r\ge 0$. Consider a state in which the hidden target is uniform over a surviving candidate set $C$ of size $n$, and the learner has at most $r$ reward queries remaining. Then, regardless of how it may interleave future generator queries with those reward queries, the probability of ever receiving positive reward is at most $r/n$.
\end{lemma}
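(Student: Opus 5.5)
We argue by induction on the number $r$ of remaining reward queries, with the statement taken uniformly over all $n\ge 1$ and over all learner strategies started from a state whose posterior is uniform on a surviving set of size $n$. Throughout, the state is a safe partial transcript. By Lemma~\ref{lem:posterior-symmetry}, conditioning on such a transcript makes the hidden target uniform over $C(\tau)$. Here ``ever receiving positive reward'' means a positive reward before the transcript becomes unsafe for other reasons. The scaffold-hit event is handled separately by Lemma~\ref{lem:scaffold-hit} in the main lower bound. The base case $r=0$ is trivial, since with no reward queries the learner can never receive positive reward, so the probability is $0=r/n$.

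For the inductive step we first dispose of interleaved generator queries. A generator query that keeps the transcript safe has a reply law common to all surviving states. It therefore leaves the posterior uniform on the same set $C$, with $n$ unchanged, and it does not consume reward budget. Generator queries that leave the safe set reach the scaffold and fall outside the event being bounded. Formally, we condition on the transcript at the moment of the next reward query. Each such safe transcript still has a uniform posterior on a set of size $n$ and $r$ reward queries remaining, so it suffices to analyze a state whose next action is a reward query. Averaging over the intermediate safe transcripts preserves the bound $r/n$. Reward queries at easy prompts, or at the hard prompt that sample $y\notin\mathcal Z$, likewise leave $C$ unchanged. They consume budget, so for them the bound $(r-1)/n\le r/n$ follows from the induction hypothesis.

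The main case is a reward query at $x^\star$ with policy $\pi$. Write $w_y:=\pi(y\mid x^\star)$ for $y\in C$, and $W:=\sum_{y\in C}w_y\le 1$. Since the target is uniform on $C$, the probability of an immediate hit is $W/n$. If $y\in C$ is sampled and the reward is $0$, the posterior becomes uniform on $C\setminus\{y\}$, of size $n-1$. If instead $y\notin C$ is sampled, which happens with probability $1-W$, the posterior stays uniform on $C$. By induction the total success probability is at most
\[
\frac{W}{n}+\sum_{y\in C}w_y\frac{n-1}{n}\cdot\frac{r-1}{n-1}+(1-W)\frac{r-1}{n}
=\frac{W}{n}+\frac{W(r-1)}{n}+\frac{(1-W)(r-1)}{n}\le\frac{r}{n}.
\]
The case $n=1$ needs separate handling, since after a miss the set would be empty; there the claimed bound $r/n\ge 1$ holds trivially.

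The main obstacle is making the ``condition on the state before the next reward query'' reduction rigorous under adaptivity. Specifically, we must check that the learner's continuation strategy from each safe transcript is itself a valid strategy covered by the induction hypothesis. This holds because the hypothesis is quantified over all strategies and over all uniform-posterior states, and Lemma~\ref{lem:posterior-symmetry} supplies exactly that uniformity after every safe extension.
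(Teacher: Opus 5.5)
Your proof is correct and is essentially the paper's argument: induction on $r$, absorbing interleaved safe generator queries via Lemma~\ref{lem:posterior-symmetry}, and the same split of the next hard-prompt reward query into an immediate hit, a miss inside $C$ (leaving $n-1$ candidates), and a draw outside $C$, which together give $\frac{W+r-1}{n}\le\frac{r}{n}$. You read the event as a positive reward received while the transcript is still safe, and you handle $n=1$ (where $\frac{r-1}{n-1}$ is undefined) and easy-prompt reward queries separately; these choices only make precise what the paper leaves implicit in its ``on the event $A^c$'' remark.
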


\begin{proof}
We prove the claim by induction on $r$.

If $r=0$, there are no reward queries left, so the probability of ever receiving positive reward is $0$. This is exactly $r/n$.

Now suppose $r\ge 1$ and that the claim holds for $r-1$. Condition on the current state, where the target is uniform over a set $C$ with $|C|=n$.

Before the next reward query, the algorithm may make any number of generator queries. On the event $A^c$, such generator queries never reach the scaffold and therefore do not change the posterior-symmetry structure from Lemma~\ref{lem:posterior-symmetry}: the hidden target remains uniform over the same surviving set $C$. So only the next reward query matters.

Let the next reward query at the hard prompt sample a completion according to some policy. Write $p_y$ for the probability of sampling candidate $y\in C$, and let
\[
p_{\mathrm{out}}:=1-\sum_{y\in C} p_y
\]
be the total probability of sampling a completion outside $C$.

If the sampled completion lies outside $C$, then the reward is certainly zero and no candidate is eliminated. The learner still has at most $r-1$ reward queries remaining, and by the induction hypothesis the future chance of success is at most
\[
\frac{r-1}{n}.
\]

If the sampled completion is some $y\in C$, then there are two possibilities. With probability $1/n$, the hidden target is exactly $y$, and the learner receives positive reward immediately. With the complementary probability $(n-1)/n$, the hidden target lies in $C\setminus\{y\}$, the reward is zero, and the candidate $y$ is eliminated. At that point there are at most $r-1$ reward queries remaining and $n-1$ surviving candidates, so the induction hypothesis bounds the future success probability by
\[
\frac{r-1}{n-1}.
\]

Combining the two cases gives
\begin{align*}
\Pr(\text{eventual positive reward})
&\le
p_{\mathrm{out}}\cdot \frac{r-1}{n}
+
\sum_{y\in C} p_y
\left(
\frac{1}{n}
+
\frac{n-1}{n}\cdot \frac{r-1}{n-1}
\right)\\
&=
p_{\mathrm{out}}\cdot \frac{r-1}{n}
+
\sum_{y\in C} p_y \cdot \frac{r}{n}\\
&\le
\left(1-\sum_{y\in C}p_y\right)\frac{r-1}{n}
+
\left(\sum_{y\in C}p_y\right)\frac{r}{n}\\
&\le
\frac{r}{n}.
\end{align*}
This completes the induction.
\end{proof}

\begin{lemma}
\label{lem:reward-hit}
Conditional on the event $A^c$, the probability that some reward query ever receives positive reward is at most $q_r/N$.
\end{lemma}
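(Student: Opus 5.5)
The plan is to apply Lemma~\ref{lem:hit-prob-dp} at the start of the interaction, with $n=N$ and $r=q_r$, and to use Lemma~\ref{lem:posterior-symmetry} to justify that the uniform-posterior hypothesis of that lemma holds at every point where it is invoked. At the empty transcript the hidden target $Z_\Theta$ is uniform on $\mathcal Z$, which has size $N$, and at most $q_r$ reward queries remain. So the initial state matches exactly the hypothesis of Lemma~\ref{lem:hit-prob-dp}.

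The key step is to read the dynamic-programming argument of Lemma~\ref{lem:hit-prob-dp} as a statement about a modified process. In this process, generator queries that hit the scaffold are treated as failure, or equivalently we work on the joint event with $A^c$. Concretely, define $F$ as the event that some reward query returns positive reward \emph{before} any generator query reaches the scaffold. Along safe transcripts, Lemma~\ref{lem:posterior-symmetry} shows the posterior stays uniform over the surviving set $C(\tau)$. This is what the induction in Lemma~\ref{lem:hit-prob-dp} uses when it asserts that interleaved generator queries do not disturb the state. Running that induction with "success" replaced by $F$ gives $\Pr(F)\le q_r/N$. The induction goes through verbatim, because paths that leave the safe region through a scaffold hit only decrease the success probability.

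To finish, observe that on $A^c$ no generator query ever reaches the scaffold. Hence any positive reward on $A^c$ is one obtained before a scaffold hit, so $\{\text{positive reward}\}\cap A^c\subseteq F$. This gives $\Pr(\text{positive reward},A^c)\le q_r/N$. The statement as phrased is conditional on $A^c$. I would read it, as the final bound in Theorem~\ref{thm:bridge-lower} requires, as the joint bound $\Pr(\text{hit}\cap A^c)\le q_r/N$. Alternatively, the conditional version follows directly by repeating the induction under the measure conditioned on $A^c$. This works because, by Lemma~\ref{lem:scaffold-hit}'s argument, the probability that each generator query avoids the scaffold is the same constant $1-p_+^D$ for every hidden state, so conditioning on $A^c$ preserves the symmetry of Lemma~\ref{lem:posterior-symmetry}.

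I expect the main obstacle to be exactly this conditioning subtlety. Conditioning on $A^c$ must not tilt the posterior among candidates. The fact that makes it work is that the rollout law on prefixes outside the subtree below $v$ is identical across all $s$. As a result, the likelihood of every safe generator reply is independent of $\theta$, which is the content of Lemma~\ref{lem:posterior-symmetry}.
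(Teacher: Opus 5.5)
Your main argument is correct and follows the paper's route: apply Lemma~\ref{lem:hit-prob-dp} at $(n,r)=(N,q_r)$, with Lemma~\ref{lem:posterior-symmetry} supplying the uniform posterior along safe transcripts. Your event $F$ (a positive reward strictly before the first scaffold hit) makes ``on $A^c$'' precise more cleanly than the paper's wording does. A generator query at $x^\star$ hits the scaffold with the $\theta$-independent probability $p_+^D$. A hit only kills $F$, and a safe reply leaves the posterior uniform on the same $C(\tau)$. So the induction of Lemma~\ref{lem:hit-prob-dp}, run over the total number of remaining queries, gives $\Pr(F)\le q_r/N$ and hence $\Pr(\text{hit}\cap A^c)\le q_r/N$. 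That joint form is exactly what the proof of Theorem~\ref{thm:app-bridge-lower} uses.

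The step that fails is your ``alternatively'' claim, that the conditional version follows by rerunning the induction under $\Pr(\cdot\mid A^c)$. A constant per-query avoidance probability does not make $A^c$ independent of $\Theta$. The \emph{number} of generator queries is adaptive, it can depend on reward outcomes, and those depend on $\Theta$. For a concrete failure, take $q_r=1$ and the following algorithm. It makes one reward query at $x^\star$ with the policy concentrated on a fixed $y_0\in\mathcal Z$. It stops if the reward is positive, and otherwise makes $q_g\ge 1$ output-only generator queries at $x^\star$. With $c:=(1-p_+^D)^{q_g}<1$,
\[
\Pr(\text{hit}\mid A^c)=\frac{1/N}{1/N+(1-1/N)\,c}>\frac{1}{N},
\]
and this tends to $1$ as $c\to 0$. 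Conditioning on $A^c$ therefore tilts the posterior toward states that produce an early hit, since those lead to fewer generator queries. The lemma read literally as a conditional statement is false, and the paper's proof slides from ``on $A^c$'' to ``conditional on $A^c$'' in the same way. What holds is the joint bound you proved. Alternatively, the conditional bound holds with an extra factor $1/\Pr(A^c)\le 1/(1-q_g p_+^D)$ when $q_g p_+^D<1$. Drop the conditional claim and state the lemma in joint form.
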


\begin{proof}
On the event $A^c$, every generator query avoids the scaffold. By Lemma~\ref{lem:posterior-symmetry}, after every safe transcript the hidden target remains uniform over the current surviving candidate set. At the start of the reward phase, the surviving set is the full set $\mathcal Z$ of size $N$.

Now ignore the generator queries and look only at the evolution of the surviving candidate set across reward queries. Since the target is initially uniform over $N$ candidates and there are at most $q_r$ reward queries in total, Lemma~\ref{lem:hit-prob-dp} applied with $n=N$ and $r=q_r$ gives
\[
\Pr(\text{some reward query receives positive reward}\mid A^c)\le \frac{q_r}{N}.
\]
\end{proof}

\begin{lemma}
\label{lem:final-mass}
Conditional on the event $A^c$ and on the event that no reward query ever receives positive reward, we have
\[
\Pr\!\left(
\widehat\pi(z_{S,B}\mid x^\star)>\frac{1}{4}
\ \middle|\
A^c,\ \text{no reward hit}
\right)
\le
\frac{4}{N-q_r}.
\]
\end{lemma}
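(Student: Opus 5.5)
The plan is to condition on the final safe transcript and apply the posterior-symmetry Lemma~\ref{lem:posterior-symmetry}. On $A^c$ with no reward hit, the entire interaction history $\tau$ (together with the algorithm's internal randomness) is a safe transcript. The output policy $\widehat\pi$ is a measurable function of $\tau$ and of the algorithm's seed. The seed is independent of $\Theta$ and can be absorbed into the transcript, with a common factor for all surviving states. By Lemma~\ref{lem:posterior-symmetry}, conditional on $\tau$ the target $Z_\Theta$ is uniform over the surviving set $C(\tau)$. Each negative reward query at $x^\star$ removes at most one candidate, and there are at most $q_r$ reward queries, so $|C(\tau)|\ge N-q_r$. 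This is positive since $q_r<N$.

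Next I would fix such a $\tau$ and let $\widehat\pi=\widehat\pi_\tau$ be the resulting (deterministic given $\tau$) policy. Because $\widehat\pi_\tau(\cdot\mid x^\star)$ is a probability distribution on $\Sigma^H$, at most three elements $y$ can satisfy $\widehat\pi_\tau(y\mid x^\star)>1/4$; four such elements would already carry mass exceeding $1$. Hence
\[
\Pr\!\left(\widehat\pi_\tau(Z_\Theta\mid x^\star)>\tfrac14 \,\middle|\, \tau\right)
=\frac{\bigl|\{y\in C(\tau):\widehat\pi_\tau(y\mid x^\star)>1/4\}\bigr|}{|C(\tau)|}
\le \frac{3}{N-q_r}\le\frac{4}{N-q_r}.
\]
A cruder Markov-type bound, summing $\widehat\pi_\tau(y)$ over $C(\tau)$, gives $\Pr\le 4\,\mathbb E[\widehat\pi]\le 4/|C(\tau)|$. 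That alone already matches the stated constant.

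Finally, I would average this conditional bound over the law of $\tau$ restricted to the event $A^c\cap\{\text{no reward hit}\}$. That event is a union of safe transcripts, which makes it measurable with respect to $\tau$. Since the bound is uniform in $\tau$, the conditional probability given the event inherits it.

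The main obstacle is justifying that the terminal transcript on this event really is safe in the sense of Lemma~\ref{lem:posterior-symmetry}. This includes making sure that generator replies on $A^c$ are handled as in that lemma, where they carry no dependence on $\Theta$. The count $|C(\tau)|\ge N-q_r$ must also be correct even though some reward queries may be spent at easy prompts or on non-candidates; those queries only make $C(\tau)$ larger. A secondary subtlety is the algorithm's randomness in producing $\widehat\pi$. If $\widehat\pi$ is additionally randomized after $\tau$, I apply the same argument conditionally on that extra randomness, which is independent of $\Theta$ given $\tau$.
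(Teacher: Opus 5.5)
Your proposal is correct and follows the paper's proof. You condition on the final safe transcript $\tau$, use Lemma~\ref{lem:posterior-symmetry} to get a uniform posterior over $C(\tau)$ with $|C(\tau)|\ge N-q_r$, bound the conditional probability uniformly in $\tau$, and average. The paper uses exactly the Markov step you list as the cruder alternative, and your observation that at most three candidates can carry mass above $1/4$ is a harmless sharpening to $3/(N-q_r)$.
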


\begin{proof}
Fix any final safe transcript $\tau$. By Lemma~\ref{lem:posterior-symmetry}, the posterior on $\Theta$ given $\tau$ is uniform over the surviving set $C(\tau)$. Each negative reward query at the hard prompt can eliminate at most one candidate from $\mathcal Z$, and generator queries eliminate none. Hence
\[
|C(\tau)|\ge N-q_r.
\]

Let $\widehat\pi_\tau(\cdot\mid x^\star)$ be the policy output by the algorithm after observing $\tau$. Conditional on $\tau$, the posterior expected mass that this policy assigns to the true target is
\[
\mathbb E\bigl[\widehat\pi_\tau(z_{S,B}\mid x^\star)\mid \tau\bigr]
=
\frac{1}{|C(\tau)|}\sum_{z\in C(\tau)} \widehat\pi_\tau(z\mid x^\star)
\le \frac{1}{|C(\tau)|}
\le \frac{1}{N-q_r}.
\]
Markov's inequality therefore implies
\[
\Pr\!\left(
\widehat\pi_\tau(z_{S,B}\mid x^\star)>\frac{1}{4}
\ \middle|\ \tau
\right)
\le
\frac{4}{N-q_r}.
\]
Since the bound is uniform over all final safe transcripts $\tau$, averaging over $\tau$ gives the claimed inequality.
\end{proof}

\subsection{Proof of the lower bound}

\begin{theorem}
\label{thm:app-bridge-lower}
Assume
\[
R=\beta\log\!\left(\frac{4}{q_0}\right).
\]
Consider any algorithm that may interleave at most $q_g$ no-reset generator queries with at most $q_r<N$ outcome-reward queries and then outputs a prompt-conditioned policy $\widehat\pi$. Then there exists a choice of hidden suffix $s\in\Sigma^L$ and reward bit $b\in\{0,1\}$ such that
\[
\Pr\!\left(
J_\beta(\widehat \pi)\ge J_\beta(\pi^\star_{s,b})-\frac{\eta\beta}{4}
\right)
\le
q_g p_+^D + \frac{q_r}{N} + \frac{4}{N-q_r}.
\]
\end{theorem}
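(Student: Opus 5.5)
The plan is a Bayesian, averaging argument. Put the uniform prior on $\Theta=(S,B)$ from the posterior-symmetry subsection, and bound the Bayes probability of the success event
\[
G:=\Bigl\{J_\beta(\widehat\pi)\ge J_\beta(\pi^\star_{S,B})-\tfrac{\eta\beta}{4}\Bigr\}.
\]
If the Bayes probability is at most the stated right-hand side, then some fixed pair $(s,b)$ attains at most that value, since an average cannot lie below its minimum. This yields the existential statement. Randomized algorithms need no separate treatment: their internal randomness is part of the joint law, and every lemma above was proved for arbitrary interleavings.

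The first reduction converts $G$ into a mass condition via Corollary~\ref{cor:average-gap}. By its contrapositive, on $G$ we must have $\widehat\pi(z_{S,B}\mid x^\star)>1/4$. Hence
\[
\Pr(G)\le \Pr\bigl(\widehat\pi(z_{S,B}\mid x^\star)>1/4\bigr).
\]

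Next, decompose along three events: $A$ (some no-reset generator query reaches the scaffold), the reward-hit event $H_r$ (some reward query returns positive reward), and the final-mass event. This gives
\[
\Pr(\widehat\pi(z_{S,B}\mid x^\star)>1/4)\le \Pr(A)+\Pr(H_r\mid A^c)+\Pr\bigl(\widehat\pi(z_{S,B}\mid x^\star)>1/4\mid A^c,H_r^c\bigr),
\]
which holds because $\Pr(A^c\cap H_r)\le \Pr(H_r\mid A^c)$ and $\Pr(A^c\cap H_r^c\cap\cdot)\le \Pr(\cdot\mid A^c,H_r^c)$.

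Each term is then bounded by an earlier lemma. Lemma~\ref{lem:scaffold-hit} gives $\Pr(A)\le q_g p_+^D$; its proof uses that the first $D$ transitions along $v$ have probability $p_+$ under every $Q_s$, so the bound holds for the mixture and for adaptively chosen queries. Lemma~\ref{lem:reward-hit} gives $\Pr(H_r\mid A^c)\le q_r/N$. Lemma~\ref{lem:final-mass} bounds the last term by $4/(N-q_r)$, using $q_r<N$. Summing these gives the claim.

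The main obstacle is making sure that conditioning on $A^c$ does not break the posterior symmetry. Lemma~\ref{lem:posterior-symmetry} is phrased for safe transcripts, meaning those on which no generator rollout hits $v$ and no reward query has hit. That means the quantities to control are really the joint probabilities $\Pr(\cdot\cap A^c\cap H_r^c)$, not conditionals. I would therefore rewrite the last two terms as joint probabilities over safe final transcripts $\tau$ and apply the uniform-posterior statement transcript by transcript. This makes the Markov bound and the dynamic-programming bound hold pointwise in $\tau$. Summing over $\tau$ then gives joint bounds no larger than the conditional ones, so the final inequality is unchanged.
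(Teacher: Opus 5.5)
Your proposal is correct and is essentially the paper's proof. Both put a uniform prior on $(S,B)$, use Corollary~\ref{cor:average-gap} to reduce success to $\widehat\pi(z_{S,B}\mid x^\star)>1/4$, split into $A$, a reward hit, and the final-mass event bounded via Lemmas~\ref{lem:scaffold-hit}, \ref{lem:reward-hit} and \ref{lem:final-mass}, and then average to extract a fixed $(s,b)$. Your switch to joint probabilities matches the paper's displayed union bound, and it is actually necessary for the middle term: $\Pr(H_r\mid A^c)$ itself can exceed $q_r/N$, for example when an algorithm stops issuing generator queries after a reward hit, which correlates $A^c$ with $H_r$. So bound $\Pr(H_r\cap A^c)$ directly, by running the dynamic-programming argument on joint weights where each interleaved generator query only multiplies by an avoidance probability at most $1$, rather than deducing it from the conditional bound.
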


\begin{proof}
Run the algorithm against a uniformly random hidden state
\[
\Theta=(S,B),
\]
and let $\pi^\star_\Theta:=\pi^\star_{S,B}$ be the corresponding exact optimizer. Define the success event
\[
\mathsf{Succ}:=
\left\{
J_\beta(\widehat\pi)\ge J_\beta(\pi^\star_\Theta)-\frac{\eta\beta}{4}
\right\}.
\]
By Corollary~\ref{cor:average-gap}, on every instance the event $\mathsf{Succ}$ implies
\[
\widehat\pi(z_{S,B}\mid x^\star)>\frac{1}{4}.
\]

Therefore
\[
\mathsf{Succ}
\subseteq
A
\cup
\{\text{some reward query receives positive reward}\}
\cup
\left\{
A^c,\ \text{no reward hit},\ \widehat\pi(z_{S,B}\mid x^\star)>\frac{1}{4}
\right\}.
\]
Taking probabilities and applying the union bound gives
\begin{align*}
\Pr(\mathsf{Succ})
&\le
\Pr(A)
+
\Pr(\text{some reward query receives positive reward},A^c)\\
&\qquad
+
\Pr\!\left(
A^c,\ \text{no reward hit},\ \widehat\pi(z_{S,B}\mid x^\star)>\frac{1}{4}
\right).
\end{align*}

We now bound the three terms. By Lemma~\ref{lem:scaffold-hit},
\[
\Pr(A)\le q_g p_+^D.
\]
By Lemma~\ref{lem:reward-hit},
\[
\Pr(\text{some reward query receives positive reward},A^c)\le \frac{q_r}{N}.
\]
By Lemma~\ref{lem:final-mass},
\[
\Pr\!\left(
A^c,\ \text{no reward hit},\ \widehat\pi(z_{S,B}\mid x^\star)>\frac{1}{4}
\right)
\le
\frac{4}{N-q_r}.
\]
Combining the three bounds yields
\[
\Pr(\mathsf{Succ})\le q_g p_+^D+\frac{q_r}{N}+\frac{4}{N-q_r}.
\]

This is the average success probability of the algorithm under the uniform prior on the hidden state. Therefore there exists at least one fixed instance $(s,b)$ whose success probability is at most the same quantity. That instance is the one claimed in the theorem.
\end{proof}

\begin{corollary}
\label{cor:hard-prompt-embedding}
Let $\rho$ be any prompt distribution with $\rho(x^\star)=\eta>0$. There is a family of KL-regularized outcome-reward post-training instances that is trivial on every prompt $x\neq x^\star$ and satisfies the lower bound from Theorem~\ref{thm:app-bridge-lower}. In particular, any guarantee of average regret below $\eta\beta/4$ must solve the hard prompt $x^\star$.
\end{corollary}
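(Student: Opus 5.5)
The corollary has two parts: an existence claim for the family, and a consequence about average regret. For the existence claim I would reuse the construction of Appendix~\ref{app:kl-bridge} unchanged: hard-prompt base generator $Q_s(\cdot\mid x^\star)$ along $v\,s$, a fixed easy generator $Q_{\mathrm{easy}}(\cdot\mid x)$ for $x\neq x^\star$ shared by all instances, and reward $r_{s,b}$ supported only on $x^\star$. The reward scale is $R=\beta\log(4/q_0)$.

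Triviality off $x^\star$ follows from Lemma~\ref{lem:gibbs-hard-prompt} applied at each easy prompt, or directly. There the reward is zero, so the per-prompt objective is $-\beta\KL(\pi(\cdot\mid x)\,\|\,Q_{\mathrm{easy}}(\cdot\mid x))\le 0$. Equality holds exactly when $\pi(\cdot\mid x)=Q_{\mathrm{easy}}(\cdot\mid x)$. This optimizer is known in advance, is identical across all $(s,b)$, and requires no queries. The lower bound of Theorem~\ref{thm:app-bridge-lower} is then inherited verbatim, because the family is literally the one in that theorem.

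For the ``in particular'' clause, I would make precise what ``solve the hard prompt'' means: place mass greater than $1/4$ on the target $z_{s,b}$ at $x^\star$, which is equivalent to hard-prompt regret at most $\beta/4$. The key step is the decomposition
\[
J_\beta(\pi^\star_{s,b})-J_\beta(\pi)=\eta\bigl[J^{s,b}_{x^\star,\beta}(\pi^\star_{s,b})-J^{s,b}_{x^\star,\beta}(\pi)\bigr]+\sum_{x\neq x^\star}\rho(x)\,\beta\KL\bigl(\pi(\cdot\mid x)\,\|\,Q_{\mathrm{easy}}(\cdot\mid x)\bigr).
\]
Both terms are nonnegative. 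Hence average regret below $\eta\beta/4$ forces the hard-prompt regret below $\beta/4$. By Lemma~\ref{lem:hard-prompt-gap}, contraposed as in Corollary~\ref{cor:average-gap}, this forces $\pi(z_{s,b}\mid x^\star)>1/4$.

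The one point needing care is the case where $\rho$ is not discrete on the easy prompts, so the sum becomes an integral. Here I would only use the inequality $J_\beta(\pi)\le\eta J^{s,b}_{x^\star,\beta}(\pi)$, which is already in the proof of Corollary~\ref{cor:average-gap}, together with the equality $J_\beta(\pi^\star_{s,b})=\eta J^{s,b}_{x^\star,\beta}(\pi^\star_{s,b})$. These two facts avoid any measurability issue, so no step is a genuine obstacle.
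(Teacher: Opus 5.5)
Your proposal is correct and follows the paper's own proof. The paper observes that the family is exactly the one in Theorem~\ref{thm:app-bridge-lower} and that the final clause is Corollary~\ref{cor:average-gap} read contrapositively; your regret decomposition, with the fallback to $J_\beta(\pi)\le\eta J^{s,b}_{x^\star,\beta}(\pi)$ and $J_\beta(\pi^\star_{s,b})=\eta J^{s,b}_{x^\star,\beta}(\pi^\star_{s,b})$, is precisely that corollary's content. One side remark is wrong but not used: target mass above $1/4$ is implied by hard-prompt regret at most $\beta/4$, not equivalent to it. By the bound $J^{s,b}_{x^\star,\beta}(\pi)\le\beta[m\log 4+h_2(m)]$ from the proof of Lemma~\ref{lem:hard-prompt-gap}, any policy with $m=0.3$ still has hard-prompt regret exceeding $\beta/4$, and your chain of implications only needs the valid direction.
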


\begin{proof}
This is exactly the hard family defined at the start of the appendix section. Theorem~\ref{thm:app-bridge-lower} applies directly, and the final statement is just Corollary~\ref{cor:average-gap} restated in prompt-distributed language.
\end{proof}

\begin{corollary}
\label{cor:app-constant-reward-gap}
Fix $K\ge 2$, $\lambda>0$, $\beta>0$, and any constant $q_r\ge 1$. There exists a constant $c=c(K,\lambda)>1$ such that for every sufficiently large horizon $H$ and every prompt distribution $\rho$ with a designated prompt $x^\star$ of mass $\eta>0$, there is a family of KL-regularized outcome-reward post-training instances of horizon $H$ with the following property. With chosen-prefix next-token sampling access, one reward query and
\[
O\!\left(H\log\!\frac{H}{\delta}\right)
\]
generator queries suffice to output the exact optimizer with probability at least $1-\delta$. With only no-reset generator access, any algorithm that uses at most $q_r$ reward queries and succeeds with probability at least $2/3$ in outputting a policy of average regret at most $\eta\beta/4$ must make
\[
\Omega(c^H)
\]
generator queries.
\end{corollary}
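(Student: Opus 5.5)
The proof instantiates the hard family of this section with a split of the horizon $H=D+L+1$ in which both the scaffold length $D$ and the suffix length $L$ grow linearly in $H$. Concretely, I would take $D:=\lceil H/2\rceil$ and $L:=H-1-D$, so that $L\ge 1$ once $H\ge 3$. Fix $\lambda$, $\beta$, a constant $q_r\ge 1$, and any $\rho$ with $\rho(x^\star)=\eta>0$, and set the reward scale to $R=\beta\log(4/q_0)$. The two halves of the statement then follow from Theorems~\ref{thm:app-bridge-upper} and~\ref{thm:app-bridge-lower} applied to this one family. The only remaining work is bookkeeping in $H$.

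\textbf{Upper bound.} Theorem~\ref{thm:app-bridge-upper} returns the exact maximizer of $J_\beta$ with probability at least $1-\delta$. It uses one reward query and at most $H+Lm$ generator queries, where
\[
m=\left\lceil \tfrac{2}{\Delta^2}\log\!\left(\tfrac{L(K-1)}{\delta}\right)\right\rceil .
\]
For fixed $K$ and $\lambda$, $\Delta$ is a positive constant and $L\le H$. Hence $H+Lm=O(H\log(H/\delta))$, with the $\log(K-1)$ term absorbed into the constant.

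\textbf{Lower bound.} Suppose an algorithm uses at most $q_r$ reward queries and at most $q_g$ no-reset generator queries, and succeeds with probability at least $2/3$ on every instance. For $H$ large we have $N=2K^L\to\infty$, so in particular $q_r<N$ and Theorem~\ref{thm:app-bridge-lower} applies. It produces an instance on which the success probability is at most
\[
q_g p_+^D+\frac{q_r}{N}+\frac{4}{N-q_r}.
\]
Because $q_r$ is constant and $N\ge 2K^{L}$ with $L\ge H/2-2$, the last two terms are at most $1/3$ once $H$ exceeds a threshold depending only on $K$ and $q_r$. Comparing with the required success probability $2/3$ then forces
\[
q_g p_+^D\ge \tfrac{2}{3}-\tfrac{1}{3}=\tfrac13,
\]
that is, $q_g\ge \tfrac13 p_+^{-D}\ge \tfrac13 p_+^{-H/2}$. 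Set $c:=p_+^{-1/2}$. We have $p_+<1$ for every $K\ge 2$ and $\lambda>0$, because $e^\lambda<e^\lambda+K-1$. So $c>1$, $c$ depends only on $(K,\lambda)$, and $q_g=\Omega(c^H)$.

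\textbf{Main obstacle.} The only delicate point is making ``sufficiently large $H$'' uniform: the threshold may depend on $K$ and $q_r$ but not on $\rho$ or $\eta$. This holds because none of the three error terms involves $\eta$; the mass $\eta$ enters only through the regret threshold $\eta\beta/4$, which is already built into Theorem~\ref{thm:app-bridge-lower} via Corollary~\ref{cor:average-gap}.
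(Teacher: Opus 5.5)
Your proposal is correct and follows the paper's proof essentially step for step. You split $H=D+L+1$ with both $D,L=\Theta(H)$, read off the $O(H\log(H/\delta))$ upper bound from Theorem~\ref{thm:app-bridge-upper}, and use Theorem~\ref{thm:app-bridge-lower} with the reward-query terms made small to force $q_g p_+^D$ to be bounded below, giving $q_g=\Omega(c^H)$ with $c=p_+^{-1/2}$. Your choices $D=\lceil H/2\rceil$ and threshold $1/3$, in place of the paper's $D=\lfloor (H-1)/2\rfloor$ and $1/6$, are cosmetic, and they make $p_+^{-D}\ge c^H$ hold exactly. One harmless slip: at $H=3$ your split gives $L=0$, so you need $H\ge 4$ for $L\ge 1$, which does not matter since only large $H$ is claimed.
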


\begin{proof}
Fix the horizon $H$, and choose
\[
D:=\left\lfloor \frac{H-1}{2}\right\rfloor,
\qquad
L:=H-D-1.
\]
Then both $D$ and $L$ are $\Theta(H)$, and the hard family above has exactly horizon $H$.

The upper bound follows directly from Theorem~\ref{thm:app-bridge-upper}. Since $L\le H$ and $\Delta>0$ depends only on $(K,\lambda)$,
\[
H+L\left\lceil \frac{2}{\Delta^2}\log\!\left(\frac{L(K-1)}{\delta}\right)\right\rceil
=
O\!\left(H\log\!\frac{H}{\delta}\right).
\]

For the lower bound, recall that $N=2K^L$. Since $q_r$ is fixed and $L=\Theta(H)$, we have
\[
\frac{q_r}{N}+\frac{4}{N-q_r}\to 0
\qquad\text{as } H\to\infty.
\]
So for all sufficiently large $H$,
\[
\frac{q_r}{N}+\frac{4}{N-q_r}\le \frac{1}{6}.
\]
Now suppose an algorithm succeeds with probability at least $2/3$ while using at most $q_r$ reward queries. By Theorem~\ref{thm:app-bridge-lower}, there exists an instance for which
\[
\frac{2}{3}\le q_g p_+^D+\frac{q_r}{N}+\frac{4}{N-q_r}\le q_g p_+^D+\frac{1}{6}.
\]
Hence
\[
q_g p_+^D\ge \frac{1}{2},
\qquad\text{so}\qquad
q_g\ge \frac{1}{2}p_+^{-D}.
\]

Since $D=\Theta(H)$ and $0<p_+<1$, the quantity $p_+^{-D}$ grows exponentially in $H$. For example, with
\[
c:=p_+^{-1/2}>1,
\]
we have
\[
p_+^{-D}=\Omega(c^H).
\]
Therefore
\[
q_g=\Omega(c^H),
\]
which proves the corollary.
\end{proof}

\section{Extended discussion of related work}
\label{app:related}

We expand here on the related work from Section~\ref{sec:related}.

The closest theoretical precursor is the recent line of work that makes coverage central under fixed sampling-style access. Foster, Mhammedi, and Rohatgi study exploration with language models when the learner interacts through a sampling oracle, and show that insufficient coverage of near-optimal responses lower-bounds computational efficiency \cite{FosterMhammediRohatgi2025}. Huang et al.\ analyze inference-time alignment and again find that coverage governs the quality-compute tradeoff of sampling-based alignment methods \cite{HuangEtAl2025BestOfN}. Chen et al.\ elevate this into a broader coverage principle for understanding why pre-training enables post-training and inference-time scaling \cite{ChenEtAl2025Coverage}. Huang et al.\ study sharpening-style self-improvement under a fixed access model and ask when expensive verification can be amortized into a stronger policy \cite{HuangEtAl2024Sharpening}. Our results agree with these papers where the access model overlaps with ours. The difference is the question: we vary the generator interface itself and show that on-policy reachability is the governing quantity of the no-reset class rather than a universal law of post-training.

The second conceptual precursor comes from reinforcement learning and conditional-access models for sequential distributions. Reset-style interaction can change what is learnable or plannable in RL \cite{MhammediFosterRakhlin2024,RohatgiFoster2025}. Conditional queries can likewise make otherwise hard sequential models tractable to learn \cite{MahajanKakadeKrishnamurthyZhang2023,LiuMoitra2024}. The prefix-tree formalism in the present paper is the autoregressive version of the same distinction. Root-start interaction corresponds to ordinary episodic access. Chosen-prefix interaction corresponds to reset-like access. The point of the paper is that, for generators, this distinction should be native rather than imported only by analogy.

There is also a substantial literature on richer intermediate signals that are not generator-side in our sense. Botta et al.\ study verifier-guided generation with prefix-level feasibility signals \cite{BottaLiMehtaAshZhangRisteski2025}. Amani et al.\ study adaptive partial rationale exposure \cite{AmaniLotfiBaldwinBengioFarajtabarAbbeWest2025}. Tuyls et al.\ study hidden-state-based exploration bonuses \cite{TuylsFosterKrishnamurthyAsh2025}. These papers all support the broader intuition that intermediate access matters, but the channel is different. The present paper isolates the stochastic generator itself and asks what can be learned from distributional queries without hidden states or external verifiers.

Finally, the KL bridge theorem is meant to be complementary to feedback-side theory. Mousavi-Hosseini and Erdogdu show that process rewards can avoid outcome-reward barriers under fixed generator access \cite{MousaviHosseiniErdogdu2026}. The theorem in Section~\ref{sec:kl-bridge} holds the feedback model fixed and varies only the generator interface. In that sense it fills a missing quadrant: fixed generator access with varying feedback has one kind of barrier, while fixed feedback with varying generator access has another.

The empirical literature points in the same direction. Yue et al.\ argue that RL with verifiable rewards often sharpens trajectories already present in the base model \cite{YueEtAl2025RLVR}. Karan and Du show that stronger sampling can sometimes match or exceed RL post-training while preserving diversity \cite{KaranDu2025Sampling}. Tan et al.\ identify entropy collapse in the final-layer posterior of reasoning models and recover exploration from intermediate-layer uncertainty \cite{TanEtAl2026LED}. These papers do not prove the taxonomy in this paper, but they reinforce its premise: support, exploration, and intermediate control should be treated as part of the problem rather than as implementation details.


\newpage

\end{document}